\documentclass{ericsson}

\usepackage{amsmath,amsfonts,bm}

\def\eqref#1{equation~\ref{#1}}
\def\Eqref#1{Equation~\ref{#1}}
\def\1{\bm{1}}

\def\eps{{\epsilon}}

\def\rvz{{\mathbf{z}}}

\def\vone{{\bm{1}}}

\def\vtheta{{\bm{\theta}}}
\def\valpha{{\bm{\alpha}}}
\def\vomega{{\bm{\omega}}}

\def\ve{{\bm{e}}}

\def\vk{{\bm{k}}}

\def\vr{{\bm{r}}}

\def\vv{{\bm{v}}}

\def\vy{{\bm{y}}}

\DeclareMathAlphabet{\mathsfit}{\encodingdefault}{\sfdefault}{m}{sl}
\SetMathAlphabet{\mathsfit}{bold}{\encodingdefault}{\sfdefault}{bx}{n}

\def\gA{{\mathcal{A}}}

\def\gI{{\mathcal{I}}}

\def\gL{{\mathcal{L}}}

\def\gS{{\mathcal{S}}}

\def\gV{{\mathcal{V}}}
\def\gW{{\mathcal{W}}}

\def\sN{{\mathbb{N}}}

\def\sR{{\mathbb{R}}}

\newcommand{\Ls}{\mathcal{L}}
\newcommand{\R}{\mathbb{R}}

\DeclareMathOperator*{\argmax}{arg\,max}

\usepackage[english]{babel}

\usepackage{graphicx}
\usepackage{svg}
\usepackage{float}
\usepackage{siunitx}
\usepackage{tabularray}
\usepackage{nicefrac}

\usepackage{algorithm}
\usepackage{algorithmicx}
\usepackage{algpseudocode}

\usepackage{natbib}

\newtheorem{observation}{Observation}
\newtheorem{example}{Example}
\newtheorem{remark}{Remark}

\UseTblrLibrary{booktabs}

\usepackage{listings}
\usepackage{xcolor}

\definecolor{jsonkey}{RGB}{0,0,0} % {255,191,0}      % orange/yellow
\definecolor{jsonstring}{RGB}{0,200,0}     % green
\definecolor{jsonbool}{RGB}{255,105,180}   % pink
\definecolor{jsonnumber}{RGB}{30,144,255}  % blue

\lstdefinelanguage{json}{
  basicstyle=\ttfamily\scriptsize, %\scriptsize, \tiny
  numbers=left, numberstyle=\tiny\color{gray}, stepnumber=1, numbersep=6pt,
  showstringspaces=false, breaklines=true,
  frame=single, rulecolor=\color{black},
  morestring=[b]",
  stringstyle=\color{jsonstring},
  moredelim=**[s][\bfseries\color{jsonkey}]{"}{":},
  literate=
   *{true}{{{\color{jsonbool}true}}}{4}
    {false}{{{\color{jsonbool}false}}}{5}
    {null}{{{\color{jsonbool}null}}}{4}
    {0}{{{\color{jsonnumber}0}}}{1}
    {1}{{{\color{jsonnumber}1}}}{1}
    {2}{{{\color{jsonnumber}2}}}{1}
    {3}{{{\color{jsonnumber}3}}}{1}
    {4}{{{\color{jsonnumber}4}}}{1}
    {5}{{{\color{jsonnumber}5}}}{1}
    {6}{{{\color{jsonnumber}6}}}{1}
    {7}{{{\color{jsonnumber}7}}}{1}
    {8}{{{\color{jsonnumber}8}}}{1}
    {9}{{{\color{jsonnumber}9}}}{1}
}
\usepackage[most]{tcolorbox}

\newtcolorbox{quoteBox}{
  colback=gray!10,
  colframe=gray!50,
  left=6pt,
  right=6pt,
  top=6pt,
  bottom=6pt,
  sharp corners,
  borderline west={2pt}{0pt}{gray!70}
}

\usepackage[nolist, printonlyused, nohyperlinks]{acronym}

\title{From Intents to Actions: Agentic AI in Autonomous Networks}
\author[1]{Burak Demirel}
\author[1]{Pablo Soldati}
\author[1]{Yu Wang}
\affiliation[1]{Ericsson AB, Kista, Sweden}
\date{\today}
\correspondence{\email{burak.demirel@ericsson.com}}

\begin{document}

% -------------------------
% Abstract
% -------------------------
\begin{abstract}
Telecommunication networks are increasingly expected to operate autonomously while supporting heterogeneous services with diverse and often conflicting \emph{intents}—that is, performance objectives, constraints, and requirements specific to each service. However, transforming high-level intents—such as ultra-low latency, high throughput, or energy efficiency—into concrete control actions (i.e., low-level actuator commands) remains beyond the capability of existing heuristic approaches. This work introduces an Agentic AI system for intent-driven autonomous networks, structured around three specialized agents. A supervisory \emph{interpreter agent}, powered by  language models, performs both lexical parsing of intents into executable optimization templates and cognitive refinement based on feedback, constraint feasibility, and evolving network conditions. An \emph{optimizer agent} converts these templates into tractable optimization problems, analyzes trade-offs, and derives preferences across objectives. Lastly, a preference-driven \emph{controller agent}, based on multi-objective reinforcement learning, leverages these preferences to operate near the Pareto frontier of network performance that best satisfies the original intent. Collectively, these agents enable networks to autonomously interpret, reason over, adapt to, and act upon diverse intents and network conditions in a scalable manner.
\end{abstract}

\maketitle

% -------------------------
% Acronyms — define only once
% -------------------------
%\chapter*{Acronyms}
%\renewcommand{\titulonome}{Acronyms}%
%\renewcommand{\prepbynome}{UFC.33 Team}%

\acrodefplural{MDP}[MDPs]{Markov decision processes}
\acrodefplural{RTG}[RTGs]{returns-to-go}

%\begin{singlespace}
\begin{acronym}%[LTE-Advanced]%\addtolength{\itemsep}{-0.5\baselineskip}
  \acro{AI}{artificial intelligence}
  \acro{ACK}{positive acknowledgment}
  \acro{ARQ}{automatic repeat request}
  \acro{CAN}{cognitive autonomous networks}
  \acro{CCS}{convex coverage set}
  \acro{BE}{best effort}
  \acro{BC}{behavioral cloning}
  \acro{BCQ}{batch-constrained deep Q-learning}
  \acro{BO}{Bayesian optimization}
  \acro{CCTR}{Channel-Conditioned Target Return}
  \acro{CEI}{constrained expected improvement}
  \acro{CNN}{convolutional neural network}
  \acro{CQL}{conservative Q-learning}
  \acro{CT}{continuous time}
  \acro{CV}{computer vision}
  \acro{DAVG}{discounted average}
  \acro{DDQN}{double DQN}
  \acro{D-EQL}{distributed envelope Q-learning}
  \acro{DLLA}{downlink link adaptation}
  \acro{DP}{dynamic programming}
  \acro{DQN}{deep Q-network}
  \acro{DSCP}{differentiated services code point}
  \acro{DST}{deep sea treasure}
  \acro{DT}{decision transformer}
  \acro{eMBB}{enhanced mobile broadband}
  \acro{EF}{expedited forwarding}
  \acro{EI}{expected improvement}
  \acro{EQL}{envelope Q-learning}
  \acro{ES}{entropy search}
  \acro{FB}{full buffer}
  \acro{FTN}{fruit tree navigation}
  \acro{IBN}{intent-based networking}
  \acro{ICL}{in-context learning}
  \acro{ILLA}{inner-loop link adaptation}
  \acro{IMF}{intent management function}
  \acro{GBR}{guaranteed bit rate}
  \acro{gNB}{next generation NodeB}
  \acro{gNB-CU}{gNB centralized unit}
  \acro{gNB-DU}{gNB distributed unit}
  \acro{GP}{Gaussian process}
  \acro{GPI}{generalized policy iteration}
  \acro{KPI}{key performance indicator}
  \acro{LA}{link adaptation}
  \acro{LLM}{large language model}
  \acro{LSTM}{long short-term memory}
  \acro{LT}{learnable time}
  \acro{MAB}{multi-armed bandits}
  \acro{MC}{Monte Carlo}
  \acro{MDP}{Markov decision process}
  \acro{ML}{machine learning}
  \acro{MLP}{multi-layer perceptron}
  \acro{mMIMO}{massive multiple input multiple output}
  \acro{mMTC}{massive machine-type communications}
  \acro{MNO}{mobile network operator}
  \acro{MOMDP}{multi-objective Markov decision process}
  \acro{MORL}{multi-objective reinforcement learning}
  \acro{MORRM}{multi-objective radio resource management}
  \acro{NACK}{negative acknowledgment}
  \acro{NADAM}{Nesterov-accelerated adaptive moment estimation}
  \acro{NLP}{natural language processing}
  \acro{non-GBR}{non-guaranteed bit rate}
  \acro{OLLA}{outer-loop link adaptation}
  \acro{O-RAN}{Open RAN}
  \acro{OTM}{optimization template model}
  \acro{PDF}{probability density function}
  \acro{PE}{positional encoding}
  \acro{PI}{probability of improvement}
  \acro{PEUMO}{pareto efficient uniform model}
  \acro{PHY}{physical layer}
  \acro{QoE}{quality of experience}
  \acro{QoS}{quality of service}
  \acro{5QI}{5G QoS identifier}
  \acro{PDB}{packet delay budget}
  \acro{RBF}{radial basis function}
  \acro{RE}{resource element}
  \acro{RL}{reinforcement learning}
  \acro{RLC}{radio link control}
  \acro{RNN}{recurrent neural network}
  \acro{RPC}{remote procedure call}
  \acro{RRM}{radio resource management}
  \acro{RTGs}{returns-to-go}
  \acro{RTG}{return-to-go}
  \acro{RTT}{round-trip time}
  \acro{RvS}{reinforcement learning via supervised learning}
  \acro{SACo}{state-action coverage}
  \acro{SADCo}{state-action density coverage}
  \acro{SF}{SON functions}
  \acro{SON}{self-organizing networks}
  \acro{SLM}{small language model}
  \acro{SE}{spectral efficiency}
  \acro{TBS}{transport block size}
  \acro{TD}{temporal difference}
  \acro{TQ}{relative trajectory quality}
  \acro{TR}{trust region}
  \acro{TS}{technical specification}
  \acro{UCB}{upper confidence bound}
  \acro{UDP}{user datagram protocol}
  \acro{URLLC}{ultra-reliable low-latency communications}
  \acro{VAE}{variational auto-encoder}
%%%%%%%%%%%%%
  \acro{2G}{Second Generation}
  \acro{3G}{3$^\text{rd}$~Generation}
  \acro{3GPP}{3$^\text{rd}$~Generation Partnership Project}
  \acro{4G}{4$^\text{th}$~Generation}
  \acro{5G}{5$^\text{th}$~Generation}
  \acro{AA}{Antenna Array}
  \acro{AC}{Admission Control}
  \acro{AD}{Attack-Decay}
  \acro{ADSL}{Asymmetric Digital Subscriber Line}
	\acro{AHW}{Alternate Hop-and-Wait}
  \acro{AMC}{Adaptive Modulation and Coding}
  \acro{AoA}{angle of arrival}
	\acro{AP}{Access Point}
  \acro{APA}{Adaptive Power Allocation}
  \acro{AR}{autoregressive}
  \acro{ARMA}{Autoregressive Moving Average}
  \acro{ATES}{Adaptive Throughput-based Efficiency-Satisfaction Trade-Off}
  \acro{AWGN}{additive white Gaussian noise}
  \acro{BB}{Branch and Bound}
  \acro{BD}{Block Diagonalization}
  \acro{BER}{bit error rate}
  \acro{BF}{Best Fit}
  \acro{BLER}{block error rate}
  \acro{BPC}{Binary power control}
  \acro{BPSK}{Binary Phase-Shift Keying}
  \acro{BPA}{Best \ac{PDPR} Algorithm}
  \acro{BRA}{Balanced Random Allocation}
  \acro{BCRB}{Bayesian Cram\'{e}r-Rao Bound}
  \acro{BS}{base station}
  \acro{CAP}{Combinatorial Allocation Problem}
  \acro{CAPEX}{Capital Expenditure}
  \acro{CBF}{Coordinated Beamforming}
  \acro{CBR}{Constant Bit Rate}
  \acro{CBS}{Class Based Scheduling}
  \acro{CC}{Congestion Control}
  \acro{CDF}{cumulative distribution function}
  \acro{CDMA}{Code-Division Multiple Access}
  \acro{CL}{Closed Loop}
  \acro{CLPC}{Closed Loop Power Control}
  \acro{CNR}{Channel-to-Noise Ratio}
  \acro{CPA}{Cellular Protection Algorithm}
  \acro{CPICH}{Common Pilot Channel}
  \acro{CoMP}{Coordinated Multi-Point}
  \acro{CQI}{channel quality indicator}
  \acro{CRB}{Cram\'{e}r-Rao Bound}
  \acro{CRM}{Constrained Rate Maximization}
	\acro{CRN}{Cognitive Radio Network}
  \acro{CS}{Coordinated Scheduling}
  \acro{CSI}{channel state information}
  \acro{CSIR}{channel state information at the receiver}
  \acro{CSIT}{channel state information at the transmitter}
  \acro{CUE}{cellular user equipment}
  \acro{D2D}{device-to-device}
  \acro{DCA}{Dynamic Channel Allocation}
  \acro{DE}{Differential Evolution}
  \acro{DFT}{Discrete Fourier Transform}
%  \acro{DIST}{Distance-based Grouping}
  \acro{DIST}{Distance}
  \acro{DL}{downlink}
  \acro{DMA}{Double Moving Average}
	\acro{DMRS}{demodulation reference signal}
  \acro{D2DM}{D2D Mode}
  \acro{DMS}{D2D Mode Selection}
  \acro{DPC}{Dirty Paper Coding}
  \acro{DRA}{Dynamic Resource Assignment}
  \acro{DSA}{Dynamic Spectrum Access}
  \acro{DSM}{Delay-based Satisfaction Maximization}
  \acro{ECC}{Electronic Communications Committee}
  \acro{EFLC}{Error Feedback Based Load Control}
  % \acro{EI}{Efficiency Indicator}
  \acro{eNB}{Evolved Node B}
  \acro{EPA}{Equal Power Allocation}
  \acro{EPC}{Evolved Packet Core}
  \acro{EPS}{Evolved Packet System}
  \acro{ESPRIT}{estimation of signal parameters via rotational invariance}
  \acro{E-UTRAN}{Evolved Universal Terrestrial Radio Access Network}
  %\acro{ES}{Exhaustive Search}
  \acro{FDD}{frequency division duplexing}
  \acro{FDM}{Frequency Division Multiplexing}
  \acro{FER}{Frame Erasure Rate}
  \acro{FF}{Fast Fading}
  \acro{FIM}{Fisher information matrix}
  \acro{FSB}{Fixed Switched Beamforming}
  \acro{FST}{Fixed SNR Target}
  \acro{FTP}{File Transfer Protocol}
  \acro{GA}{Genetic Algorithm}
  \acro{GLR}{Gain to Leakage Ratio}
  \acro{GOS}{Generated Orthogonal Sequence}
  \acro{GPL}{GNU General Public License}
  \acro{GRP}{Grouping}
  \acro{HARQ}{hybrid automatic repeat request}
  \acro{HMS}{Harmonic Mode Selection}
  \acro{HOL}{Head Of Line}
  \acro{HSDPA}{High-Speed Downlink Packet Access}
  \acro{HSPA}{High Speed Packet Access}
  \acro{HTTP}{HyperText Transfer Protocol}
  \acro{ICMP}{Internet Control Message Protocol}
  \acro{ICI}{Intercell Interference}
  \acro{ID}{Identification}
  \acro{ISAC}{integrated sensing and communication}
  \acro{IEEE}{Institute of Electrical and Electronics Engineers}
  \acro{IETF}{Internet Engineering Task Force}
  \acro{ILP}{Integer Linear Program}
  \acro{JRAPAP}{Joint RB Assignment and Power Allocation Problem}
  \acro{UID}{Unique Identification}
  \acro{HPC}{high-performance computing}
  \acro{IID}{Independent and Identically Distributed}
  \acro{IIR}{Infinite Impulse Response}
  \acro{ILP}{Integer Linear Problem}
  \acro{IMT}{International Mobile Telecommunications}
  \acro{INV}{Inverted Norm-based Grouping}
  \acro{IoT}{Internet of Things}
  \acro{IP}{Internet Protocol}
  \acro{IPv6}{Internet Protocol Version 6}
  \acro{ISD}{Inter-Site Distance}
  \acro{ISI}{Inter Symbol Interference}
  \acro{ITU}{International Telecommunication Union}
  \acro{JOAS}{Joint Opportunistic Assignment and Scheduling}
  \acro{JOS}{Joint Opportunistic Scheduling}
  \acro{JP}{Joint Processing}
  \acro{JS}{Jump-Stay}
  \acro{KKT}{Karush-Kuhn-Tucker}
  \acro{L3}{Layer-3}
  \acro{LAC}{Link Admission Control}
  \acro{LC}{Load Control}
  \acro{LOS}{Line of Sight}
  \acro{LP}{Linear Programming}
  \acro{LS}{least squares}
  \acro{LSF}{load scharing facility}
  \acro{LTE}{Long Term Evolution}
  \acro{LTE-A}{LTE-Advanced}
  \acro{LTE-Advanced}{Long Term Evolution Advanced}
  \acro{M2M}{Machine-to-Machine}
  \acro{MAC}{Medium Access Control}
  \acro{MANET}{Mobile Ad hoc Network}
  %\acro{MC}{Modular Clock}
  \acro{MCS}{modulation and coding scheme}
  \acro{MDB}{Measured Delay Based}
  \acro{MDI}{Minimum D2D Interference}
  \acro{MF}{Matched Filter}
  \acro{MG}{Maximum Gain}
  \acro{MH}{Multi-Hop}
  \acro{MIMO}{multiple input multiple output}
  \acro{MINLP}{Mixed Integer Nonlinear Programming}
  \acro{MIP}{Mixed Integer Programming}
  \acro{MISO}{Multiple Input Single Output}
  \acro{MLE}{maximum likelihood estimator}
  \acro{MLWDF}{Modified Largest Weighted Delay First}
  \acro{MME}{Mobility Management Entity}
  \acro{MMSE}{minimum mean squared error}
  \acro{MOS}{Mean Opinion Score}
  \acro{MPF}{Multicarrier Proportional Fair}
  \acro{MRA}{Maximum Rate Allocation}
  \acro{MR}{Maximum Rate}
  \acro{MRC}{Maximum Ratio Combining}
  \acro{MRT}{Maximum Ratio Transmission}
  \acro{MRUS}{Maximum Rate with User Satisfaction}
  \acro{MS}{mobile station}
  \acro{MSE}{mean squared error}
  \acro{MSI}{Multi-Stream Interference}
  \acro{MTC}{Machine-Type Communication}
  \acro{MTSI}{Multimedia Telephony Services over IMS}
  \acro{MTSM}{Modified Throughput-based Satisfaction Maximization}
  \acro{MU-MIMO}{multiuser multiple input multiple output}
  \acro{MU}{multi-user}
  \acro{MUSIC}{multiple signal classification}
  \acro{NAS}{Non-Access Stratum}
  \acro{NB}{Node B}
  \acro{NE}{Nash equilibrium}
  \acro{NCL}{Neighbor Cell List}
  \acro{NLOS}{Non-Line of Sight}
  \acro{NMSE}{Normalized Mean Square Error}
  \acro{NORM}{Normalized Projection-based Grouping}
  \acro{NP}{Non-Polynomial Time}
  \acro{NR}{New Radio}
  \acro{NRT}{Non-Real Time}
  \acro{NSPS}{National Security and Public Safety Services}
  \acro{O2I}{Outdoor to Indoor}
  \acro{OFDMA}{orthogonal frequency division multiple access}
  \acro{OFDM}{orthogonal frequency division multiplexing}
  \acro{OFPC}{Open Loop with Fractional Path Loss Compensation}
	\acro{O2I}{Outdoor-to-Indoor}
  \acro{OL}{Open Loop}
  \acro{OLPC}{Open-Loop Power Control}
  \acro{OL-PC}{Open-Loop Power Control}
  \acro{OPEX}{Operational Expenditure}
  \acro{ORB}{Orthogonal Random Beamforming}
  \acro{JO-PF}{Joint Opportunistic Proportional Fair}
  \acro{OSI}{Open Systems Interconnection}
  \acro{PAIR}{D2D Pair Gain-based Grouping}
  \acro{PAPR}{Peak-to-Average Power Ratio}
  \acro{P2P}{Peer-to-Peer}
  \acro{PC}{Power Control}
  \acro{PCI}{Physical Cell ID}
  \acro{PDPR}{pilot-to-data power ratio}
  \acro{PER}{packet error rate}
  \acro{PF}{Proportional Fair}
  \acro{P-GW}{Packet Data Network Gateway}
  \acro{PL}{Pathloss}
  \acro{PPR}{pilot power ratio}
  \acro{PRB}{physical resource block}
  \acro{PROJ}{Projection-based Grouping}
  \acro{ProSe}{Proximity Services}
  \acro{PS}{Packet Scheduling}
  \acro{PSAM}{pilot symbol assisted modulation}
  \acro{PSO}{Particle Swarm Optimization}
  \acro{PZF}{Projected Zero-Forcing}
  \acro{QAM}{Quadrature Amplitude Modulation}
  \acro{QPSK}{Quadri-Phase Shift Keying}
  \acro{RAISES}{Reallocation-based Assignment for Improved Spectral Efficiency and Satisfaction}
  \acro{RAN}{radio access network}
  \acro{RAT}{Radio Access Technology}
  \acro{RATE}{Rate-based}
  \acro{RB}{resource block}
  \acro{RBG}{Resource block broup}
  \acro{REF}{Reference Grouping}
  \acro{RM}{Rate Maximization}
  \acro{RNC}{Radio Network Controller}
  \acro{RND}{Random Grouping}
  \acro{RRA}{Radio Resource Allocation}
  \acro{RRM}{radio resource management}
  \acro{RSCP}{Received Signal Code Power}
  \acro{RSRP}{Reference Signal Receive Power}
  \acro{RSRQ}{Reference Signal Receive Quality}
  \acro{RR}{Round Robin}
  \acro{RRC}{Radio Resource Control}
  \acro{RSSI}{Received Signal Strength Indicator}
  \acro{RT}{Real Time}
  \acro{RU}{Resource Unit}
  \acro{RUNE}{RUdimentary Network Emulator}
  \acro{RV}{Random Variable}
  \acro{SAC}{Session Admission Control}
  \acro{SCM}{Spatial Channel Model}
  \acro{SC-FDMA}{Single Carrier - Frequency Division Multiple Access}
  \acro{SD}{Soft Dropping}
  \acro{S-D}{Source-Destination}
  \acro{SDPC}{Soft Dropping Power Control}
  \acro{SDMA}{Space-Division Multiple Access}
  \acro{SER}{Symbol Error Rate}
  \acro{SES}{Simple Exponential Smoothing}
  \acro{S-GW}{Serving Gateway}
  \acro{SINR}{signal-to-interference-plus-noise ratio}
  \acro{SI}{Satisfaction Indicator}
  \acro{SIP}{Session Initiation Protocol}
  \acro{SISO}{single input single output}
  \acro{SIMO}{Single Input Multiple Output}
  \acro{SIR}{signal-to-interference ratio}
  \acro{SLNR}{Signal-to-Leakage-plus-Noise Ratio}
  \acro{SMA}{Simple Moving Average}
  \acro{SNR}{signal-to-noise ratio}
  \acro{SORA}{Satisfaction Oriented Resource Allocation}
  \acro{SORA-NRT}{Satisfaction-Oriented Resource Allocation for Non-Real Time Services}
  \acro{SORA-RT}{Satisfaction-Oriented Resource Allocation for Real Time Services}
  \acro{SPF}{Single-Carrier Proportional Fair}
  \acro{SRA}{Sequential Removal Algorithm}
  \acro{SRS}{Sounding Reference Signal}
  \acro{SSB}{synchronization signal block}
  \acro{SU-MIMO}{single-user multiple input multiple output}
  \acro{SU}{Single-User}
  \acro{SVD}{Singular Value Decomposition}
  \acro{TCP}{transmission control protocol}
  \acro{TDD}{time division duplexing}
  \acro{TDMA}{Time Division Multiple Access}
  \acro{TETRA}{Terrestrial Trunked Radio}
  \acro{TP}{Transmit Power}
  \acro{TPC}{Transmit Power Control}
  \acro{TTI}{transmission time interval}
  \acro{TTR}{Time-To-Rendezvous}
  \acro{TSM}{Throughput-based Satisfaction Maximization}
  \acro{TU}{Typical Urban}
  \acro{UE}{user equipment}
  \acro{UEPS}{Urgency and Efficiency-based Packet Scheduling}
  \acro{UL}{uplink}
  \acro{UMTS}{Universal Mobile Telecommunications System}
  \acro{URI}{Uniform Resource Identifier}
  \acro{URM}{Unconstrained Rate Maximization}
  \acro{UT}{user terminal}
  \acro{VR}{Virtual Resource}
  \acro{VoIP}{Voice over IP}
  \acro{WAN}{Wireless Access Network}
  \acro{WCDMA}{Wideband Code Division Multiple Access}
  \acro{WF}{Water-filling}
  \acro{WiMAX}{Worldwide Interoperability for Microwave Access}
  \acro{WINNER}{Wireless World Initiative New Radio}
  \acro{WLAN}{Wireless Local Area Network}
  \acro{WMPF}{Weighted Multicarrier Proportional Fair}
  \acro{WPF}{Weighted Proportional Fair}
  \acro{WSN}{Wireless Sensor Network}
  \acro{WWW}{World Wide Web}
  \acro{XIXO}{(Single or Multiple) Input (Single or Multiple) Output}
  \acro{ZF}{zero-forcing}
  \acro{ZMCSCG}{Zero Mean Circularly Symmetric Complex Gaussian}
\end{acronym}
%\end{singlespace}

\acresetall

% -------------------------
% Content
% -------------------------
\renewcommand{\contentsname}{Contents}
\tableofcontents
\addtocontents{toc}{\protect\setcounter{tocdepth}{3}}  

% -------------------------
% Paper Sections
% -------------------------
% ===================================================================
% SECTION 1: Introduction
% ===================================================================
% \clearpage
\section{Introduction}
\label{sec:intro}

\Acp{RAN} are large-scale, real-time distributed systems that must operate reliably in highly dynamic and uncertain radio environments, while serving a broad range of connectivity services and applications. Currently, these systems rely heavily on manual intervention for configuration optimization and functional fine-tuning. This dependence on human expertise limits scalability, slows adaptation to environmental changes, and increases operational costs.

The next generation of communication networks is expected to address these limitations by becoming increasingly autonomous. This evolution—already underway in 5G-Advanced through standardized intent management frameworks, e.g.,~\cite{3GPP28312} and \cite{TMF:21}—envisions self-configuring, self-optimizing, and self-healing systems guided by high-level \emph{network intents}. Intents specify performance objectives, requirements, and constraints for a connectivity service or management workflow~\cite{3GPP28312}, allowing operators to express \emph{what} the network should achieve rather than \emph{how}. For example, an operator may specify a goal as \emph{“maximize user coverage while minimizing energy consumption,”} leaving the network to autonomously determine the appropriate actions, such as antenna tilt adjustments to improve coverage or carrier deactivation to save energy. In this context, intents act as directives, while the network abstracts away the implementation details, much like a compiler translates high-level code into machine-executable instructions.

Converting intents into network actions is fundamentally a problem of planning and reasoning across multiple abstraction layers—from natural-language specifications to optimization formulations, and ultimately to control policies executed at the \ac{RAN}. These requirements exceed the capabilities of current heuristic and rule-based approaches. Bridging this gap calls for a new class of \ac{AI} systems that move beyond perception and prediction, linking abstract objectives with dynamic decision-making through iterative reasoning and planning.

Agentic \ac{AI} has recently emerged as a promising paradigm for building autonomous, goal-driven systems capable of interpreting objectives, planning multi-step actions, and adapting to dynamic environments with minimal human oversight. Unlike traditional \ac{AI} approaches based on fixed heuristics or monolithic models, Agentic \ac{AI} structures intelligence into specialized agents that interact and cooperate through well-defined workflows~\citep{Sapkota:25}. Central to this paradigm are large-scale generative models—particularly \acp{LLM}—which enable agents to understand and generate natural language, decompose goals, generalize across tasks, invoke specialized tools, and reason in open-ended contexts~\citep{liu:24apigen}. As such, Agentic \ac{AI} offers a compelling architectural foundation for autonomous and intent-driven network management and optimization.

This paper takes a step toward realizing this vision by introducing an Agentic \ac{AI} system comprising an \textbf{interpreter}, an \textbf{optimizer}, and a \textbf{controller}. Our contributions are:

\begin{enumerate}
    \item \textbf{Cognitive intent processing.} The interpreter is a supervisory cognitive agent with two core functions: converting high-level intents into structured templates and recursively refining them on a slow timescale by reasoning over network observations and feedback on intent fulfillment. To meet \ac{RAN} compute and memory constraints, we adopt a dual-SLM architecture that separates intent translation and in-context reasoning among two \acp{SLM}.
    \item \textbf{Preference optimization.} The optimizer agent transforms \acp{OTM} into constrained optimization problems over a preference space, performs preference planning via Bayesian optimization to dynamically adapt preferences to network conditions, and steers the controller policy to satisfy the service intents expressed by the \ac{OTM}.
    \item \textbf{Multi-objective control.} The controller leverages \ac{MORL} to realize adaptive policies that operate near the Pareto front of network performance. A central technical contribution is \ac{D-EQL}, a scalable distributed variant of \ac{EQL} \cite{YSN:19} that: (i) decouples learner–actors with sharded prioritized replay for high-throughput training; (ii) distributes the exploration of the preference simplex across actors while learning a single preference-conditioned network; (iii) uses envelope updates with vector TD targets plus a cosine-stability loss; and (iv) refreshes priorities with hindsight preference relabeling. Together, these extensions improve scalability, accuracy and exploration over established \ac{MORL} art \cite{YSN:19, Basaklar:23}.
    \item \textbf{Proof of concept.} We showcase the agentic system through an intent-aware \ac{RRM} use case combining interpreter and optimizer agents with a novel MORL-based \ac{LA}, and adapt its policy on the fly to diverse connectivity service goals. Our approach outperforms traditional \ac{RL}—which cannot adapt a single policy across goals—and exceeds the state-of-the-art \ac{LA} baseline of 5G/5G-A systems.
\end{enumerate}

Results from high-fidelity system-level simulations of a 5G-compliant network suggest that Agentic \ac{AI} can transform high-level human intents into self-optimizing control mechanisms for next-generation networks, thereby paving the way toward scalable network autonomy.

% ===================================================================
% SECTION 2: Related Work
% ===================================================================
\section{Related Work}
\label{sec:related_work}

\paragraph{Agentic AI:} Agentic \ac{AI} is an emerging paradigm that structures intelligence as a modular network of specialized agents collaborating to achieve complex, high-level goals~\citep{Hughes:25}. Recent surveys highlight recurring design patterns and challenges related to reliability and evaluation~\citep{Guo:24LLMMAS,Li:24LLMMAS}. A central mechanism is \emph{goal decomposition}, whereby broad objectives are divided into subtasks handled by agents with distinct functions. Prior work has demonstrated that agents can integrate reasoning and action in recursive loops~\citep{Yao:23ReAct}, improve performance through reflective memory~\citep{Shinn:23Reflexion}, and operate collectively via structured communication~\citep{Wu:23AutoGen}. To coordinate distributed intelligence, orchestration layers or meta-agents assign roles, manage life cycles and task dependencies, and resolve conflicts using centralized or decentralized mechanisms~\citep{Qian:24}. Furthermore, persistent goals and memory enable adaptation over long time horizons~\citep{Wang:24Voyager,Agashe:25}. Domain-specific systems, such as MAGIS~\citep{Tao:24MAGIS}, illustrate how these principles scale to collaborative workflows.

\paragraph{Bayesian optimization:} \cite{ZhX:20} reviews the evolution of expected improvement (EI) as an acquisition function for surrogate-based optimization, detailing its extensions to parallel, multi-objective, constrained, noisy, multi-fidelity, and high-dimensional settings, analyzing their theoretical properties, and highlighting future research directions. \cite{ZYQ+:24} shows that the performance of high-dimensional Bayesian optimization is strongly limited by poor random initialization of acquisition function maximizers and proposes AIBO, a simple framework that uses past evaluations and heuristic search to generate better starting points, significantly boosting optimization efficiency.

\paragraph{Multi-objective reinforcement learning:} \ac{MORL} addresses control problems in which optimality is defined by a Pareto front of policies, each capturing different trade-offs among multiple objectives.

Early approaches to multi-objective optimization~\citep{Kim:05, Konak:06, Yoon:09} reduced the problem to scalar optimization—typically via utility functions with fixed weights across objectives—followed by standard \ac{RL}. These methods are tied to a single preference setting and cannot adapt when goals or constraints change~\citep{Liu:14}, thereby necessitating retraining. To improve generality, subsequent work sought to approximate the entire Pareto front by learning multiple optimal policies over the preference space~\citep{Sriraam:05, Barrett:08, Mossalam:16}. However, training a separate policy for each preference combination quickly becomes computationally infeasible in large domains.

A more scalable approach is to learn a single universal policy conditioned on preferences~\citep{YSN:19, Xu:20, Abdolmaleki:20}, enabling adaptation across tasks without retraining. For instance, \cite{YSN:19} proposed envelope Q-learning, which generalizes the Bellman equation to optimize the convex envelope of multi-objective Q-values under linear preferences using deep networks. Extensions such as those in~\cite{Basaklar:23} introduced parallelization to improve sample efficiency and Pareto approximation. Nonetheless, efficiently exploring the preference space and learning universal \ac{MORL} policies remain open challenges~\citep{Hayes:22}.

\paragraph{Agentic AI in Communication Systems:} Intent-based management is already part of modern 5G-Advanced systems~\citep{3GPP28312}, and its extension toward 6G is strongly supported in current standardization efforts~\citep{SA5-108:25}. Concurrently, academic and industrial interest in Agentic AI is rapidly growing, positioning it as a key enabler of next-generation autonomous networks, particularly for intent-driven operations~\citep{bimo2025, ZTE:25, NEC:25}. Recent work on agent-based and LLM-guided control frameworks for network optimization and service management~\citep{QAJ+:25, RSTN:25, bimo2025} highlights a shift toward systems capable of reasoning, adaptation, and collaboration. This trajectory is reflected across 3GPP, Open RAN, and TM Forum. For example, 3GPP TR 22.870~\cite{3GPP22.870} identifies AI-agent–enabled service coordination, LLM-assisted interactions, and agent-supported UE–network cooperation as 6G use cases, while~\cite{IETF:25} defines protocols for AI-agent communication. Furthermore, the 3GPP SA5 workgroup has identified \emph{intent-driven agentic autonomous management} as a priority areas for 6G~\cite{SA5workareas:25, SA5SI:25} while SA2 is examining agentic mechanisms for the 6G core network~\cite{SA2huawei:25}. Together, these developments indicate that agentic and intent-based paradigms are increasingly viewed as foundational elements of future 6G architectures.

\paragraph{Differentiation from Prior Agentic AI Work:} Existing Agentic AI systems have largely been applied to reasoning, planning, and tool use, where control loops operate over long timescales in relatively stable environments. By contrast, we integrate agentic AI into the fast control loops of \ac{RRM}, where sub-millisecond decisions must adapt to fading channels, mobility, and heterogeneous service requirements. To our knowledge, this is among the first applications of Agentic AI in highly dynamic, stochastic environments, extending its reach to performance-critical autonomous networks.

We demonstrate the workflow with an end-to-end, cognitively guided intent-aware \ac{RRM} design for supporting different connectivity services, where control policies adapted by reasoning over individual service goals and network observations are then executed in time-varying, frequency-selective environments to meet the goals. Our results show superior performance compared to traditional \ac{RL} and the state-of-the-art \ac{LA} algorithm adopted in 5G/5G-A systems.
% ===================================================================
% SECTION 3: Agentic AI System for RAN Control
% ===================================================================

\section{Agentic AI System for RAN Control}
\label{sec:method}

At its core, the proposed \emph{Agentic AI system} comprises three specialized agents—interpreter, optimizer, and controller—whose interactions form an \emph{agentic workflow} consisting of two loops: an \emph{intent management loop}, executed by the interpreter–optimizer pair, and an \emph{intent fulfillment loop}, executed by the optimizer–controller pair. Each loop operates on a distinct timescale, forming a two-timescale control architecture analogous to Kahneman’s dual-process theory~\citep{Kahneman:11book}, with a slower, deliberative outer System~2 and a faster, reactive inner System~1.

\begin{figure}[!ht]
    \centering
    \includegraphics[width=0.95\linewidth]{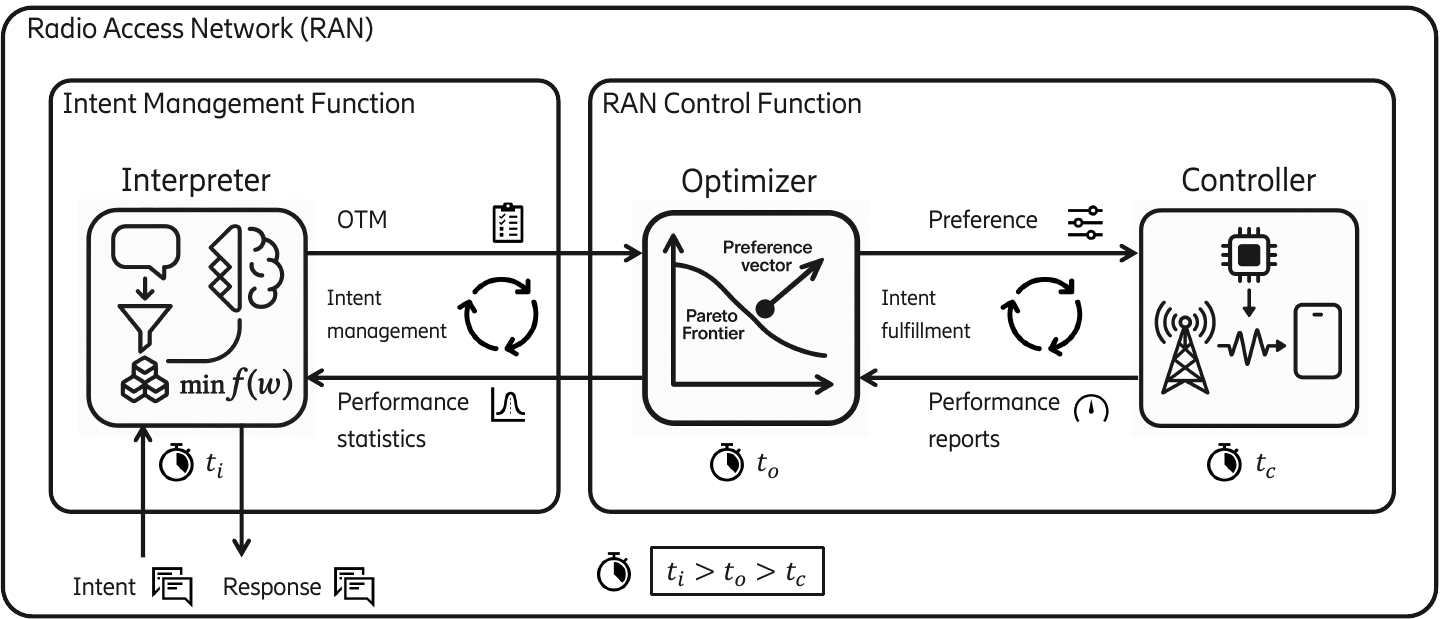}
    \caption{Agentic AI system for intent and resource management in autonomous networks.}
    \label{fig1:Agentic_workflow}
\end{figure}

The \emph{interpreter} is a supervisory cognitive agent that converts high-level intents into structured templates and adaptively refines them on a slow timescale using network states and fulfillment feedback. The \emph{optimizer} recursively plans and adjusts the downstream controller configurations to satisfy the intent, aggregating controller feedback into slower-timescale statistical summaries returned to the interpreter. The \emph{controller} executes real-time decision-making, collects observations, and provides periodic performance reports to the optimizer.

This triadic workflow provides a blueprint for a broader Agentic AI system for autonomous management and optimization of communication networks. Its realization, however, requires a twofold extension. \emph{Horizontally}, the interpreter may coordinate with multiple optimizer–controller pairs supporting different \ac{RAN} functions within a single architectural layer. \emph{Vertically}, the workflow can be embedded across different layers of the \ac{RAN} protocol stack, whose operational timescales range from slow (for network management at higher layers) to very fast (for \ac{RRM} at lower layers).

\subsection{Timescales Separation}

The workflow separates responsibilities across three timescales. The controller handles real-time decisions and thereby establishes the system’s reference timescale $t_{c}$. Because this agent replaces an existing \ac{RAN} control function, it inherits that function’s native latency budget, which may range from sub-millisecond operation for \ac{RRM} functions (e.g., link adaptation) to minutes or hours for network optimization tasks (e.g., cell shaping). The optimizer adjusts the controller’s policy at a deliberately slower timescale $t_{o}$, spanning hundreds of milliseconds to seconds for fast \ac{RRM} functions and up to hours for \ac{RAN} management functions, ensuring that its decisions do not interfere with the primary control loop. The interpreter operates on the slowest supervisory cadence $t_{i}$, which spans seconds to minutes for \ac{RRM} supervision and up to hours for \ac{RAN}-wide management. At this timescale, the interpreter evaluates intent feasibility, reasons over observed \ac{KPI} deviations, and generates refined intents without imposing timing constraints on downstream agents.

Decoupling long-term reasoning and intermediate adaptation from real-time control ensures that (a) the interpreter supervisory role is non-latency-critical; (b) latency-critical operations are confined to the controller—for any RAN control function involved; and (c) the fast control loop remains stable.

% ===================================================================
% SECTION 4: Language-Guided Intent Management
% ===================================================================
\section{Language-Guided Intent Management}\label{sec:intent_management}

% ===================================================================
\subsection{Interpreter Agent}
% ===================================================================

The interpreter is a language-guided supervisory agent aligned with the scope of an \ac{IMF}~\citep{TMF:24}. It performs two complementary functions: (a) \emph{transforming intents} into structured \acp{OTM}, and (b) \emph{cognitive reasoning} for recursive intent adaptation.

The interpreter agent must integrate domain awareness, intent stabilization, and adherence to the computational and memory constraints of the \ac{RAN} system.~Domain awareness includes understanding which control agents operate within each sub-domain, their capabilities, parameters, and timescales, as well as the \acp{KPI} they influence. This knowledge enables the interpreter to produce feasible \ac{OTM} formulations for a given intent, route each intent to the appropriate \ac{RAN} control agent, and ensure intent stabilization by reasoning over system observations, optimizer feedback, and network dynamics to perform safe, explainable \ac{OTM} refinements when required.

Meeting these requirements within current 5G/5G-A \ac{RAN} hardware necessitates a design that is both computationally efficient and functionally modular. Deploying a single large general-purpose \ac{LLM} is impractical due to compute and memory constraints in current \ac{RAN} platforms, and integrating dedicated accelerators is neither scalable nor cost-effective. To address this, we adopt a dual-SLM architecture that separates the interpreter’s two core functions—intent translation and cognitive reasoning—across two lightweight, complementary \acp{SLM}, as detailed in~\Cref{appendix:interpreter_interpreter_agent_details}.

\textbf{Intent translation.} This module is the workflow entry point. It interprets the intent, decomposes it into sub-intents, selects the appropriate downstream control agent, and initiates the intent-fulfillment loop. A fine-tuned \ac{SLM} renders the intent as a structured, schema-compliant \ac{OTM} by disambiguating objectives, constraints, requirements, and metadata. This step extends beyond lexical parsing: the model must map high-level intents into optimization structures grounded in domain knowledge. Using a fine-tuned \ac{SLM} ensures low-complexity generation of machine-readable \acp{OTM} that reflect RAN semantics and remain robust to linguistic variability.~\Cref{appendix:B:otm} discusses the generality of the \ac{OTM} schema, while~\Cref{app:C:data_curation} outlines the fine-tuning of a Qwen-2.5-7B-Instruct model~\cite{QWEN:25}, which achieves the high schema validity and \ac{OTM} accuracy shown in~\Cref{sec4:Interpreter_accuracy}.

\textbf{Cognitive reasoning and adaptation.} Complementing the translator, a lightweight general-purpose \ac{SLM} performs supervisory reasoning via in-context learning. It evaluates feasibility, diagnoses constraint violations, and refines \acp{OTM} when strict requirements cannot be met, proposing alternative trade-offs or adapting objectives to evolving network conditions. Intent stabilization is achieved through structured monitoring, advisory evaluation, and guarded execution (see~\Cref{appendix:interpreter_interpreter_agent_details}). This supervisory closed-loop reasoning extends beyond static templates or rule-based logic and is essential for autonomous, intent-driven, network management under real-world network dynamics.

This division of labor preserves contextual knowledge and ensures adaptability for intent handling, while remaining compatible with practical constraints of contemporary \ac{RAN} deployments. The dual-\ac{SLM} interpreter—built from small-scale models—and the infrequent, non-latency-critical nature of \ac{SLM} inference within the agents’ timescale separation allow the system to maintain low compute and energy overhead. As a result, the overall design is feasible on current 5G/5G-Advanced hardware.

\begin{table}[t]
\centering
\begin{tabular}{l c c c c}
\toprule
\textbf{Model} &
\textbf{Schema accuracy} &
\multicolumn{3}{c}{\textbf{OTM accuracy}}\\
 & & Objectives & Constraints & Overall \\
\midrule
Qwen-2.5-7B-Instruct (Before fine-tuning)    & 100.0\%  & 45.00\% & 21.50\% & 11.30\%   \\
Qwen-2.5-7B-Instruct (After fine-tuning)  & 100.0\% & \textbf{100.0\%} & \textbf{98.00\%}  & \textbf{98.00\%}\\
\bottomrule
\end{tabular}
\caption{Schema and \ac{OTM} accuracy for interpreters using the Qwen-2.5-7B-Instruct model.}
\label{sec4:Interpreter_accuracy}
\end{table}

%===============================================================================
\subsection{Optimizer Agent}
%===============================================================================

The optimizer agent performs three key tasks: (i) decoding the \ac{OTM} received from the interpreter, (ii) recursively solving the associated optimization problem to align the controller’s policy with the intent, and (iii) coordinating the two feedback loops within the workflow. Upon receiving an \ac{OTM}, the optimizer formulates a constrained optimization problem aligned with the specified intent, such as
\begin{equation}
    \begin{aligned}
        \underset{\boldsymbol{\omega} \in \Omega}{\text{minimize}} \quad & f(\boldsymbol{\omega}) \\
        \text{subject to} \quad & g_{i}(\boldsymbol{\omega}) \leq b_i, \quad i = 1, \dots, p,
    \end{aligned}
    \label{eq:optimization}
\end{equation}
where \( f(\boldsymbol{\omega}) \) quantifies the system performance (e.g., energy, latency, throughput), and the decision variable \( \boldsymbol{\omega} \) belongs to a feasible set \( \Omega \subseteq \mathbb{R}^m \). The inequality constraints \( g_i(\boldsymbol{\omega}) \leq b_i \) capture operational limitations—e.g., bandwidth, latency, or power—or service requirements. Since both objective and constraints are often non-convex, the solution landscape may contain multiple local optima, making the identification of feasible or optimal solutions challenging.

The decision variables \( \boldsymbol{\omega} \) link the optimizer to the controller by representing hyperparameters that tune the controller’s policy. In our framework, the controller follows a \ac{MORL} approach (\Cref{sec:intent_fulfillment}), so \( \boldsymbol{\omega} \) corresponds directly to the preference weights in its multi-dimensional reward function.

Since the explicit forms of \( f \) and \( g_i \) are unknown and their evaluations are computationally expensive, the optimizer employs \ac{BO}, leveraging surrogate models trained on \ac{RAN} performance data (e.g., throughput, spectral efficiency, \ac{BLER}) relevant to the intent. These models guide the exploration of preference weights \( \boldsymbol{\omega} \) (i.e., decision variables), which steer the controller’s actions. Additional details of the \ac{BO} design are provided in~\Cref{appendix:Bayesian_optim}.

%===============================================================================
\subsubsection{PAX-BO: Preference-Aligned eXploration Bayesian Optimization}
%===============================================================================

We next address the preference-based constrained \ac{BO} problem~(\ref{eq:optimization}) in the multi-service case, where $S$ connectivity services must be jointly optimized under $p$ constraints that capture requirements such as data rate, latency, and reliability. The optimization problem~(\ref{eq:optimization}) becomes
\begin{equation}
    \begin{aligned}
        \underset{\mathbf{W} \in \Omega^S}{\text{minimize}} \quad 
        & f(\mathbf{W}) \\
        \text{subject to} \quad 
        & g_i(\mathbf{W}) \leq b_i, \quad i = 1, \dots, p,
    \end{aligned}
    \label{eq:optimization_multi}
\end{equation}
where $\mathbf{W} = [\vomega^{(1)}, \dots, \vomega^{(S)}]$ collects the service-specific preference vectors $\vomega^{(s)} \in \Omega$ ($\Omega = \Delta^{m-1}$) on the probability simplex. The objective $f(\mathbf{W})$ quantifies system-wide performance, while the constraints $g_i(\mathbf{W}) \leq b_i$ enforce joint service requirements. Problem (\ref{eq:optimization_multi}) reduces to Problem (\ref{eq:optimization}) when $S=1$.

\textbf{PAX-BO}, shown in~\Cref{alg:paxbo}, solves Problem~(\ref{eq:optimization_multi}) by optimizing preference vectors on the simplex through \ac{BO} in an unconstrained internal space. Let $U=[u^{(1)},\dots,u^{(S)}]\in\mathbb{R}^{m\times S}$ and $\bar u=\mathrm{vec}(U)$. Each service $s$ has a projected simplex weight $\vomega^{(s)}=\Pi_\Delta(u^{(s)})\in\Delta^{m-1}$, and $\mathbf{W}(U)=[\vomega^{(1)},\dots,\vomega^{(S)}]\in(\Delta^{m-1})^S$. At each iteration, we fit surrogate models that approximate the system objective and constraints as $\mathcal{F}(\bar u)\approx f(\mathbf{W}(U))$ and $\mathcal{G}_i(\bar u)\approx g_i(\mathbf{W}(U))$, and build a constraint-aware acquisition $\alpha(\bar u)$ (e.g., Log-EI times a feasibility probability).

A \emph{\ac{TR}}—an $\ell_\infty$ box with center $s_c$ and radius $L\in[L_{\min},L_{\max}]$—constrains local exploration. At each iteration, the acquisition function is maximized within the \ac{TR}, and the solution is projected back onto the simplex:  
\[
\bar u_t=\arg\max_{\ \|\bar v-s_c\|_\infty\le L}\ \alpha(\bar v),\quad
U_t=\mathrm{mat}(\bar u_t),\quad \mathbf{W}_t=\Pi_\Delta(U_t).
\]  
After evaluating $o_t=f(\mathbf{W}_{t-1})$ and $c_t^{(i)}=g_i(\mathbf{W}_{t-1})$, we declare success if $c_t^{(i)}\le 0$ for all $i$ and $o_t\ge f_{t-1}^\star+\epsilon$, with $\epsilon\ll 1$. On success, we set $f_t^\star\!\leftarrow o_t$, $s_c\!\leftarrow\bar u_{t-1}$, and expand $L$ after $s_{\mathrm{th}}$ consecutive successes; otherwise, $L$ is shrunk after $f_{\mathrm{th}}$ failures, clamped to $[L_{\min},L_{\max}]$. If the TR stalls at $L_{\min}$ for $w$ rounds, a \emph{reset} is triggered: $n$ candidates are sampled from $(\Delta^{m-1})^S$, scored by (acquisition)$\times$(feasibility)$\times$(novelty), and the best candidate reinitializes $s_c$ with $L\leftarrow L_0$.  

Overall, PAX-BO is a lift-and-project \ac{BO} method with \ac{TR} safeguards and reset mechanisms, tailored to simplex-valued preferences that jointly influence a constrained system objective.

% ===================================================================
% SECTION 5: Preference-Guided Intent Fulfillment
% ===================================================================
\section{Preference-Guided Intent Fulfillment}\label{sec:intent_fulfillment}

The optimizer and controller agents operate in a closed loop to achieve intent fulfillment. The optimizer recursively adapts the preference vector $\boldsymbol{\omega}$ based on performance feedback from the controller. The optimal (or near-optimal) vector $\vomega^{\star}$, obtained by solving~(\ref{eq:optimization}), is then passed to the controller, which aligns network actions with the original intent.

\subsection{Controller Agent}

The controller implements a policy trained via \ac{D-EQL}, a distributed extension of \ac{EQL}~\citep{YSN:19}. \ac{D-EQL} learns a single policy/value network conditioned on a linear preference vector $\vomega \in \Omega$ (the probability simplex) and scales exploration through a learner–actor architecture with prioritized replay (cf.~\cite{Horgan:18}).

During training, actors are assigned to distinct strata of the simplex defined by a simplex-lattice partition. Each actor samples preferences uniformly within its stratum using barycentric sampling, executes an $\varepsilon$-greedy policy with the scalarization
\[
Q_{\vomega}(s,a;\vtheta)=\vomega^\top Q(s,a,\vomega;\vtheta),
\]
and initializes replay priorities by drawing an independent preference $\tilde{\vomega}$ to compute a scalar temporal-difference error. Transitions and priorities are batched locally and sent to sharded replay buffers.  

The learner assigns strata of the simplex to actors for distributed exploration, retrieves prioritized minibatches from all shards, samples preferences from a Dirichlet distribution, and forms a Cartesian product so that each transition is evaluated under every sampled preference. The learner performs envelope backups by maximizing over actions and supporting preferences, updates parameters using a regression loss with an optional cosine-alignment term, refreshes priorities, and periodically synchronizes the target network. Updated weights are then broadcast to all actors.  

The envelope backup is expressed as
\[
\vy=\vr+\gamma(1-d)\,Q(s',a^\star,\tilde\vomega^\star;\vtheta^{-}),\quad
(a^\star,\tilde\vomega^\star)=\argmax_{a',\,\vomega'\in\Omega}\ \vomega^\top Q(s',a',\vomega';\vtheta).
\]
Compared with state-of-the-art \ac{MORL} algorithms such as~\cite{YSN:19} and~\cite{Basaklar:23}, \ac{D-EQL} introduces (i) a hindsight replay memory with prioritized sampling and priority updates, (ii) partitioned exploration of the preference space across distributed asynchronous actors, and (iii) a sharded replay memory. This architecture improves scalability in environments with large state–action–preference spaces by enabling systematic simplex exploration, dense preference supervision, and high-throughput stable learning. As shown in~\Cref{sec5:DEQLtable}, \ac{D-EQL} achieves a 22.1\% performance CFR1 improvement over~\cite{YSN:19} and an additional 8\% gain over~\cite{Basaklar:23} in the Fruit Tree Navigation environment with depth 7, as well as 89.37\% hypervolume improvement over~\cite{YSN:19} and an extra 6.05\% gain over~\cite{Basaklar:23}. Additional design details and extended comparisons are provided in~\Cref{appendix:distributed_morl}.

\begin{table}[t]
\centering
\footnotesize
\begin{tabular}{l c c c c c c c c c}
\toprule
\textbf{Algorithm} &
\textbf{Partition} &
\multicolumn{4}{c}{\textbf{Replay memory}} &
\multicolumn{2}{c}{\textbf{Actor}} &
\textbf{CFR1} &
\textbf{Hypervol.} \\
 & & Hindsight & Sampling & Update & Sharded & Distrib. & Comm.& Improv.  & Improv.\\
\midrule
\cite{YSN:19}      & No  & Yes & Prioritized & No  & No  & No  & --  & --  & -- \\
\cite{Basaklar:23}  & Yes & Yes & Uniform     & No  & No  & Yes & Synch.  & 12.33\%  & 78.56\% \\
\ac{D-EQL} (ours)   & Yes & Yes & Prioritized & Yes & Yes & Yes   & Asynch. & \textbf{22.10\%} & \textbf{89.37\%}\\
\bottomrule
\end{tabular}
\caption{Comparison of D-EQL with \cite{YSN:19} and \cite{Basaklar:23} in terms of design features and achieved CFR1 performance in Fruit Tree Navigation with depth 7.}\label{sec5:DEQLtable}
\end{table}

% ===================================================================
% SECTION 6: Case Study: Agentic Radio Resource Management
% ===================================================================
\section{Case Study: Agentic Radio Resource Management}\label{sec:MO_radio_resource_management}

\Ac{RRM} encompasses some of the most demanding and dynamic control functions in \acp{RAN}, including user scheduling, resource allocation, link adaptation, power control, and beamforming. These mechanisms operate on sub-millisecond timescales and must continuously adapt to the stochastic nature of the wireless channel to maintain reliable and efficient over-the-air communications.

As proof of concept, we apply our Agentic AI system to support differentiated connectivity services using a \ac{MORL}-based controller agent for \ac{LA}—a key function that tunes \ac{MCS} parameters to the radio link capacity. The detailed description of the \ac{MORL} \ac{LA} controller agent is provided in~\Cref{appendix:study_case}. Here, we note that the reward is a vector $\vr = [r_1, r_2]^{\top} \in \mathbb{R}^{2}$ with two competing components: $r_1$ measures the number of information bits successfully delivered per packet, and $r_2$ captures the time–frequency resources consumed per packet transmission.

In our agentic system, the \ac{MORL} \ac{LA} controller agent defines the fastest operational timescale, running on a sub-millisecond cadence. This cadence sets the reference timescale for dimensioning the optimizer and interpreter. The optimizer updates the preference weights of the \ac{MORL} controller once per second, based on performance reports and observed network conditions. This update rate is fast enough to steer the controller toward \ac{MCS} selections aligned with the intent goals, yet slow enough not to interfere with the stability of the \ac{LA} decision loop.

At the same cadence, the optimizer agent provides feedback to the interpreter agent for supervisory monitoring of intent fulfillment. However, the interpreter's cognitive loop is triggered only on an event-driven basis. Upon receiving an alert message from the optimizer, the interpreter leverages its general-purpose \ac{SLM} to perform cognitive reasoning over \acp{KPI} deviations, intent-fulfillment, and evolving network conditions to determine whether the intent must be refined. In our case study, such intervention occurs when changing network conditions render the service requirements temporarily infeasible.

% ===================================================================
% SECTION 7: Experiment
% ===================================================================
\section{Experiment}\label{sec:experiment}

This section evaluates the empirical performance of our  Agentic AI system for intent-aware \ac{RRM} using a 5G-compliant event-driven network simulator. We validate our approach in three steps using a multi-cell setup described in~\Cref{appendix:Experiments}: First we validate the \ac{MORL} controller agent design; secondly, we evaluate the optimizer-controller loop; and lastly we benchmark the overall workflow. 

\begin{figure*}[t!]
    \centering
    \begin{subfigure}{0.48\textwidth}
        \centering
        \includegraphics[width=\linewidth]{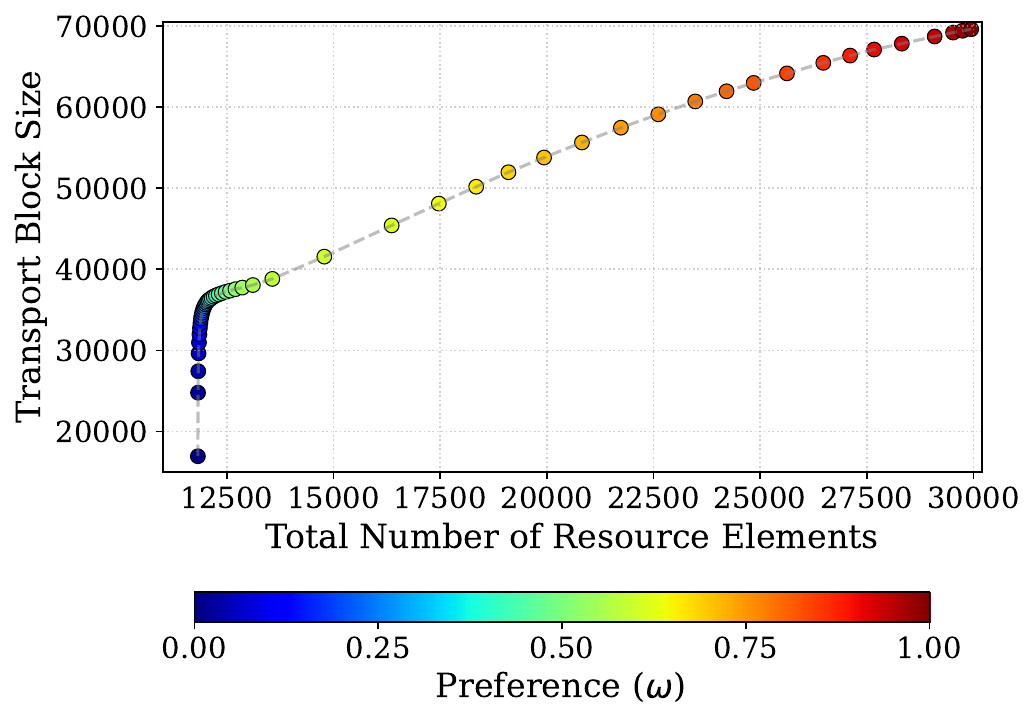}
        \caption{Pareto frontier illustrating the trade-off between transport block size and resource utilization.}
        \label{fig2a:pareto_frontier}
    \end{subfigure}
    \hfill
    \begin{subfigure}{0.48\textwidth}
        \centering
        \includegraphics[width=\linewidth]{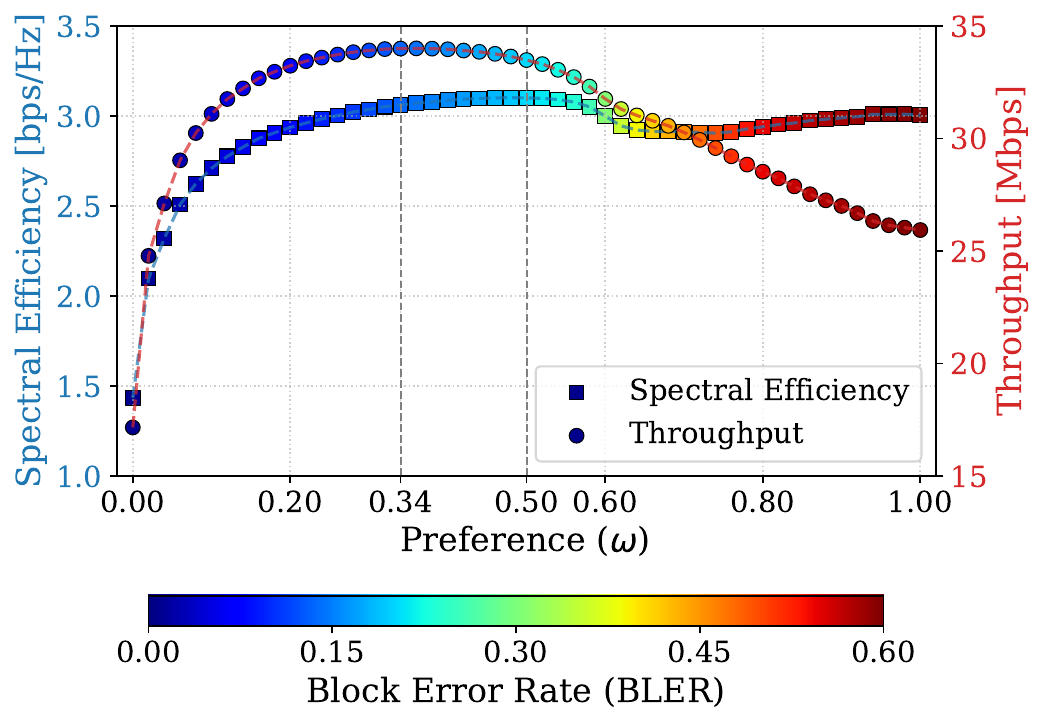}
        \caption{Joint characterization of connectivity service \acp{KPI} dependence on preference weights $\omega$.}
        \label{fig2b:link_KPIs}
    \end{subfigure}
    \caption{Characterization of preference-guided \ac{LA} using \ac{MORL} to satisfy service intents.}
    \label{fig:MOMDP_characterization}
\end{figure*}

% ===================================================================
\subsection{MORL Controller Agent for Link Adaptation}
\label{subsec:MORL_controller_LA}
% ===================================================================

\Cref{fig:MOMDP_characterization} illustrates how the preference-guided \ac{MORL} controller for \ac{LA} steers trade-offs among service \acp{KPI}, like spectral efficiency, throughput, and \ac{BLER}, assuming long communication sessions (e.g., streaming services). \Cref{fig2a:pareto_frontier} shows the Pareto frontier for the two reward components, while \Cref{fig2b:link_KPIs} maps each point on the frontier to link-level \acp{KPI}. When $\omega \approx 0$, the controller selects conservative \ac{MCS} values, resulting in resource efficient and high-reliable transmissions (with near-zero \ac{BLER}), but at the cost of low throughput (i.e., due to small \acp{TBS}) and spectral efficiency. At the other extreme, $\omega \approx 1$ drives aggressive \ac{MCS} choices that exploit retransmissions to target a spectral efficiency beyond the channel capacity, inducing resource-hungry and unreliable transmissions (with \ac{BLER} $\approx60\%$). The best operating points emerge for intermediate preferences, with $\omega \approx 0.34$ maximizing throughput and $\omega \approx 0.5$ maximizing spectral efficiency.~\Cref{appendix:Experiments} extends the analysis to examples with multiple connectivity services.

% ===================================================================
\subsection{Intent-fulfillment loop validation}
\label{subsec:optimizer_controller_loop}
% ===================================================================

Next, we evaluate \emph{only} the optimizer–controller loop, assuming a single forward interaction with the interpreter to obtain an \ac{OTM}. That is, when stochastic changes in the RAN environment render the \ac{OTM} specifications infeasible, the interpreter’s cognitive refinement loop is not triggered. While the optimizer–controller pair cannot resolve temporary infeasibility caused by evolving \ac{RAN} conditions.

We illustrate this by considering an intent that combines two contrasting connectivity services:

\begin{quoteBox}
\emph{Maximize cell throughput while serving mobile broadband users on a best-effort basis, and guaranteeing 99.99\% reliability for a ultra-reliable traffic.}
\end{quoteBox}

This intent reflects \ac{QoS} requirements for streaming and reliable services. In the agentic workflow, the interpreter constructs an \ac{OTM} that (a) identifies the two services, (b) defines an overall objective based on their achieved throughput, and (c) formulates a reliability constraint for the reliable service. The optimizer then instantiates an optimization problem to adapt the two vectors, $\boldsymbol{\omega}_{\mathrm{mbb}} = [\omega_{\mathrm{mbb}}, 1-\omega_{\mathrm{mbb}}]^{\top}$ and $\boldsymbol{\omega}_{\mathrm{rel}} = [\omega_{\mathrm{rel}}, 1-\omega_{\mathrm{rel}}]^{\top}$, each aligned to a service, by maximizing the aggregate throughput $f(\boldsymbol{\omega}_\mathrm{mbb}, \boldsymbol{\omega}_\mathrm{rel}) = f_\mathrm{mbb}(\boldsymbol{\omega}_\mathrm{mbb}, \boldsymbol{\omega}_\mathrm{rel}) + f_\mathrm{rel}(\boldsymbol{\omega}_\mathrm{mbb}, \boldsymbol{\omega}_\mathrm{rel})$  subject to the reliability constraint $g_\mathrm{rel}(\boldsymbol{\omega}_\mathrm{mbb}, \boldsymbol{\omega}_\mathrm{rel})\geq 0.9999$.

\Cref{fig:3a} shows the optimizer–controller dynamics over a two-minute simulation. After an initial warm-up phase, the PAX-BO optimizer steers $\boldsymbol{\omega}_\mathrm{mbb}$ and $\boldsymbol{\omega}_\mathrm{rel}$ so that the \ac{D-EQL} controller applies Pareto-optimal policies matched to each service’s requirements under varying network conditions. For reliable services, the optimizer converges to $\omega_{\mathrm{rel}}\approx 0$ (consistent with~\Cref{fig2b:link_KPIs}), driving the controller toward conservative \ac{MCS} selections that deliver ultra-reliable performance throughout the simulation—exceeding 99.99\% reliability in 94\% of the run. Only a few packets are lost during isolated deep-fading episodes; under persistent fading, the interpreter could be invoked to relax the reliability target. For enhanced-streaming traffic, the optimizer converges to $\omega_{\mathrm{mbb}}\approx 0.45$, prioritizing higher mean user throughput.~\Cref{appendix:Experiments} provides additional analysis and results.

\Cref{fig:3b} and \Cref{fig:3c} show that our agentic system outperforms both the state-of-the-art \ac{OLLA} used in 5G systems and the traditional \ac{RL}-based \ac{LA} of~\cite{demirel:2025}. Unlike our approach—which adapts a single D-EQL model on-the-fly to different connectivity requirements and radio conditions—both \ac{OLLA} and traditional \ac{RL} require separate configurations optimized for each service type. For \ac{OLLA}, we consider a standard target \ac{BLER} of 10\% for maximizing throughput in streaming services and 1\% for highly reliable transmissions\iffalse(corresponding to a packet error rate of $10^{-10}$ with four retransmissions)\fi. Traditional \ac{RL} similarly requires distinct models with reward functions tailored to each service; following~\cite{demirel:2025}, we use robustness parameters $\alpha=0.5$ for throughput and $\alpha=2$ for reliability. \Cref{fig:3b} shows that our agentic system achieves substantially lower \ac{BLER} for reliable services than both \ac{OLLA} and the \ac{RL} baseline with $\alpha=2$, yielding more reliable transmissions. \Cref{fig:3c} further shows that the same D-EQL model also attains throughput comparable to an \ac{RL} model explicitly trained for throughput optimization. While D-EQL handles both services with a single model, using multiple \ac{RL} models is impractical: inference must complete within a few hundred microseconds for all users, making rapid model switching across services infeasible.

\begin{figure*}[t!]
    \centering
    \begin{subfigure}{0.33\textwidth}
        \centering
        \includegraphics[width=\linewidth]{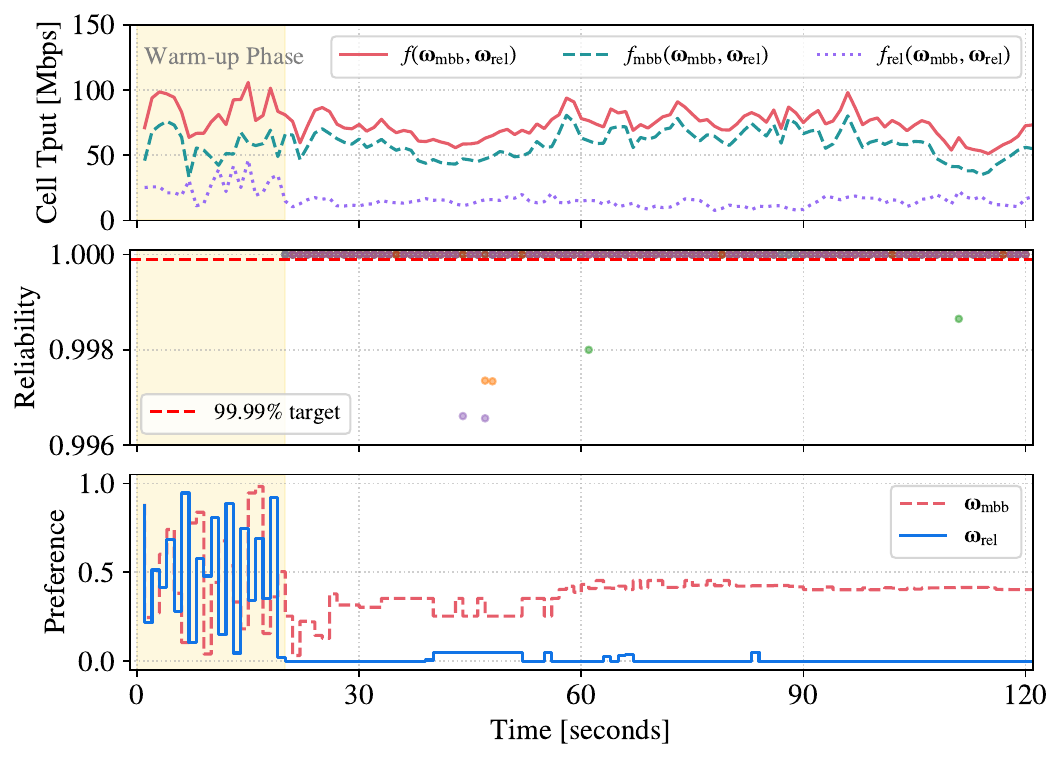}
        \caption{Time series of services \acp{KPI}.}
        \label{fig:3a}
    \end{subfigure}
    \hfill
    \begin{subfigure}{0.32\textwidth}
        \centering
        \includegraphics[width=\linewidth]{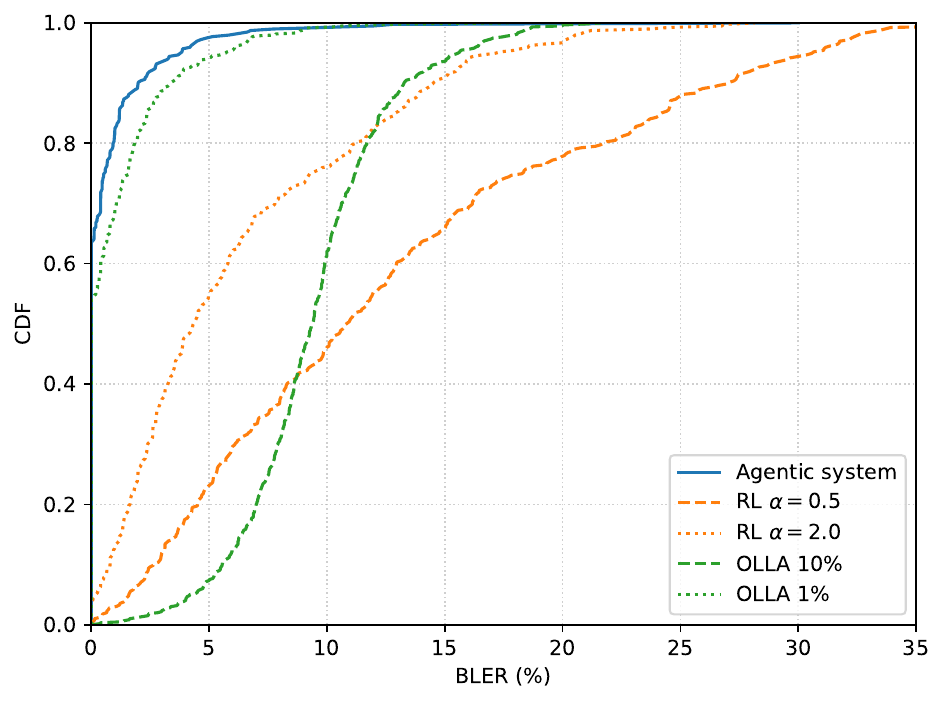}
        \caption{BLER for reliable service.}
        \label{fig:3b}
    \end{subfigure}
    \hfill
    \begin{subfigure}{0.32\textwidth}
        \centering
        \includegraphics[width=\linewidth]{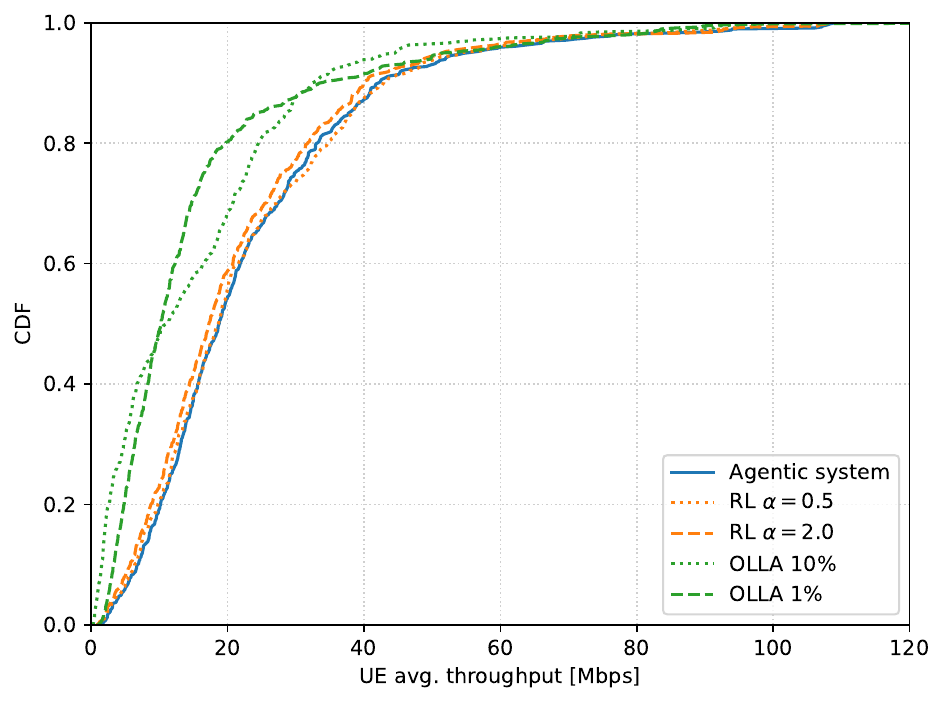}
        \caption{Streaming users throughput.}
        \label{fig:3c}
    \end{subfigure}
    \caption{Validation of the intent fulfillment loop between optimizer-controller for two examples.}
    \label{fig:3_optimizer_controller_loop}
\end{figure*}

% ===================================================================
\subsection{Triadic Agent Workflow Validation}
\label{subsec:triadic_agents_workflow_validation}
% ===================================================================

We next evaluate the complete agentic AI system, with both intent management and intent fulfillment loops working in unison to provide a continuous solution to an intent formulation that combines a primary system objective (i.e., cell throughput) with \ac{QoS} requirements of a connectivity service:

\begin{quoteBox}
\emph{Maximize cell throughput and serve streaming users with a minimum average data rate of \qty{7}{Mbps} whenever possible.}
\end{quoteBox}

The peculiarity of this problem stems from the highly likelihood of the \ac{QoS} requirements to become infeasible for users with poor channel conditions (such as cell-edge and high mobility users). When such an event occurs, persisting with a rigid \ac{QoS} requirement would induce the system to over-provision users with poor channel regardless of their inability to meet the \ac{QoS} goal, at the expense of users with a better channel quality. In turns, this may induce users with better channel to achieve lower throughput (due to less resources) and therefore compromise the primary intent objective.

\Cref{fig:4_three_agents_workflow_validation} compares the agentic \ac{AI} system with two settings: (a) a formulation with rigid \ac{QoS} requirements; and (b) a formulation with flexible \ac{QoS} requirements. In the latter case, when the optimizer agent alerts the interpreter agent of a consistent violation of the service constraint, the interpreter reasons over the cause of the problem and plans a solution to relax the \ac{QoS} requirements. 

\Cref{fig:3agents_B} shows an instance of this intent management loop between the interpreter-optimizer agents, where the latter reacts to the constraint violation by relaxing the service threshold, in an attempt to improve the primary objective, and providing a revised \ac{OTM}. This choice allows the optimizer agent to choose an $\omega$ setting that guides the \ac{MORL} controller towards a less aggressive \ac{MCS} selection policy for \ac{LA}, making packets transmissions more reliable for users with poor channel conditions.

Despite the interpreter’s recursive adaptation of \ac{QoS} requirements, infeasibility may still persist. This occurs because (a) the adaptor module includes guardrails that prevent abrupt \ac{QoS} changes during \ac{OTM} refinement (cf.~\Cref{appendix:interpreter_interpreter_agent_details}); and (b) prolonged poor channel conditions—such as deep fading, high pathloss, or shadowing—may yield spectral efficiencies too low to satisfy the \ac{QoS} constraints, regardless of how the interpreter adjusts them. Nonetheless, adapting the \ac{OTM} still yields tangible system-level benefits. By relaxing \ac{QoS} targets for users in persistently poor channel conditions, the system frees radio resources that can be reallocated to users with better channel quality, thus with higher spectral efficiency. This redistribution increases the primary intent objective (cell throughput), even if some individual \ac{QoS} constraints remain infeasible. As illustrated in~\Cref{fig:3agents_A}, once \ac{OTM} adaptation begins in the second half of the simulation, the cell throughput improves by a 4.79\%.

\begin{figure*}[t!]
    \centering
        \begin{subfigure}{0.50\textwidth}
        \centering
        \includegraphics[width=\linewidth]{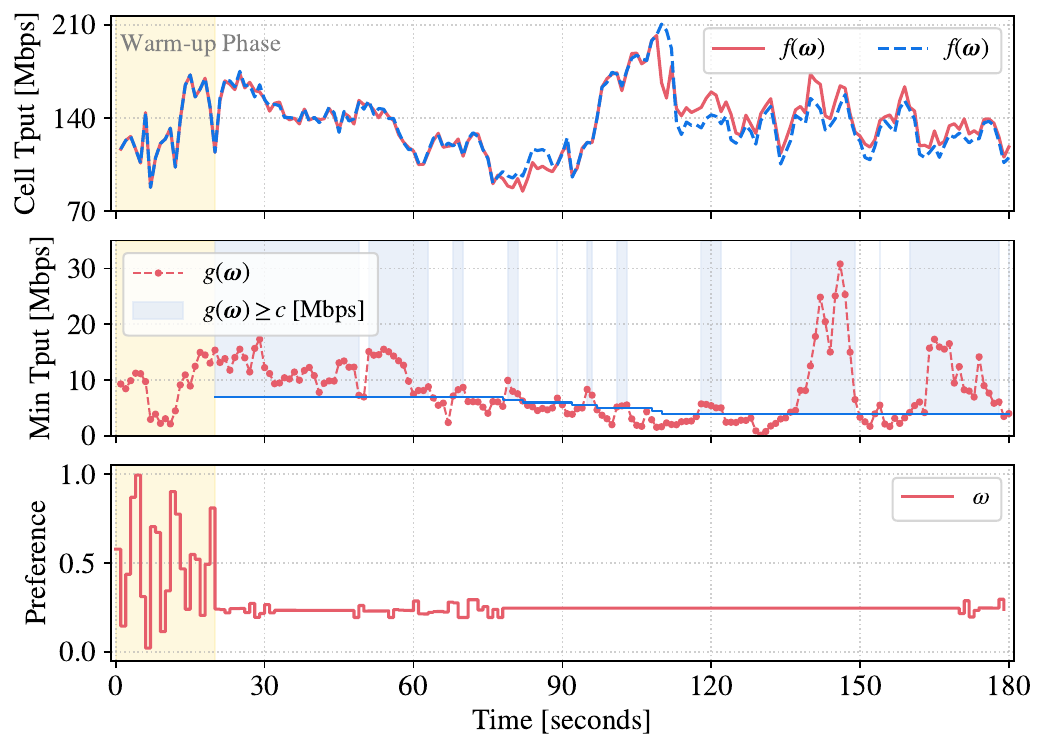}
        \caption{Agentic system with/without \ac{OTM} refinement.}
        \label{fig:3agents_A}
    \end{subfigure}
        %\hfill
    \begin{subfigure}{0.47\textwidth}
        \centering
        \includegraphics[width=\linewidth]{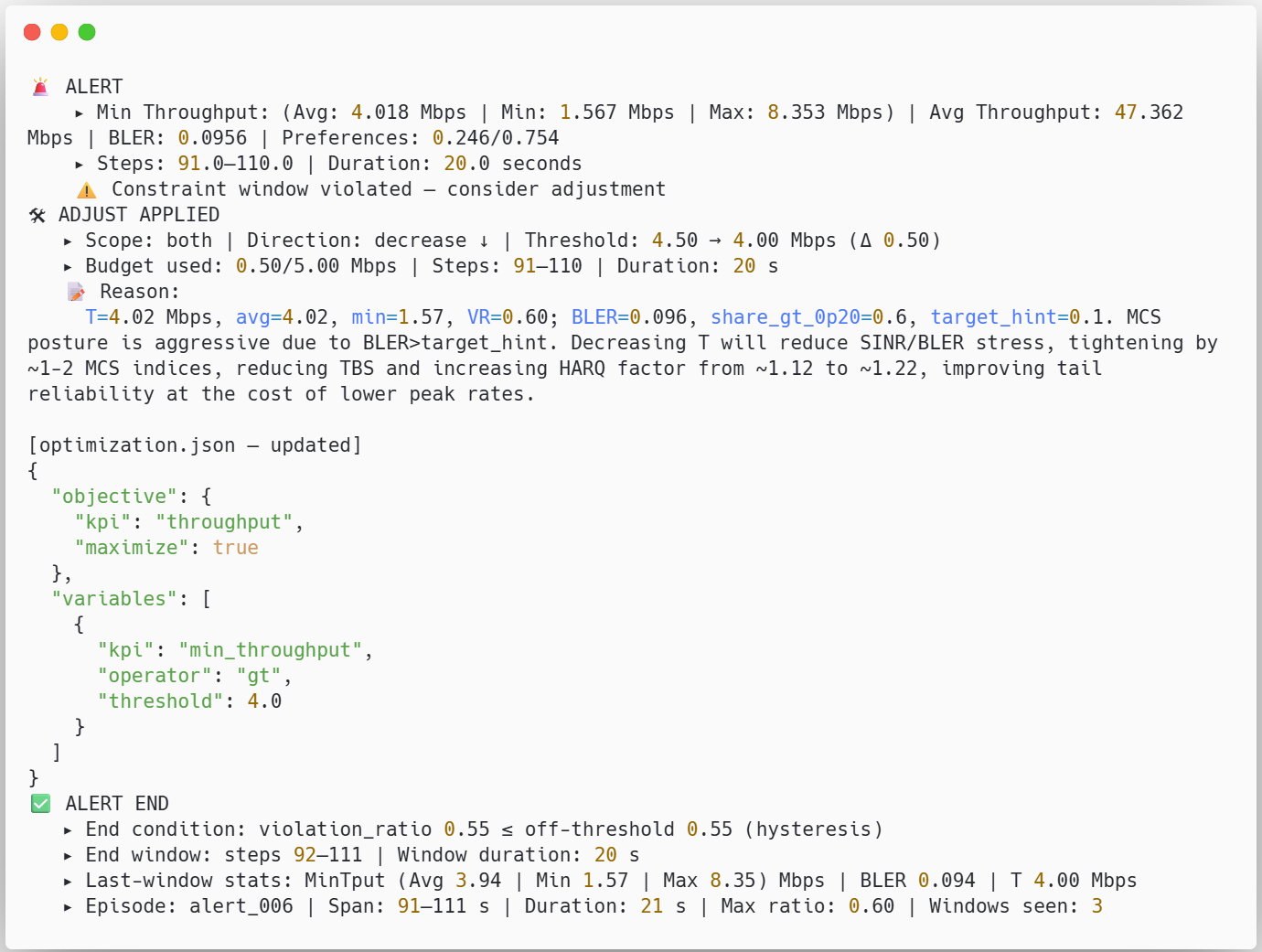}
        \caption{Intent management loop signaling.}
        \label{fig:3agents_B}
    \end{subfigure}
    \caption{Validation of the full agentic workflow, with intent management loop and intent fulfillment loop working in unison. We compare two formulations with rigid and flexible service requirements.}
    \label{fig:4_three_agents_workflow_validation}
\end{figure*}

% ===================================================================
% SECTION 8: Conclusions
% ===================================================================
\section{Conclusions}\label{sec:conclusion}

We presented an Agentic AI system for intent-driven control in autonomous networks, structured around three cooperating agents: interpreter, optimizer, and controller. Their coordinated interaction links high-level service intents to concrete network actions, enabling continuous reasoning, trade-off resolution, and real-time adaptation across multiple timescales of autonomous network control.

Our contributions span the full intent-to-control pipeline. The interpreter uses a lightweight dual-\ac{SLM} architecture to convert natural-language intents into structured optimization templates, assess feasibility, diagnose constraint violations, and refine templates using optimizer feedback. The optimizer performs preference planning via \ac{BO}, dynamically adjusting the downstream controller’s policy to meet the service requirements encoded in the template. The controller builds on \ac{MORL} to execute fast-timescale actions and adapt policies to evolving network conditions. To support this role, we introduce a distributed \ac{MORL} algorithm that integrates envelope Q-learning with actor–learner decoupling, preference-space exploration, and prioritized hindsight replay, improving scalability, exploration coverage, and performance over state-of-the-art \ac{MORL} approaches.

Proof-of-concept experiments in a high-fidelity, 5G-compliant \ac{RAN} simulator demonstrate that the proposed system reconciles heterogeneous service requirements—including throughput and reliability—while operating near the Pareto front of network performance and adapting effectively to dynamic conditions, exceeding traditional \ac{RL} and state-of-the-art functions of in 5G/5G-A systems.

Looking ahead, a key challenge is scaling this workflow across hierarchical layers of the \ac{RAN}—from cell-level control to cluster-level coordination and end-to-end service orchestration—while ensuring intent consistency, agent interoperability, and robustness to uncertainty at each level.

% -------------------------
% References
% -------------------------
\clearpage
\bibliographystyle{assets/plainnat}
\bibliography{bibliography}

% \bibliography{references}

% -------------------------
% Supplementary
% -------------------------
\newpage
% ===================================================================
% SECTION 9: Interpreter Agent: Responsibilities, Design, Implementation
% ===================================================================
\section{Interpreter Agent: Responsibilities, Design, Implementation}
\label{appendix:interpreter_interpreter_agent_details}

% ===================================================================
\subsection{Scope and Responsibilities}
% ===================================================================

The \emph{interpreter agent} is the gateway from high-level intent to optimization-ready control. It fulfills two primary responsibilities: (1) translating intents expressed in natural language into an initial structured \ac{OTM}; and (2) recursively reasoning over system observations and optimizer feedback to stabilize intent fulfillment by revising the \ac{OTM} when required (e.g., when constraints become infeasible).

\paragraph{Division of Labour (Dual-SLM).} To address these responsibilities under tight computational budgets, we employ two complementary \acp{SLM}: (1) a \textbf{fine-tuned SLM} for intent-to-\ac{OTM} translation; and (2) an \textbf{\ac{ICL} based SLM} for adaptive intent management, which reasons over structured prompts and windowed \ac{KPI} statistics, refines intent requirements when needed, and provides an explicit textual rationale.

While alternative realizations of an interpreter agent are possible, our design enables the use of lightweight \acp{SLM} that adhere to the compute and memory constraints of 4G/5G \ac{RAN} systems (see~\Cref{appendix:A8_complexity}).

% ===================================================================
\subsection{Architectural Overview}
% ===================================================================

The interpreter agent architecture, showed in~\Cref{fig:interpreter_dual_llm}, consists of four tightly coupled modules:

\begin{itemize}
    \item \textbf{Translator (\Cref{app:translator})} uses a fine-tuned \ac{SLM} to convert an incoming intent into a structured, machine-readable \ac{OTM} that specifies objectives, constraints, aggregation units, and provenance for different connectivity services and operational goals.
    \item \textbf{Monitor (\Cref{app:monitor})} subscribes to optimizer telemetry, aligns the telemetry stream to the \ac{OTM}-defined intent-management timescale, extracts per-window summaries, and bridges short gaps.
    \item \textbf{\ac{ICL}-based Advisor (\Cref{app:advisory})} uses an \ac{ICL}-based \ac{SLM} to reason over window summaries and active policy thresholds, selects an advisory direction \[a\in\{\texttt{increase}, \texttt{decrease}, \texttt{no\_change}\}\] and generates a compact rationale \(\mathcal{R}\) grounded in \ac{RRM}. It proposes only a direction, not a magnitude.
    \item \textbf{Adaptor (\Cref{app:controller})} converts the advisory action $a$ into a bounded threshold update $\Delta b$ under guardrails (e.g., caps, lifetime budget, floor/ceiling, cooldown), persists the updated threshold atomically into the \ac{OTM}, and emits an audit record.
\end{itemize}

During the intent-management loop, the \ac{OTM} is treated as a \emph{living document} jointly maintained by the interpreter and optimizer agents. The optimizer continuously solves against the current \ac{OTM} snapshot and reports telemetry (e.g., windowed KPI statistics) to the monitor. Guided by this feedback, the \ac{ICL}-based advisor recommends adjustments when intent requirements become overly tight or infeasible under the current network state. The adaptor then applies bounded updates to the corresponding \ac{OTM} constraints, yielding a refreshed \ac{OTM} for the optimizer.

\begin{figure}[!thb]
    \centering
    \includegraphics[width=0.9\linewidth]{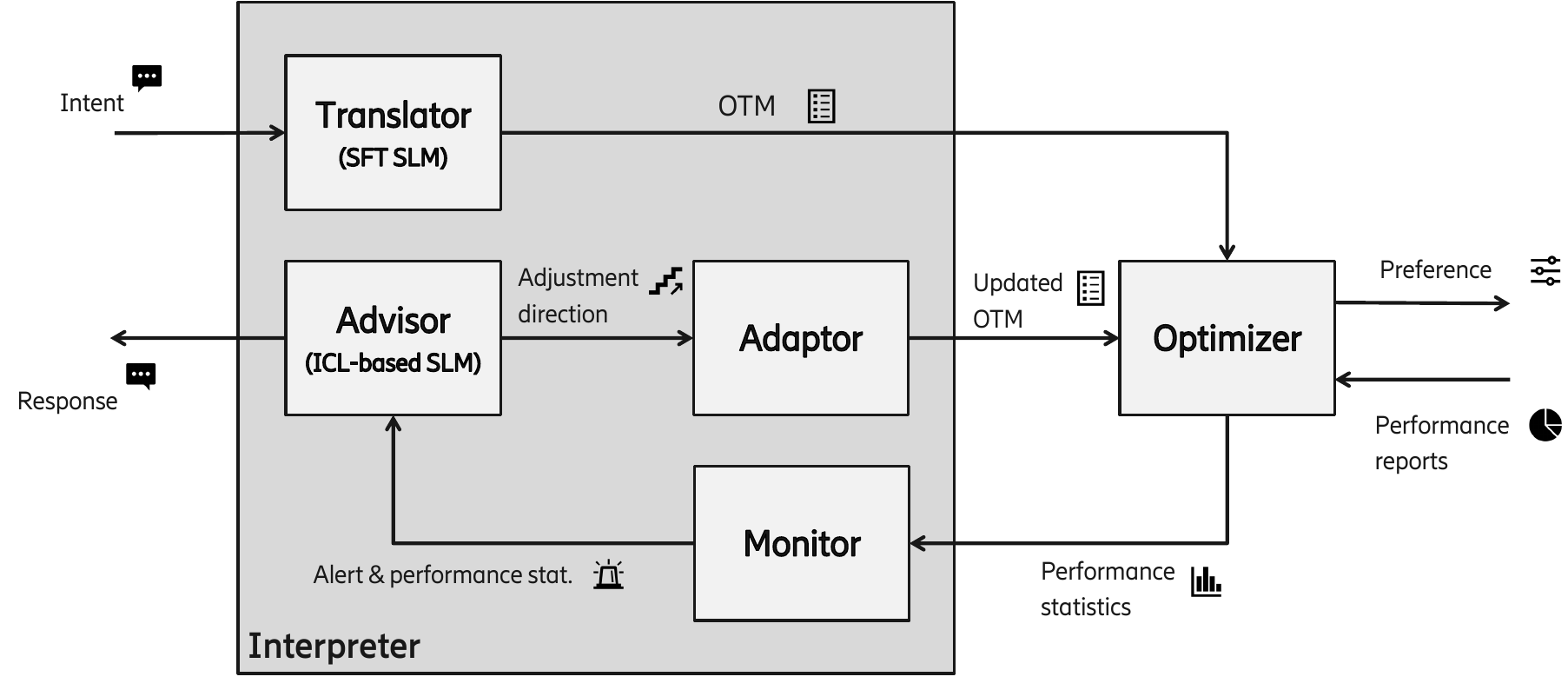}
    \caption{\textbf{Dual-SLM interpreter agent.}
    A (supervised) fine-tuned \ac{SLM} generates the \ac{OTM}; an intent monitor aligns telemetry; an \ac{ICL}-based advisory module outputs discrete adjustment directions with rationale; and an adaptor applies bounded updates and persists them atomically. The optimizer then solves against the latest \ac{OTM} snapshot, with telemetry closing the loop.}
    \label{fig:interpreter_dual_llm}
\end{figure}

% ===================================================================
\subsubsection{Translator}\label{app:translator}
% ===================================================================

The translator employs a fine-tuned \ac{SLM} to convert intents into deterministic, schema-compliant \ac{OTM} instances. Its role extends well beyond lexical parsing: it must interpret natural-language intents into meaningful optimization structures grounded in domain knowledge, and identify the appropriate downstream control agent to execute them. For example, a service intent requesting high reliability—such as the case in~\Cref{subsec:optimizer_controller_loop}—may translate into a non-obvious constraint formulated in terms of \ac{BLER}.

Further details on the translator design are provided in~\Cref{appendix:B:otm} and~\Cref{app:C:data_curation}, which discuss the \ac{OTM} schema and the supervised fine-tuning and evaluation of the translator \ac{SLM}, respectively.

% ===================================================================
\subsubsection{Sliding-Window Monitor}\label{app:monitor}
% ===================================================================

Consider a single constraint $k^A \!=\! \langle \texttt{kpi},\, \odot,\,b ,\, A,\, \texttt{unit}\rangle$ of an \ac{OTM}, where $\odot\in\{\le,\ge\}$ and $A$ is the per-step aggregation operator declared in the \ac{OTM} (e.g., \texttt{mean}, \texttt{min}, \texttt{max}, \texttt{p95}). 
To simplify notation, we refer to the constraint function $k^A(\cdot)$ as $y(\cdot)$, and let $y_t$ denote the \ac{KPI} value at step $t$ \emph{after} applying $A$ over the telemetry bin of length $\Delta$ (e.g., \qty{10}{\second}). With window length $W$, the monitor maintains a ring buffer over $\{y_i\}_{i=t-W+1}^{t}$ and computes a \emph{signed margin}:
\[
m_i \;=\; s(y)\,\bigl(y_i - b\bigr)\qquad \text{where}\qquad
s(y) \;=\; 
\begin{cases}
+1, & \odot \in \{\ge\} \quad\text{(lower bound)}\\
-1, & \odot \in \{\le\} \quad\text{(upper bound)}.
\end{cases}
\]
A step \(i\) is a violation if and only if \(m_i<0\) (negative margin). The window statistics are then
\[
\mathrm{violation\_ratio}(t) = \frac{1}{W}\sum_{i=t-W+1}^{t}\mathbf{1}[m_i<0],
\]
\[
\bar{y} = \frac{1}{W}\sum_{i=t-W+1}^{t}y_i, \qquad
y_{\min} = \min_{i}y_i, \qquad
y_{\max} = \max_{i}y_i,
\]
and the average \emph{shortfall/slack} (useful for controllers and prompts):
\begin{align*}
    \mathrm{shortfall\_avg}\;=\;&\frac{1}{W}\sum_{i=t-W+1}^{t}\max\{0,-m_i\}, \\
    \mathrm{slack\_avg}\;=\;&\frac{1}{W}\sum_{i=t-W+1}^{t}\max\{0, m_i\}.  
\end{align*}

\paragraph{Hysteresis and alerting.} Hysteresis prevents chattering: An \textsc{ALERT\_START} event is declared when \(\mathrm{VR}>\rho_{\text{on}}\) and an \textsc{ALERT\_END} event when 
\(\mathrm{VR}<\rho_{\text{off}}\) with \(\rho_{\text{on}}>\rho_{\text{off}}\). At each window end (the decision point), if an alert is active, the monitor produces a compact, constraint-centric context:
\[
\begin{aligned}
\{ &&\\
    & \;\;\text{window}:\;\{W,\,t\!-\!W\!+\!1\ldots t,\,\mathrm{VR}\},\\%[2pt]
    & \;\;\text{constraint\_metric}:\; \{\bar{y},\,y_{\min},\,y_{\max},\,b,\, \mathrm{shortfall\_avg},\,\mathrm{slack\_avg},\,\texttt{unit}\},\\[2pt]
    &\;\;\text{constraint\_id}:\;\texttt{id},\\
 \}  &\\
\end{aligned}
\]
Optionally, the context may be augmented with domain-specific auxiliaries (e.g., \texttt{aux\_kpis}) if available.

\paragraph{Complexity.} The monitor executes in $O(1)$ time per step through the use of a fixed-size ring buffer and incremental summary updates, with no rescans required. Memory usage grows linearly with the window size, i.e., $O(W)$.

\begin{table}[!tp]
\centering
\caption{Key hyperparameters of the interpreter agent.}
\label{tab:hyperparams}
\begin{tabular}{llp{6cm}}
\toprule
\textbf{Symbol} & \textbf{Name} & \textbf{Description} \\ \midrule
$W$ & Window size & Number of samples used to compute moving averages and the violation ratio. \\
$b$ & Threshold & Current target value for the monitored \ac{KPI}. \\
$\rho_{\mathrm{on}}$ & Alert-on ratio & Violation ratio above which an alert episode is initiated. \\
$\rho_{\mathrm{off}}$ & Alert-off ratio & Violation ratio below which an alert episode is terminated. \\
$d$ & Step size & Base increment or decrement applied to threshold updates. \\
$g_{\uparrow}, g_{\downarrow}$ & Guardrail gains & Maximum upward or downward adjustment permitted per update. \\
$s_{\max}$ & Smoothing cap & Maximum smoothing applied across consecutive updates. \\
$B$ & Budget & Maximum number of updates allowed within a single alert episode. \\
$b_{\min}, b_{\max}$ & Bounds & Minimum and maximum permissible threshold values. \\
$C$ & Cooldown & Minimum number of steps that must elapse before another update can be applied. \\ 
\bottomrule
\end{tabular}
\end{table}

% ===================================================================
\subsubsection{Advisor (Advisory Layer)}\label{app:advisory}
% ===================================================================

The advisory layer determines the \emph{direction of adaptation} and supplies a textual justification $\mathcal{R}$. It does \emph{not} specify the magnitude of change. Two modes are supported:
\begin{enumerate} \iffalse[label=(\alph*)]\fi
\item \textbf{Rule-based.} Thresholds on summary statistics (e.g., violation ratio, mean deviation from the target, minimum deviation from the target, auxiliary posture indicators) determine an advisory action \(a\).
\item \textbf{ICL-based \ac{SLM}} A structured prompt encodes (i) the set of allowed actions, (ii) the decision policy, (iii) domain-specific guardrails, and (iv) a strict JSON output schema. The \ac{SLM} produces an advisory adjustment
\[\verb|{"action": "...", "justification": "..."}|\]
conditioned on the parsed telemetry payload from the intent monitor.
\end{enumerate}

\paragraph{Guardrails in Prompting.} Schema fidelity and reproducibility are enforced through:
\begin{itemize}
    \item[(a)] JSON-only outputs;
    \item[(b)] end-of-sentence token fences;
    \item[(c)] banned tokens (e.g., URLs, markdown code fences); and
    \item[(d)] near-deterministic decoding with low-variance sampling to avoid verbatim repetition while maintaining stability.
\end{itemize}
The justification must cite explicit numerical values extracted from the payload (e.g., target $b$, mean $\bar{y}$, minimum $y_{\min}$, violation ratio $\mathrm{VR}$) and must classify posture relative to a domain-specific auxiliary metric (e.g., “aggressive” vs. “conservative”).

\paragraph{Prompt Contract (Abridged).}
Allowed actions are $\{\texttt{increase},\texttt{decrease},\texttt{no\_change}\}$. The required output format is strictly JSON:
\[\texttt{\{"action":"...", "justification":"..."\}}.\]
The justification must reference the relevant statistics and the auxiliary posture label. Domain-specific instantiations (e.g., using \ac{BLER} as the auxiliary metric) appear in examples in~\Cref{sec:experiment}.

% ===================================================================
\subsubsection{Adaptor (Magnitude, Safety, Persistence)}
\label{app:controller}
% ===================================================================

Given an advisory action $a \in \{\texttt{increase}, \texttt{decrease}, \texttt{no\_change}\}$, the adaptor computes a candidate step size $\Delta b$ using a deadband $d$ and asymmetric gains $(g_{\uparrow}, g_{\downarrow})$:
\[
\Delta b =
\begin{cases}
g_{\downarrow}\max\!\big(0,(b-\bar{x})-d\big), & a=\texttt{decrease},\\[2pt]
g_{\uparrow}\max\!\big(0,(\bar{x}-b)-d\big), & a=\texttt{increase},\\[2pt]
0, & a=\texttt{no\_change}.
\end{cases}
\]
Safety guardrails limit the actuation:
\[
\Delta b \leftarrow \min\{\Delta b,\, s_{\max},\, B_{\text{left}},\, b-b_{\min},\, b_{\max}-b\},\qquad
b \leftarrow \mathrm{clip}(b \pm \Delta b,\, b_{\min},b_{\max}).
\]
Budgets and cooldown counters are updated after each actuation. Final thresholds are written atomically to the \ac{OTM}, ensuring that the optimizer and monitor operate on consistent snapshots. \Cref{sec:experiment} illustrates with concrete examples (e.g., throughput maximization with minimum guarantees per user, or bounds on \ac{BLER}) how this generic mechanism applies across \acp{KPI}.

% ===================================================================
\subsection{Algorithmic Summary and Interfaces}
% ===================================================================

The closed-loop operation of the interpreter agent—integrating monitoring, advisory, and adaptation—is summarized in~\Cref{alg:interpreter_agent}. The procedure shows how the agent detects constraint violations, issues advisory actions, and applies bounded adaptations under guardrails.

\begin{algorithm}[H]
\caption{Interpreter Agent (Monitor $\rightarrow$ Advisor $\rightarrow$ Adaptor)}
\begin{algorithmic}[1]
\State \textbf{Input:} window size $W$; thresholds $(b,\rho_{\mathrm{on}},\rho_{\mathrm{off}})$; guardrails $(d,g_{\uparrow},g_{\downarrow},s_{\max},B,b_{\min},b_{\max},C)$
\For{each step $t$}
  \State Push observation $y_t$ into ring buffer; update $(\bar{y},y_{\min},\mathrm{VR})$
  \If{$\mathrm{VR}>\rho_{\mathrm{on}}$ and not in alert} 
    \State Start episode; reset budget and cooldown
  \EndIf
  \If{in alert}
    \State Build parsed telemetry payload; select action $a$ via rules or \ac{ICL} \ac{SLM}; log rationale $\mathcal{R}$
    \If{$a \neq \texttt{no\_change}$ and cooldown expired and $B_{\text{left}}>0$}
      \State Compute $\Delta b$; apply guardrails; update $b$; persist \ac{OTM}; decrement budget; reset cooldown $C$
    \EndIf
    \If{$\mathrm{VR}<\rho_{\mathrm{off}}$} 
      \State End episode; log summary
    \EndIf
  \EndIf
\EndFor
\end{algorithmic}
\label{alg:interpreter_agent}
\end{algorithm}

% ===================================================================
\subsubsection{Interfaces}
% ===================================================================

\paragraph{(i) From Monitor to Advisor.}
Upon receiving telemetry from the optimizer, the intent monitor produces a compact summary aligned to the \ac{OTM} timescale. This parsed payload becomes the sole input to the \ac{ICL}-based advisory module. An example summary from our experiments is:
\begin{lstlisting}[language=json]
{
  "window":            {
                            "start": 1020, "end": 1139, "W": 12, "violation_ratio": 0.60
                        },
  "constraint_metric": {
                            "name":"throughput", 
                            "avg": 6.92, 
                            "min": 3.08, 
                            "monitor_threshold": 7.00, 
                            "unit":"Mbps"
                        },
  "radio_kpis":        {"bler": {"avg":0.14, "target_hint":0.10}}
}
\end{lstlisting}

\paragraph{(ii) From Advisor to Adaptor.}
The advisory module returns only an \emph{adjustment direction} along with a textual justification, both constrained by the \emph{current \ac{OTM}} used by the optimizer. It never proposes numeric magnitudes. Example output:
\begin{lstlisting}[language=json]
    {
      "action": "decrease",
      "justification": "relax to reduce MCS pressure and HARQ overhead."
    }
\end{lstlisting}

\paragraph{(iii) From Adaptor to \ac{OTM} (atomic).}
The adaptor converts the advisory direction into a bounded step \(\Delta b\), applies guardrails (e.g., clipping to \([b_{\min}, b_{\max}]\)), persists the updated threshold atomically, and records the rationale:
\begin{lstlisting}[language=json]
    {
      "kpi":                "throughput",
      "aggregation":        "min"
      "old_threshold":      7.00,
      "new_threshold":      6.92,
      "delta":              -0.08,
      "episode":            "alert_002",
      "rationale":          "VR=0.60; BLER aggressive"
    }
\end{lstlisting}

% ===================================================================
\subsection{Models}
% ===================================================================

\paragraph{Fine-tuned~\ac{SLM} (Intent-to-\ac{OTM}).} A domain-specialized causal \ac{SLM} is fine-tuned to generate \ac{OTM} JSON directly from natural-language intents. Training uses instruction-style pairs of the form (intent, \ac{OTM}) that adhere to domain schemas (objective, \ac{KPI}, operator, threshold). The model is evaluated using exact-match accuracy and schema validity. This component is implemented using the Qwen-2.5-7B-Instruct model~\citep{QWEN:25} with supervised fine-tuning; additional details are provided in~\Cref{app:C:data_curation}.

\paragraph{ICL-based~\ac{SLM} (Constraint Adaptation).} A general-purpose \ac{SLM}—also based on Qwen-2.5-7B-Instruct~\citep{QWEN:25} but without task-specific weight updates—is prompted with: (i) the allowed actions and guardrails, (ii) policy rules governing the violation ratio (VR) and \ac{KPI} slack/shortfall, (iii) \ac{BLER} posture rules with target hints, and (iv) a strict JSON schema. Outputs are assessed for schema validity, internal consistency (e.g., adherence to policy rules), and justification quality.

% ===================================================================
\subsection{Stability, Safety, and Complexity}\label{appendix:A8_complexity}
% ===================================================================

Guardrails constrain actuation by ensuring that the target parameter $b$ remains within the safe interval $[b_{\min},b_{\max}]$. A hysteresis mechanism further prevents rapid oscillations caused by frequent threshold updates. The computational overhead of the method is minimal: each control step requires constant time $O(1)$, and memory usage grows linearly with the window size $O(W)$. This design minimizes the impact on \ac{RAN} compute and memory resources.

To evaluate the practical performance of the agentic \ac{AI} system for intent management, we report the following metrics:
(i) reduction in violation ratio relative to baseline operation;
(ii) percentage of observation windows that request a change;
(iii) percentage of updates clipped by guardrails;
(iv) validity rate of JSON payloads against the schema;
(v) observed episode lengths; and
(vi) adaptation latency per update.

% ===================================================================
\subsection{Failure Modes and Mitigations}
% ===================================================================

Despite these safeguards, the system remains susceptible to several failure modes. The corresponding mitigation strategies are:
\begin{itemize}
  \item \textbf{Prompt sensitivity:} Malformed or ambiguous payloads may arise from language model outputs. This risk is mitigated through strict schema enforcement, exclusion of unsafe tokens, and regression testing on canonical telemetry payloads.  
  \item \textbf{Distribution shift:} Variations in traffic or channel conditions can create discrepancies between training and deployment distributions. The system addresses this through window normalization and by providing \ac{BLER} posture hints to the model. In extreme cases, the controller can revert to a rules-only mode to preserve stability.
  \item \textbf{Over-actuation:} Excessive threshold adjustments may cause oscillations or instability. To prevent this, the system enforces lifetime update budgets, per-step update caps, cooldown intervals, and explicit floor/ceiling bounds on \(b\). 
  \item \textbf{Explainability drift:} Generated rationales may deviate from the underlying numerical evidence. The advisory module \(\mathcal{R}\) must cite explicit numerical values, and all rationale cards are logged and checked against policy expectations to ensure traceability and consistency.  
\end{itemize}
This section outlines how the interpreter agent determines \emph{when to act}, \emph{how to act and why}, and \emph{to what extent to act}. These behaviors are realized through dual \acp{SLM}, classical control guardrails, and auditable \ac{OTM} persistence.

% ===================================================================
% SECTION 10: 
% ===================================================================
\clearpage
\section{Optimization Template Model}
\label{appendix:B:otm}

\paragraph{Purpose.} The \ac{OTM} defines the contract between the interpreter agent and the downstream optimizer. It (i) specifies the optimization \emph{objective} and the associated \emph{constraints}, including explicit units and aggregation semantics;
(ii) records provenance for auditability (\texttt{origin}, \texttt{modified\_by}); and
(iii) serves as a \emph{living document} that can be safely updated by the adaptor during execution.

\paragraph{Formal view.} Let $\mathcal{X}$ denote the optimizer’s decision space, and let $k(\cdot)$ be a network \ac{KPI} evaluated under an aggregation operator $A$ (e.g., mean, min, $p95$). We define an \ac{OTM} instance as
\begin{align} 
\max_{x\in\mathcal{X}} \; k_{\text{obj}}^{A_{\text{obj}}}(x)
\quad \text{s.t.} \quad
\forall i\in\{1, \cdots, m\}:\;
\begin{cases}
k_i^{A_i}(x) \le b_i & \text{if } \texttt{operator}\in\{\texttt{lt},\texttt{le}\}\\
k_i^{A_i}(x) \ge b_i & \text{if } \texttt{operator}\in\{\texttt{gt},\texttt{ge}\}
\end{cases}\label{eq:OTM}
\end{align}
where each constraint $i$ specifies \texttt{service}, \texttt{kpi}, \texttt{operator}, threshold $b_i$, aggregation $A_i$, units, and scope. In essence, this formulation revisits the optimization~(\ref{eq:optimization}) by rewriting the objective $f(\cdot)$ and the constraints $g_i(\cdot)$ in terms of a more generic \ac{KPI} construct $k(\cdot)$ used in the \ac{OTM} schema.

\subsection{OTM Schema and Domain Semantics}
The \ac{OTM} schema is a minimal versioned JSON contract comprising four blocks, \texttt{objective}, \texttt{constraints}, and \texttt{metadata}, \texttt{version}, characterizing the \ac{OTM} formalism in~\Cref{eq:OTM}.

\begin{lstlisting}[language=json, caption={Generic OTM schema applicable to different RAN control problems.},label={lst:otm_schema}]

{
  "objective": {
    "service": <service_name>,       // {"mbb", "urllc", "gaming", "streaming", slice, ...},  
    "kpi": <kpi_name>,               // {"throughput", "reliability", "latency", ...}
    "scope": <scope_name>,           // {"per_user","per_cell", "per_slice", ...}
    "aggregation": <aggr_name>,      // {"mean","min","max","p95","sum", ...}
    "unit": <unit_name>,             // {"Mbps","Gbps","ms","s","%", ...}
    "maximize": <value>,             // boolean value {true, false}
  },
  "constraints": [
    {   "id": <value>                // string value (e.g., "C1", "C2")
        "service": <service_name>,   // {"mbb", "urllc", "gaming", "streaming", ...},
        "kpi": <kpi_name>,           // {"throughput", "reliability", "latency", ...}
        "scope": <scope_name>,       // {"per_user","per_cell", "per_slice", ...}
        "aggregation": <aggr_name>,  // {"mean","min","max","p95","sum", ...}
        "unit": <unit_name>,         // {"Mbps","Gbps","ms","s","%", ...}
        "operator": <operator_type>, // {"lt","le","ge","gt"}
        "threshold": <value>,        // float expressed in "unit"
        "modified": <value>,         // boolean value {true, false}
    },
    {
        ...
    }
  ],
  "metadata": {
    "otm"{
        "id": <value>                // string value (e.g., "O1", "O2")
        "created_by": <model_id>,    // string value {"SFT_LLM", ...}
        "timestamp": <value>,        // formatted as iso-8601
        "timescale": <value>         // string with window value (e.g., "10s_window")
        }
    "episode: {
        "id": <value>                // string value (e.g., "E1", "E2")
        "episode_type: <type_name>   // {"alert", "alert_resolved", ...}
        "modified_by": <model_id>,   // string value {"ICL_LLM", ...}
        "timestamp": <value>         // formatted as iso-8601
        }
    "adaptation_log": []
  },
  "version": "1.0",
}
\end{lstlisting}
\noindent\footnotesize\emph{Note:} Listings include \texttt{//} comments for readability; they are illustrative and not strict JSON.\normalsize

This structure is simple, yet generic enough to accommodate a wide range of problems, from simple single-service policies and more complex multi-service optimization directives spanning typical mobile traffic types—e.g., URLLC, mMTC, streaming, web, gaming, and voice—each associated with domain-appropriate KPIs (e.g., reliability and latency for URLLC, throughput and jitter for gaming and streaming). Specifically, the \ac{OTM} blocks define:

\begin{itemize}
    \item \textbf{\texttt{objective}}: This block specifies the primary goal to optimize for a plurality of services, including \acp{KPI}, their aggregation level (\texttt{mean}/\texttt{min}/\texttt{max}/\texttt{p95}/\texttt{sum}), scope, unit, and optimization sense (i.e., maximize or minimize). 
    \item \textbf{\texttt{constraints}}: This block encodes optional service-specific \ac{KPI} bounds, each expressed as an inequality \(k^{A}\,\odot\,b\). To this end, it shares the same fields of the \texttt{objective} block, and additionally includes an \texttt{operator} (\texttt{lt}/\texttt{le}/\texttt{ge}/\texttt{gt}) the specifies the relation to a \texttt{threshold} \(b\) expressed in the stated \texttt{unit}. Optionally, it includes fields indicating modification to a service constraint (\texttt{modified}, \texttt{modified\_by}, \texttt{id}).
    \item\textbf{\texttt{metadata}}: This block records \ac{OTM} static in formation, such as provenance, time of origination, etc. and dynamic information related to the last epsode event that triggered a modification of the \ac{OTM} to the  and the aggregation \texttt{timescale}, a \texttt{timestamp}, an \texttt{episode} identifier, and an append-only \texttt{adaptation\_log}.
    \item \textbf{\texttt{version}}: specifies the \ac{OTM} version.
\end{itemize}

\paragraph{OTM fields semantics.}
The \ac{OTM} schema currently requires only 15 fields, some of which are common across the schema blocks:
\begin{itemize}
    \item \textbf{\texttt{service}}: Specifies a service class (e.g., \texttt{mbb}, \texttt{urllc}, \texttt{gaming}, \texttt{streaming}, \texttt{slice}).
    \item \textbf{\texttt{kpi}}: Indicates canonical \acp{KPI} key resolvable by both the telemetry layer and the optimizer (e.g., \texttt{throughput}, \texttt{reliability}, \texttt{latency}, \texttt{bler}).
    \item \textbf{\texttt{scope}}: Indicates a spatial or logical domain related to a \ac{KPI} scope (e.g.,  \texttt{per\_user}, \texttt{per\_user}, \texttt{per\_slice}, \texttt{per\_user\_group}, \texttt{per\_cell\_group}, etc.)
    \item \textbf{\texttt{aggregation}}: Indicates an operator defining how raw samples are aggregated to optimize, evaluate or compare a \ac{KPI} (e.g., \texttt{mean}, \texttt{min}, \texttt{max}, \texttt{sum}, \texttt{$p95$}, etc.).
    \item \textbf{\texttt{unit}}: Indicats the type of \texttt{unit} used for \ac{KPI} or a threshold value (e.g., \texttt{Mbps}, \texttt{Gbps}, \texttt{ms}, \texttt{s}, \texttt{\%}, etc.). 
    \item \textbf{\texttt{operator}}: Defines relational semantics in a constraint like $\ge$, $\leq$, $=$ etc. (e.g., \texttt{le}, \texttt{ge}, \texttt{ge}, \texttt{gt}, \texttt{eq} etc).
    \item \textbf{\texttt{threshold}}: Defines the threshold value $b_i$  associated to a constraint stated \texttt{unit}; for modified constraints, the value is updated atomically by the adaptor.
    \item \textbf{\texttt{maximize}}: Defines the direction of an optimization (can be \texttt{true} or \texttt{false}) 
    \item \textbf{\texttt{id}}: Indicate an identifies associated with \ac{OTM}, a constraint, an event, etc.
    \item \textbf{\texttt{episode\_type}}: Indicate the type of event that caused a revision of the \ac{OTM}.
     \item \textbf{\textbf{\texttt{created\_by}}}: Identifies the model or module that originated the \ac{OTM} 
    \item \textbf{\texttt{modified}}/\textbf{\texttt{modified\_by}}: Indicate whether an \ac{OTM} has been modified and by which model or module
    \item \textbf{\texttt{timestamp}}: Records events times, such as \ac{OTM} creation and modification...
    \item \textbf{\texttt{timescale}}: Indicates monitoring window
\end{itemize}
The field \textbf{\texttt{metadata.adaptation\_log}} is append-only and used to trace updates with $\langle$old, new, $\Delta$, rationale, episode, time$\rangle$. \Cref{app:OTM_schema_example} exemplifies how typical lexical descriptions of service goals or requirements are mapped into the \ac{OTM} schema fiels:

\begin{table}[htbp]
\centering
\scriptsize	
\begin{tabular}{l c c c c c c c}
\toprule
\textbf{Lexical description} &
\multicolumn{7}{c}{\textbf{\ac{OTM} schema field values}}\\
& \ac{KPI} & Unit & Aggregation & Scope & Maximize & Operator & Threshold \\
\midrule 
Maximize mean cell throughput & \texttt{throughput} & \texttt{Mbps} & \texttt{mean} & \texttt{per\_cell} & \texttt{true} & \texttt{--} & \texttt{--}\\
Minimum user rate above 7Mbps & \texttt{throughput} & \texttt{Mbps} & \texttt{min} & \texttt{per\_user} & -- & \texttt{ge} & 7\\
Mean users BLER smaller than 10\%&\texttt{bler} & \texttt{\%} & \texttt{mean}  & \texttt{per\_user}& -- & \texttt{le} & 10\\
95\%-tile users latency less than 10ms&\texttt{latency} & \texttt{ms} & \texttt{p95} & \texttt{per\_user} & -- & \texttt{le} & 10\\
\bottomrule
\end{tabular}
\caption{Examples of \ac{OTM} schema values for typical service definitions.}
\label{app:OTM_schema_example}
\end{table}

\paragraph{Lifecycle and updates.}
The fine-tuned \ac{LLM} creates the initial \ac{OTM} combining connectivity service intents and network operational intents. This includes verifying the \ac{OTM} schema validity prior to hand-off to the optimizer  through a set of rules: (i) All \acp{KPI} must declare units and aggregation; (ii) the \texttt{operator} must be consistent with \ac{KPI} directionality; (iii) thresholds must lie within domain bounds. During the execution, the \ac{ICL}-based advisor may propose an update direction with rationale based on telemetry data. The adaptor then computes thresholds adjustments $\Delta b$ under guardrails, updates the target threshold, and persists the new \ac{OTM} snapshot atomically. Each episode produces a versioned \ac{OTM} with a growing \texttt{adaptation\_log}.

\subsection{Example of OTM adaptation}

\Cref{lst:otm_multi_before} illustrates an \ac{OTM} produced by the fine-tuned LLM.
The \texttt{objective} is to maximise mean throughput (Mbps).
Three constraints are active: \textbf{C1} enforces mean BLER $\le 0.10$ over a per-cell window (unitless ratio); \textbf{C2} caps user-level latency at $20$\,ms using the $p95$ aggregator; and
\textbf{C3} requires a per-cell minimum user throughput of at least $b=7.00$\,Mbps. Provenance marks \textbf{C3} as \texttt{modified\_by: ICL\_LLM}, indicating that its threshold may be adjusted online. The \texttt{metadata} block specifies the aggregation timescale (\texttt{10s\_window}) and records a snapshot \texttt{timestamp}/\texttt{episode}.

\begin{lstlisting}[language=json,caption={Illustrative OTM with multiple constraints, before adaptation.},label={lst:otm_multi_before}]
{
  "version": "1.0",
  "objective": {
    "service": "mbb",
    "kpi": "throughput",
    "aggregation": "mean",
    "unit": "Mbps",
    "maximize": true
  },
  "constraints": [
    {
      "id": "C1",
      "service": "mbb",
      "kpi": "bler",
      "operator": "le",
      "threshold": 0.10,
      "aggregation": "mean",
      "unit": "",
      "scope": "per_cell_window",
      "origin": "fine_tuned_LLM"
    },
    {
      "id": "C2",
      "service": "mbb",
      "kpi": "latency_ms",
      "operator": "le",
      "threshold": 20,
      "aggregation": "p95",
      "unit": "ms",
      "scope": "per_user_window",
      "origin": "fine_tuned_LLM"
    },
    {
      "id": "C3",
      "service": "mbb",
      "kpi": "tpt_min_mbps",
      "operator": "ge",
      "threshold": 7.00,
      "aggregation": "min",
      "unit": "Mbps",
      "scope": "per_cell_window",
      "origin": "fine_tuned_LLM",
      "adapted_by": "ICL_LLM"
    }
  ],
  "metadata": {
    "timescale": "10s_window",
    "timestamp": "2025-09-22T10:20:00Z",
    "episode": "alert_001",
    "adaptation_log": []
  }
}
\end{lstlisting}

At runtime, the sliding-window monitor observes a violation ratio $\mathrm{VR}=0.60$ for \texttt{tpt\_min\_mbps}, together with $\overline{\mathrm{BLER}}=0.14$ (classified as \emph{aggressive} relative to the $0.10$ target). The \ac{ICL}-based advisory selects action \texttt{decrease}; the adaptor computes a clipped update $\Delta b=-0.08$ (subject to caps, budgets, and bounds) and persists the new threshold. \Cref{lst:otm_multi_after} shows the resulting \emph{living} \ac{OTM}: only \textbf{C3} changes ($b:7.00\rightarrow 6.92$\,Mbps), while \textbf{C1} and \textbf{C2} remain unchanged. An \texttt{adaptation\_log} entry documents the update with $\langle\texttt{old\_threshold},\texttt{new\_threshold},\Delta b,\texttt{rationale},\texttt{episode},\texttt{timestamp}\rangle$.

\begin{lstlisting}[language=json,caption={Same OTM after one adaptation of constraint C3.},label={lst:otm_multi_after}]
{
  "version": "1.0",
  "objective": {
    "service": "mbb",
    "kpi": "throughput",
    "aggregation": "mean",
    "unit": "Mbps",
    "maximize": true
  },
  "constraints": [
    {
      "id": "C1",
      "service": "mbb",
      "kpi": "bler",
      "operator": "le",
      "threshold": 0.10,
      "aggregation": "mean",
      "unit": "",
      "scope": "per_cell_window",
      "origin": "fine_tuned_LLM"
    },
    {
      "id": "C2",
      "service": "mbb",
      "kpi": "latency_ms",
      "operator": "le",
      "threshold": 20,
      "aggregation": "p95",
      "unit": "ms",
      "scope": "per_user_window",
      "origin": "fine_tuned_LLM"
    },
    {
      "id": "C3",
      "service": "mbb",
      "kpi": "tpt_min_mbps",
      "operator": "ge",
      "threshold": 6.92,
      "aggregation": "min",
      "unit": "Mbps",
      "scope": "per_cell_window",
      "origin": "fine_tuned_LLM",
      "adapted_by": "ICL_LLM"
    }
  ],
  "metadata": {
    "timescale": "10s_window",
    "timestamp": "2025-09-22T10:28:00Z",
    "episode": "alert_002",
    "adaptation_log": [
      {
        "id": "C3",
        "old_threshold": 7.00,
        "new_threshold": 6.92,
        "delta": -0.08,
        "rationale": "VR=0.60; avg=6.92<b=7.00; BLER posture aggressive; relax b to stabilize HARQ."
      }
    ]
  }
}
\end{lstlisting}

\paragraph{Design rationale.}
The \ac{OTM} is deliberately minimal (objective, constraints, metadata) yet extensible (aggregation, scope, provenance). This ensures interface stability across \ac{RAN} domains while enabling adaptive operation and full auditability of constraint updates.

% ===================================================================
% SECTION 11: Translator SLM Fine-Tuning
% ===================================================================
\section{Translator SLM Fine-Tuning}
\label{app:C:data_curation}

% ===================================================================
% SECTION 11.1: Dataset Curation
% ===================================================================
\subsection{Dataset Curation}

This section describes the methodology used to construct the supervised corpus for training the Intent-to-\ac{OTM} translator, together with a statistical characterization of the resulting dataset. The objective of the curation process is to create a corpus that captures the semantic breadth of natural-language \ac{QoS} intents encountered in operational networks while ensuring strict adherence to the \ac{OTM} schema required for structured policy generation. The design integrates domain knowledge from 5G/6G communication systems, \ac{QoS}-engineering practice, service semantics, and the linguistic variability typical of operator-to-system interactions. The final dataset comprises 90{,}000 samples derived from 30{,}000 distinct \ac{OTM} structures, each paired with three three paraphrased intent utterances.

The construction process is guided by four principles: \emph{schema consistency}, \emph{domain realism}, \emph{linguistic diversity}, and \emph{multi-service generality}. Every instance conforms to the prescribed \ac{OTM} JSON structure to eliminate structural ambiguity. Services, KPIs, thresholds, aggregation functions, and operator semantics are selected to reflect realistic \ac{RAN}-engineering practice rather than arbitrary sampling. Multiple paraphrases express the same underlying intent using different linguistic styles, while both single-service and multi-service formulations are included to reflect realistic optimization scenarios such as cross-slice coordination or heterogeneous multi-tenant workloads. Together, these principles ensure that the model learns not only syntactically correct outputs but also semantically grounded mappings aligned with operational decision-making.

Seven \acp{KPI} central to \ac{QoS} and \ac{QoE} optimization are represented: latency, packet delay budget, jitter, packet error rate, block error rate, throughput, and spectral efficiency. Each \ac{KPI} is characterized by its physical unit, optimization orientation (minimize or maximize), and a plausible operational range. Service-specific threshold distributions are used to maintain realism. URLLC thresholds, for example, are drawn from tight low-delay intervals consistent with ultra-reliable low-latency requirements; gaming jitter values are sampled from moderate-sensitivity ranges; and streaming throughput thresholds reflect bandwidth levels typical of video services. These calibrated ranges ensure that the model encounters thresholds reflective of actual RAN-optimization tasks rather than arbitrary numeric values.

Real-world \ac{QoS} requirements frequently rely on percentile-based performance metrics, and the dataset reflects this by including mean, minimum, maximum, and percentile aggregations from p25 to p99. Sampling is intentionally biased toward domain-appropriate usage: reliability-sensitive KPIs such as latency, jitter, and error rates predominantly use high percentiles (p95 or p99), whereas throughput-oriented \acp{KPI} typically rely on mean values. This probabilistic, domain-aware selection encourages the model to internalize the relationship between service reliability expectations and suitable aggregation choices. Constraint operators are chosen in accordance with \ac{KPI} orientation, with minimization \acp{KPI} paired with $``\leq"$ constraints and maximization \acp{KPI} paired with $``\geq"$. Semantically invalid combinations, such as lower bounds on error rates, are excluded to prevent the model from learning physically implausible relations.

Each \ac{OTM} instance is paired with three natural-language paraphrases produced from four stylistic registers: operator-style technical phrasing, 3GPP-inspired formal language, casual expressions, and terse imperative commands. These stylistic variants emulate the diverse ways in which human operators, analysts, and automated systems articulate \ac{QoS} intents. The paraphrases incorporate synonyms for \acp{KPI} and services, linguistic variations in percentile expressions, and syntactic diversity ranging from multi-sentence descriptions to compact directives. This controlled diversity promotes robustness to real-world phrasing while preserving semantic consistency across paraphrases.

To reflect realistic optimization scenarios in multi-slice and multi-tenant RAN deployments, a controlled fraction of \acp{OTM} include multi-service dependencies in which the optimization objective applies to one service while constraints reference another. These cases emulate common operational patterns such as managing cross-service interference or guaranteeing simultaneous user-experience requirements across heterogeneous traffic types. Their presence strengthens the model’s ability to process complex interdependencies and to generate coherent, jointly feasible policies.

All generated samples include metadata fields such as an ISO-8601 timestamp and an episode identifier. The episode field is fixed to “unspecified” to avoid introducing unintended temporal semantics while maintaining compatibility with future policy-orchestration workflows requiring contextual metadata.

The statistical structure of the corpus reflects these design choices. Each \ac{OTM} specifies one optimization objective and between zero and three constraints consistent with \ac{QoS}-engineering practices in 5G and 6G networks. Because each template is associated with three paraphrases, the full corpus contains 90{,}000 samples. Constraint cardinality follows a non-uniform distribution chosen to represent operator practice: approximately 10\% of \acp{OTM} contain no constraints, 45\% contain one, 30\% contain two, and 15\% contain three. Consequently, roughly 90\% of the corpus includes at least one constraint, and nearly half include multiple constraints. This distribution exposes the model to a broad range of multi-constraint optimization scenarios rather than biasing it toward oversimplified workflows.

Service representation spans eight canonical categories—gaming, streaming, web, messaging, URLLC, mMTC, VoLTE, and VoIP. Sampling is intentionally skewed toward services with stringent \ac{QoS} requirements. URLLC and gaming each account for approximately 20–25\% of \acp{OTM}, streaming contributes about 15\%, voice (VoLTE and VoIP combined) contributes roughly another 15\%, and the remainder corresponds to web, messaging, and mMTC use cases. This distribution ensures adequate coverage across both throughput-oriented and latency-critical traffic classes.

\ac{KPI} coverage is similarly broad: all seven \acp{KPI} appear throughout the dataset following the service-specific domain profiles described above. Over 95\% of reliability-related constraints use high-percentile aggregations (p90–p99), preserving realism in statistical \ac{QoS} modeling. Approximately 12–18\% of \ac{OTM} instances include multi-service dependencies, providing inductive signals for joint optimization patterns common in next-generation RAN automation. Finally, linguistic variation reflects the stylistic sampling weights: operator style (40\%), 3GPP-inspired formal expressions (30\%), casual phrasing (20\%), and terse directives (10\%). This variation enhances generalization to heterogeneous real-world intent expressions while maintaining semantic consistency across samples.

Collectively, the dataset provides extensive coverage of service semantics, KPI behavior, constraint types, and linguistic variation. Its strict structural consistency, calibrated numerical modeling, and broad paraphrastic diversity make it well suited for supervised fine-tuning of models tasked with translating diverse natural-language intents into precise, schema-compliant \ac{OTM} structures aligned with \ac{RAN}-optimization practice.

% ===================================================================
% SECTION 11.2: Training Methodology
% ===================================================================
\subsection{Training Methodology}
\label{app:training-methodology}

The Intent-to-\ac{OTM} translator is trained using supervised fine-tuning on the curated corpus described in~\Cref{app:C:data_curation}. The task is formulated as a conditional sequence-generation problem: given a natural-language intent and a fixed system prompt, the model must produce a complete and structurally valid \ac{OTM} in JSON format. Since the mapping between intents and \acp{OTM} is deterministic and schema-constrained, the training objective emphasizes exact reproduction of field names, values, ordering, and hierarchical structure.

A transformer-based, instruction-tuned model (Qwen-2.5-7B-Instruct~\cite{QWEN:25}) serves as the underlying architecture. Fine-tuning proceeds in a left-to-right autoregressive manner in which each token is generated conditioned on both the input intent and the previously generated output. This preserves the strengths of the pre-trained model while enabling specialization toward domain-specific reasoning over services, \acp{KPI}, and \ac{QoS} constraints.

The optimization pipeline follows established practices for adapting large language models. Stability and parameter-efficiency mechanisms are incorporated, and the specific optimizer settings, learning-rate schedule, and adaptation configuration are reported in~\Cref{tab:sft-hyperparams} of~\Cref{appendix:hyperparams}. These components ensure reliable convergence when generating long, nested JSON structures that are sensitive to single-token variations.

Fine-tuning is performed for a small number of epochs, as the deterministic target format and internal consistency of the dataset enable rapid convergence without significant risk of overfitting. A held-out validation set is used to monitor generalization performance and to detect potential memorization of stylistic artifacts present in the synthetic paraphrases.

Model evaluation combines syntactic and semantic criteria. Token-level accuracy measures fidelity to the target JSON sequence, while a schema-validity check verifies exact compliance with the required \ac{OTM} specification. In addition, semantic alignment metrics assess whether the model correctly identifies the optimization objective, reproduces the appropriate constraints, and matches the complete ground-truth \ac{OTM}. Together, these metrics provide a comprehensive assessment of structured intent translation accuracy.

\begin{table}[th!]
\centering
\caption{Comparison of Evaluation Metrics Between the Fine-Tuned and Baseline Models}
\label{tab:model_comparison}
\renewcommand{\arraystretch}{1.25}
\begin{tabular}{lcc}
\hline
\textbf{Metric} & \textbf{Fine-Tuned Model} & \textbf{Baseline Model} \\
\hline
Total evaluation examples 
    & 1000 
    & 1000 \\
JSON valid rate 
    & 1.000 
    & 1.000 \\
Objective match rate 
    & \textbf{1.000} 
    & 0.450 \\
Constraints match rate 
    & \textbf{0.98} 
    & 0.215 \\
Full \ac{OTM} match rate 
    & \textbf{0.98} 
    & 0.113 \\
Number of constraint mismatches 
    & 20
    & 785 \\
Number of objective mismatches 
    & 0 
    & 550 \\
Full-match examples 
    & 980 
    & 113 \\
\hline
\end{tabular}
\end{table}

% ===================================================================
% SECTION 11.3: Comparative Evaluation
% ===================================================================
\subsection{Comparative Evaluation}

This section presents a comparative evaluation of two models for translating natural-language \ac{QoS} intents into structured \ac{OTM} representations. The \textit{fine-tuned model} is a supervised LoRA-adapted Qwen-2.5-7B-Instruct trained on a curated corpus of 90{,}000 intent--\ac{OTM} pairs (see~\Cref{app:training-methodology}), whereas the \textit{baseline model} is the unmodified Qwen-2.5-7B-Instruct. Both models were evaluated on a held-out test set of 1{,}000 examples under a strict schema-constrained matching protocol.

As summarized in~\Cref{tab:model_comparison}, the fine-tuned model demonstrates near-perfect structural and semantic adherence to the \ac{OTM} schema, achieving 100\% JSON validity, 100\% objective correctness, and a 98\% full \ac{OTM} match rate. In contrast, the baseline model---although also producing syntactically valid JSON in every case---achieves only 45\% objective match and 11.3\% full \ac{OTM} match. These results indicate that prompt-only use of a generic instruction-tuned model is insufficient for reliable schema-grounded semantic parsing.

The most pronounced disparity appears in the reconstruction of constraint sets. The fine-tuned model correctly predicts the complete constraint set---including KPI type, operator, aggregation level, threshold, and service scope---in 98\% of cases. The baseline model succeeds in only 21.5\% of examples, frequently selecting incorrect \acp{KPI}, operators, or units despite emitting valid JSON. As shown in~\Cref{tab:error-comparison}, the baseline model exhibits severe error inflation, particularly in the ``missing constraint’’ (724 occurrences) and ``extra constraint’’ (737 occurrences) categories. It is worth noting that a single erroneous sample may contribute to multiple error categories simultaneously. In contrast, the fine-tuned model’s residual errors remain modest and are concentrated primarily in scope mismatches (50\% of erroneous samples), with all other categories occurring infrequently.

Two complementary mechanisms explain the fine-tuned model’s performance advantage. First, supervised adaptation aligns the model’s internal representation with the deterministic structure of the \ac{OTM} schema. Although the baseline model possesses broad linguistic and domain knowledge, it lacks incentives to prioritize schema-specific conventions such as canonical KPI naming, service--KPI associations, valid threshold ranges, and operator semantics (e.g., $\leq$ for reliability-oriented \acp{KPI}). Fine-tuning effectively anchors the model’s output distribution to the \ac{OTM} schema.

Second, the curated dataset encodes domain priors that are internalized during training. For example, URLLC latency values cluster in the 1--10~ms range; gaming intents commonly include jitter constraints with p95 aggregation; streaming intents typically optimize for throughput; and error-rate \acp{KPI} almost always appear with $\leq$ operators. Lacking these priors, the baseline model frequently produces semantically plausible but non-canonical KPI selections, percentile aggregations, or threshold values.

Importantly, the fine-tuned model produces no objective-field errors (0 occurrences in~\Cref{tab:error-comparison}), indicating complete mastery of service--objective alignment. Its remaining errors occur almost exclusively within the constraint block and are typically minor: substituting p90 for p95, small numerical deviations around thresholds, or occasional variations in service scope. These contrast sharply with the baseline model’s structurally inconsistent outputs, which reflect a lack of grounding in the semantics encoded by the \ac{OTM} schema.

Taken together,~\Cref{tab:model_comparison,tab:error-comparison} show that supervised, domain-specific fine-tuning is essential for this task. Although both models generate syntactically valid JSON, only the fine-tuned model functions as a high-precision compiler from natural-language intents to machine-interpretable \ac{OTM} structures. For practical deployment in autonomous RAN-optimization pipelines, relying solely on prompting a pretrained instruction-tuned model is insufficient; structured domain adaptation is required to ensure correctness, robustness, and operational safety.

\begin{table}[t]
\centering
\caption{Comparison of Error Categories Between the Fine-Tuned and Baseline Models}
\label{tab:error-comparison}
\renewcommand{\arraystretch}{1.25}
\begin{tabular}{lcccc}
\toprule
\multirow{2}{*}{\textbf{Error Category}} 
    & \multicolumn{2}{c}{\textbf{Fine-Tuned Model}}
    & \multicolumn{2}{c}{\textbf{Baseline Model}} \\
\cmidrule(lr){2-3} \cmidrule(lr){4-5}
& \textbf{Count} & \textbf{Percentage}
& \textbf{Count} & \textbf{Percentage} \\
\midrule
Scope mismatch          & 10  & 50\%   & 76   & 9.7\% \\
Missing constraint      & 4   & 25\%   & 724  & 92.2\% \\
Extra constraint        & 5   & 20\%   & 737  & 93.9\% \\
Threshold mismatch      & 2   & 10\%   & 60   & 7.6\%  \\
Operator mismatch       & 2   & 10\%   & 68   & 8.7\%  \\
Aggregation mismatch    & 2   & 10\%   & 20   & 2.5\%  \\
Count mismatch          & 1   & 5\%    & 72   & 9.2\%  \\
\bottomrule
\end{tabular}

\vspace{2mm}
\footnotesize
\end{table}

% ===================================================================
% SECTION 12: Optimizer Agent Design
% ===================================================================

%-------------------------------------------------------------------------
\clearpage
\section{Optimizer Agent Design}
\label{appendix:Bayesian_optim}
%-------------------------------------------------------------------------

%-------------------------------------------------------------------------
\subsection{Bayesian Optimization} 
%-------------------------------------------------------------------------
We consider the problem of minimizing an unknown objective function $f : \mathcal{X} \to \mathbb{R}$, over a domain $\mathcal{X}\subseteq \mathbb{R}^d$. The goal is to identify  
\[
x^{\star} = \arg \min_{x \in \mathcal{X}} f(x).
\]  
In classical numerical optimization, one distinguishes between \emph{global optimization}, where the absolute minimum of $f$ is sought, and \emph{local optimization}, where the search is restricted to neighborhoods of an initial point. If $f$ is convex and $\mathcal{X}$ is a convex set, the problem reduces to convex optimization, for which efficient algorithms exist.  

However, in many modern applications, $f$ is a \emph{black-box function}, meaning that its closed-form expression is unavailable and evaluations are costly (e.g., expensive simulations or training machine learning models). In such cases, \emph{Bayesian optimization (BO)} provides an efficient framework for global optimization by maintaining a probabilistic model of $f$ and selecting query points via a surrogate criterion known as an \emph{acquisition function}.

\subsubsection{Gaussian Process Priors}

Bayesian optimization typically employs a \emph{Gaussian process (GP)} prior to model the unknown function $f$. A GP is defined by a mean function $\mu(x)$ and covariance function $K(x, x')$:  
\[
p(f) = \mathcal{GP}(f; \mu, K).
\]  
Given a set of noiseless observations 
\[
\mathcal{D} = \{(x_i, f(x_i))\}_{i=1}^n,
\]  
the posterior distribution over $f$ is again a GP with updated mean and covariance functions $\mu_{f|\mathcal{D}}(x)$ and $K_{f|\mathcal{D}}(x,x')$. This posterior provides both a predictive mean (exploitation) and predictive uncertainty (exploration), which form the basis of acquisition functions.

\subsubsection{Acquisition Functions}

An \emph{acquisition function} $a(x)$ encodes the utility of evaluating $f$ at a candidate point $x$. Since acquisition functions are cheap to evaluate, the optimization problem is reduced to  
\[
x_{t+1} = \arg \max_{x \in \mathcal{X}} a(x),
\]  
or its minimization equivalent. Common acquisition functions include:

\paragraph{\Ac{PI}:}  
Selects points with the highest probability of improving upon the best observed value $f'$:  
\[
a_{\text{PI}}(x) = \Phi\!\left(\frac{f' - \mu(x)}{\sigma(x)}\right),
\]  
where $\Phi$ is the Gaussian \ac{CDF}.

\paragraph{\Ac{EI}:}  
Accounts for the \emph{magnitude} of improvement:  
\[
a_{\text{EI}}(x) = (f' - \mu(x)) \, \Phi\!\left(\frac{f' - \mu(x)}{\sigma(x)}\right) 
+ \sigma(x) \, \phi\!\left(\frac{f' - \mu(x)}{\sigma(x)}\right),
\]  
where $\phi$ is the Gaussian \ac{PDF}. This criterion naturally balances exploration ($\sigma(x)$) and exploitation ($\mu(x)$).

\paragraph{\Ac{ES}}  
Reduces uncertainty about the optimizer’s location by minimizing the entropy of the distribution $p(x^\ast|\mathcal{D})$. While analytically intractable, approximations make this approach feasible.

\paragraph{\Ac{UCB}:}  
Promotes exploration via an optimism-in-the-face-of-uncertainty principle:  
\[
a_{\mathrm{UCB}}(x; \beta) = \mu(x) - \beta \sigma(x),
\]  
where $\beta > 0$ is a tunable trade-off parameter. Despite lacking an expected-utility interpretation, \ac{UCB} has strong theoretical guarantees for asymptotic convergence to the global optimum.

\medskip
This Bayesian decision-theoretic framework provides a principled way to trade off exploration and exploitation, making \ac{BO} a powerful tool for solving expensive black-box optimization problems.

%-------------------------------------------------------------------------
\subsection{PAX-BO: Preference-Aligned eXploration Bayesian Optimization}
%-------------------------------------------------------------------------

We have 

\begin{align*}
    \Delta^{d-1}=\{ \omega\in\mathbb{R}_{\geq 0}^{d} : \mathbf{1}^{\top}\omega = 1 \}
\end{align*}

% PAX-BO: Preference-Aligned, Constraint-Aware BO on the Simplex (with w/W notation)

\paragraph{Decision space and projection.}
We have $S$ services $\mathcal{S}=\{1,\dots,S\}$ and preference dimension $d$.
The internal (unconstrained) variable is
\[
U=\big[u^{(1)},\dots,u^{(S)}\big]\in\mathbb{R}^{d\times S},\qquad
\bar u=\mathrm{vec}(U)\in\mathbb{R}^{dS}.
\]
Each service $s$ uses a probability-simplex preference
\[
\mathbf{w}^{(s)} \in \Delta^{d-1}
\quad\text{with}\quad
\Delta^{d-1}=\{\mathbf{w}\in\mathbb{R}^d_{\ge 0}:\ \mathbf{1}^\top\mathbf{w}=1\},
\]
obtained by the columnwise Euclidean simplex projection
\[
\mathbf{w}^{(s)}=\Pi_\Delta\!\big(u^{(s)}\big),\qquad
\mathbf{W}(U)=\big[\mathbf{w}^{(1)},\dots,\mathbf{w}^{(S)}\big]\in(\Delta^{d-1})^S.
\]
Closed form for $\Pi_\Delta$: sort $u$ in descending order, find the threshold $\theta$, and set
$\Pi_\Delta(u)=\max(u-\theta\mathbf{1},0)$ with $\mathbf{1}^\top \Pi_\Delta(u)=1$.

\paragraph{Objective, constraints, and data.}
We optimize a single objective $f:(\Delta^{d-1})^S\to\mathbb{R}$ subject to scalar constraints
$\{g_i:(\Delta^{d-1})^S\to\mathbb{R}\}_{i=1}^p$. A configuration $U$ is feasible iff
\[
g_i\!\big(\mathbf{W}(U)\big)\le 0,\qquad i=1,\dots,p.
\]
At iteration $t$, we evaluate at $U_{t-1}$ and observe
\[
o_t=f\!\big(\mathbf{W}(U_{t-1})\big),\qquad
c_t^{(i)}=g_i\!\big(\mathbf{W}(U_{t-1})\big)\ (i=1,\dots,p),
\]
forming the dataset
\[
\mathcal{D}_t=\Big\{(\bar u_k,\;o_k,\;c_k^{(1)},\dots,c_k^{(p)})\Big\}_{k=1}^{t}.
\]

\paragraph{Surrogates and acquisition in $U$-space.}
Fit surrogates for the compositions
\[
\mathcal{F}(\bar u)\approx f\!\big(\mathbf{W}(U)\big),\qquad
\mathcal{G}_i(\bar u)\approx g_i\!\big(\mathbf{W}(U)\big)\ (i=1,\dots,p).
\]
Let the incumbent best feasible value be
\[
f_{t-1}^\star=\max\{\,o_k:\ c_k^{(i)}\le 0\ \forall i,\ k\le t-1\,\}\quad(\text{use }-\infty\text{ if none}).
\]
Define the acquisition on $\bar u$ (e.g., constrained Log-EI) by
\[
\alpha(\bar u)=\mathrm{ACQ}\!\left(\mathcal{F},\{\mathcal{G}_i\};\ \bar u,\ f_{t-1}^\star\right)
\approx \mathrm{LogEI}\big(\mu_{\mathcal{F}}(\bar u),\sigma_{\mathcal{F}}(\bar u); f_{t-1}^\star\big)
\times \prod_{i=1}^p \Phi\!\Big(-\mu_{\mathcal{G}_i}(\bar u)/\sigma_{\mathcal{G}_i}(\bar u)\Big).
\]

\paragraph{Single trust region (TR) in $U$-space.}
Maintain center $s_c\in\mathbb{R}^{dS}$ and radius $L>0$ (half side-length in $\ell_\infty$), giving the box
\[
\mathcal{B}_t=\big\{\bar v\in\mathbb{R}^{dS}:\ \|\bar v-s_c\|_\infty\le L\big\}.
\]
Let $\kappa_s,\kappa_f,\kappa_\ell$ be the success, failure, and “stuck-at-floor” counters.
Parameters: $L_0$ (initial), $L_{\min}$ (floor), $L_{\max}$ (cap); thresholds $s_{\mathrm{th}}$ (expand), $f_{\mathrm{th}}$ (shrink); tolerance $\epsilon>0$.

\paragraph{Local proposal (inner step).}
Choose the next internal point by maximizing $\alpha$ within the TR:
\[
\bar u_t\in\arg\max_{\bar v\in\mathcal{B}_t}\ \alpha(\bar v),\qquad
U_t=\mathrm{mat}(\bar u_t),\qquad
\mathbf{W}_t=\mathbf{W}(U_t)=\Pi_\Delta(U_t).
\]

\paragraph{Success test and TR adaptation.}
After executing $\bar u_{t-1}$, define the success flag
\[
\mathrm{SF}_t=\Big(\forall i:\ c_t^{(i)}\le 0\Big)\ \wedge\ \Big(o_t\ge f_{t-1}^\star+\epsilon\Big).
\]
If $\mathrm{SF}_t$:
\[
f_t^\star \leftarrow o_t,\quad s_c \leftarrow \bar u_{t-1},\quad
\kappa_s \leftarrow \kappa_s+1,\ \kappa_f\leftarrow 0,\ \kappa_\ell\leftarrow 0,
\]
and if $\kappa_s\ge s_{\mathrm{th}}$ then
\[
L \leftarrow \min(2L,L_{\max}),\qquad \kappa_s\leftarrow 0,\ \kappa_f\leftarrow 0.
\]
Else (failure):
\[
f_t^\star \leftarrow f_{t-1}^\star,\quad
\kappa_f \leftarrow \kappa_f+1,\ \kappa_s\leftarrow 0,
\]
and if $\kappa_f\ge f_{\mathrm{th}}$ then
\[
L \leftarrow \max(L/2,L_{\min}),\qquad
\kappa_s\leftarrow 0,\ \kappa_f\leftarrow 0,\qquad
\kappa_\ell \leftarrow \kappa_\ell + \mathbf{1}\{L=L_{\min}\}.
\]

\paragraph{Smart reset (escape when stuck).}
If $L=L_{\min}$ and $\kappa_\ell\ge w$, draw $n$ candidates on the product simplex:
$\{\mathbf{W}^{(j)}\in(\Delta^{d-1})^S\}_{j=1}^n$ (e.g., Dirichlet/QMC per column), and set
$\bar u^{(j)}=\mathrm{vec}(\mathbf{W}^{(j)})$.
For each $j$ compute
\[
\alpha^{(j)}=\alpha(\bar u^{(j)}),\quad
P_{\mathrm{feas}}^{(j)}=\prod_{i=1}^p \Phi\!\Big(-\mu_{\mathcal{G}_i}(\bar u^{(j)})/\sigma_{\mathcal{G}_i}(\bar u^{(j)})\Big),\quad
d^{(j)}=\min_{k\le t}\ \big\|\mathbf{W}^{(j)}-\mathbf{W}(U_k)\big\|_F.
\]
Normalize $z\in\{\alpha,P_{\mathrm{feas}},d\}$ to $\tilde z^{(j)}\in[0,1]$ and score
\[
\mathrm{score}^{(j)}=\tilde\alpha^{(j)}\,\tilde P_{\mathrm{feas}}^{(j)}\,\big(\tilde d^{(j)}\big)^{\beta}.
\]
Choose $j^\star=\arg\max_j \mathrm{score}^{(j)}$ and reset
\[
s_c\leftarrow \bar u^{(j^\star)},\qquad L\leftarrow L_0,\qquad
\kappa_s,\kappa_f,\kappa_\ell\leftarrow 0.
\]

\paragraph{Action selection (vector-valued $Q$).}
Given state $s_t$ and requested service $\sigma_t$, act by preference-aligned scalarization:
\[
a_t\in\arg\max_{a\in\mathcal{A}}\ \langle Q(s_t,a),\ \mathbf{w}_t^{(\sigma_t)}\rangle,
\qquad \mathbf{w}_t^{(\sigma_t)}=\Pi_\Delta\!\big(u_t^{(\sigma_t)}\big).
\]

\paragraph{Remarks.}
(i) All modeling and optimization happens in the unconstrained $\bar u$-space; feasibility is enforced by the projection $\mathbf{W}(U)=\Pi_\Delta(U)$.
(ii) The single TR stabilizes steps; expand/shrink is governed by $(s_{\mathrm{th}},f_{\mathrm{th}})$ and improvement tolerance $\epsilon$.
(iii) The smart reset proposes diverse, high-acquisition, high-feasibility candidates directly on $(\Delta^{d-1})^S$.

\begin{algorithm}[!thb]
\footnotesize
\caption{PAX-BO — Preference-Aligned eXploration Bayesian Optimization}
\label{alg:paxbo}
\textbf{Objects.}
\begin{itemize}\itemsep2pt
\item Services $\mathcal{S}\triangleq\{1,\dots,S\}$, preference dimension $d$.
\item Internal (unconstrained) variables $U=[u^{(1)},\dots,u^{(S)}]\in\mathbb{R}^{d\times S}$; vectorization $\bar u=\mathrm{vec}(U)\in\mathbb{R}^{dS}$.
\item Columnwise simplex projection: $p^{(s)}=\Pi_\Delta(u^{(s)})\in\Delta^{d-1}$; stack $P(U)=[p^{(1)},\dots,p^{(S)}]\in(\Delta^{d-1})^S$.
\item Single objective $f:(\Delta^{d-1})^S\!\to\!\mathbb{R}$; constraints are \emph{scalar} functions $\{g_i:(\Delta^{d-1})^S\!\to\!\mathbb{R}\}_{i=1}^p$; feasible iff $g_i(P)\le 0\ \forall i$.
\item Surrogates (e.g., GPs) model compositions $\mathcal{F}(\bar u)\approx f(P(U))$ and $\mathcal{G}_i(\bar u)\approx g_i(P(U))$ for $i=1,\dots,p$.
\item Best feasible value $f_t^\star=\max\{o_j:\,g_i(P_j)\le 0\ \forall i\}$ (use $-\infty$ if none).
\item Acquisition on $U$: $\alpha(\bar u)=\mathrm{ACQ}\!\big(\mathcal{F},\{\mathcal{G}_i\}_{i=1}^p;\,\bar u,\,f_t^\star\big)$ (e.g., constrained Log-EI).
\end{itemize}

\textbf{Projection.} For $u\in\mathbb{R}^d$: $\Pi_\Delta(u)=\arg\min_{p\in\Delta^{d-1}}\|p-u\|_2$, closed form: sort/threshold; $p=\max(u-\theta\mathbf{1},0)$ with $\mathbf{1}^\top p=1$.

\textbf{Single trust region in $u$-space.}
Center $s_c\in\mathbb{R}^{dS}$, radius $L>0$, box $\mathcal{B}\triangleq\{\bar v:\ \|\bar v-s_c\|_\infty\le L\}$.
Counters: successes $\kappa_s$, failures $\kappa_f$, stuck-at-$L_{\min}$ $\kappa_\ell$.

\begin{algorithmic}[1]
\State \textbf{Given:} window $W$; trust–region radii $L_0$ (initial), $L_{\min}$ (shrink floor), $L_{\max}$ (expansion cap); thresholds $s_{\mathrm{th}}$ (successes to expand), $f_{\mathrm{th}}$ (failures to shrink); tolerance $\epsilon$; reset window $w$; candidate count $n$; diversity exponent $\beta$
\State \textbf{Init:} choose $U_0$; $P_0=\Pi_\Delta(U_0)$; $s_c\gets\mathrm{vec}(U_0)$; $L\gets L_0$; $\kappa_s\gets 0$; $\kappa_f\gets 0$; $\kappa_\ell\gets 0$; $\mathcal{D}_0\gets\emptyset$
\For{$t=1,2,\dots$}
  \State \textbf{Evaluate} at $\bar u_{t-1}$: $P_{t-1}=\Pi_\Delta(U_{t-1})$; observe $o_t=f(P_{t-1})$ and $c_t^{(i)}=g_i(P_{t-1})$ for $i=1,\dots,p$
  \State \textbf{Update data} $\mathcal{D}_t=\mathcal{D}_{t-1}\cup\{(\bar u_{t-1},o_t,(c_t^{(i)})_{i=1}^p)\}$; keep last $W$; refit $\mathcal{F},\{\mathcal{G}_i\}$ on $(\bar u,o,(c^{(i)}))$; compute $f_{t-1}^\star$
  \State \textbf{Suggest next $U$ (local TR maximization)}
    \[
      \bar u_t \in \arg\max_{\bar v\in \mathcal{B}}\ \alpha(\bar v),\qquad U_t=\mathrm{mat}(\bar u_t),\qquad P_t=\Pi_\Delta(U_t).
    \]
  \State \textbf{TR update} with success flag $\mathrm{SF}\leftarrow \Big(\forall i:\ c_t^{(i)}\le 0\Big)\ \wedge\ \Big(o_t\ge f_{t-1}^\star+\epsilon\Big)$
  \If{$\mathrm{SF}$}
    \State $f_t^\star\gets o_t$;\ $s_c\gets \bar u_{t-1}$;\ $\kappa_s\gets \kappa_s + 1$;\ $\kappa_f\gets 0$;\ $\kappa_\ell\gets 0$
    \If{$\kappa_s\ge s_{\mathrm{th}}$} $L\gets \min(2L,L_{\max})$; $\kappa_s\gets 0$; $\kappa_f\gets 0$ \EndIf
  \Else
    \State $\kappa_f\gets \kappa_f + 1$;\ $\kappa_s\gets 0$
    \If{$\kappa_f\ge f_{\mathrm{th}}$}
      \State $L\gets \max\!\big(L/2,\,L_{\min}\big)$; $\kappa_s\gets 0$; $\kappa_f\gets 0$; \If{$L=L_{\min}$} $\kappa_\ell\gets \kappa_\ell + 1$ \EndIf
    \EndIf
  \EndIf
  \State \textbf{Smart reset (if stuck).} If $L=L_{\min}$ and $\kappa_\ell\ge w$:
    \State \hskip1em Sample $n$ candidates $P^{(j)}\in(\Delta^{d-1})^S$ (Dirichlet/QMC per service); set $\bar u^{(j)}=\mathrm{vec}(P^{(j)})$
    \State \hskip1em For each $j$: compute $\alpha(\bar u^{(j)})$;\quad $P_{\mathrm{feas}}^{(j)}=\prod_{i=1}^p \Phi\!\Big(-\mu_{\mathcal{G}_i}(\bar u^{(j)})/\sigma_{\mathcal{G}_i}(\bar u^{(j)})\Big)$;
                     \quad $d^{(j)}=\min_{(\bar u_k,\cdot,\cdot)\in\mathcal{D}_t}\|P^{(j)}-P_k\|_F$
    \State \hskip1em Normalize $\tilde z=\dfrac{z-\min z}{\max z-\min z+\varepsilon}$ for $z\in\{\alpha,P_{\mathrm{feas}},d\}$;\quad $\mathrm{score}^{(j)}=\tilde\alpha^{(j)}\,\tilde P_{\mathrm{feas}}^{(j)}\,\big(\tilde d^{(j)}\big)^{\beta}$
    \State \hskip1em Set $s_c\gets \bar u^{(j^\star)}$ where $j^\star=\arg\max_j \mathrm{score}^{(j)}$; $L\gets L_0$; $\kappa_s\gets 0$; $\kappa_f\gets 0$; $\kappa_\ell\gets 0$
  \State \textbf{Action (vector-valued $Q$).} For state $s_t$ and requested service $\sigma_t$:
    \[
      a_t \in \arg\max_{a\in\mathcal{A}} \ \langle Q(s_t,a,p_t^{(\sigma_t)}),\ p_t^{(\sigma_t)}\rangle
      \quad\text{with}\quad p_t^{(\sigma_t)}=\Pi_\Delta\!\big(u_t^{(\sigma_t)}\big).
    \]
\EndFor
\end{algorithmic}
\end{algorithm}

% ===================================================================
% SECTION 13: Multi-Objective Reinforcement Learning
% ===================================================================
\clearpage
\section{Multi-Objective Reinforcement Learning}
\label{appendix:MORL_prel}

\subsection{Multi-Objective Markov Decision Process}\label{subsec:MOMDP}

A \ac{MOMDP} extends the traditional \ac{MDP} framework \citep{Puterman1994, SuttonBarto2018} by considering not just one, but multiple objectives, which may conflict with each other \citep{RVW+:13, YSN:19}. Within this framework, an agent seeks to optimize several reward functions simultaneously, each corresponding to a different objective. These objectives can either conflict or complement each other; thus, improvements in one may adversely affect another or contribute positively to shared goals. The primary goal of an \ac{MOMDP} is to derive a policy that achieves an optimal balance among multiple objectives. This trade-off is typically represented by a Pareto front comprising a set of optimal policies such that no policy can outperform another across all objectives simultaneously, thereby making them non-dominated \citep{RVW+:13, MoN:14}.

Formally, an \ac{MOMDP} is defined by the tuple $\left\langle \mathcal{S}, \mathcal{A}, \mathcal{P}, \vr, \Omega, f_{\omega}, \gamma\right\rangle$, where $\mathcal{S}$ denotes the state space, $\mathcal{A}$ the action space, $\mathcal{P}(s' \mid s,a)$ the transition probabilities, and $\gamma\in [0, 1)$ a discount factor. The vector $\vr = [r_{1}, r_{2}, \cdots, r_{m}]^{\top}$ represents the $m$-dimensional reward vector, and we assume the preference space $\Omega$ to be the standard $(m-1)$-dimensional simplex (probability simplex), defined~as
\begin{align}
\Delta^{m-1} = \left\{ \boldsymbol{\omega}\in\mathbb{R}^{m} : \sum_{i=1}^{m}\omega_{i}=1 ~ \textrm{and} ~ \omega_{i}\geq 0 ~ \textrm{for} ~ i =1, \cdots, m \right\}. \label{eqn:simplex}
\end{align}
Here, each preference vector $\boldsymbol{\omega}\in\Omega$ assigns a normalized non-negative weight to each objective, reflecting its relative importance. We focus on a class of \acp{MOMDP} with a \emph{linear preference function}, in which a scalarization function $f_{\omega}(\vr)=\boldsymbol{\omega}^{\top}\vr$ converts the reward vector into a scalar return using the preference vector $\boldsymbol{\omega}$~\citep{RVW+:13, Hayes:22}. The cumulative expected return under a policy $\pi$ is then given by
\[
\hat{\vr}^{\pi} = \mathbb{E}\!\left[\sum_{t=0}^{\infty} \gamma^{t} \vr(s_{t}, a_{t}) \;\middle|\; \pi \right].
\]

\begin{observation}
When the preference vector $\boldsymbol{\omega}\in\Omega$  is fixed, an \ac{MOMDP} reduces to a standard \ac{MDP} with scalar rewards.
\end{observation}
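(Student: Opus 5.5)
The plan is to exhibit the reduced \ac{MDP} explicitly and check that it meets every requirement of a standard scalar-reward \ac{MDP}. Fix $\boldsymbol{\omega}\in\Omega$ and form the tuple $\langle \mathcal{S},\mathcal{A},\mathcal{P},r_{\boldsymbol{\omega}},\gamma\rangle$. It keeps the state space, action space, transition kernel $\mathcal{P}(s'\mid s,a)$ and discount $\gamma\in[0,1)$ of the \ac{MOMDP}, and uses the scalar reward
\[
r_{\boldsymbol{\omega}}(s,a) \;=\; f_{\omega}\bigl(\vr(s,a)\bigr) \;=\; \boldsymbol{\omega}^{\top}\vr(s,a) \;=\; \sum_{i=1}^{m}\omega_i r_i(s,a).
\]
None of the dynamics involve $\boldsymbol{\omega}$, and $\Omega$ and $f_\omega$ enter the tuple only through this reward. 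So once $\boldsymbol{\omega}$ is fixed, they are absorbed into $r_{\boldsymbol{\omega}}$.

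The first check is well-definedness. $r_{\boldsymbol{\omega}}$ is a deterministic function $\mathcal{S}\times\mathcal{A}\to\mathbb{R}$, since it is a finite linear combination of the $r_i$. It is bounded whenever each $r_i$ is bounded, because $|r_{\boldsymbol{\omega}}|\le \max_i |r_i|$ (the weights are nonnegative and sum to one). Hence the standard existence theory for discounted \acp{MDP} applies unchanged.

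The second check is that the objectives coincide, so the reduction preserves what is being optimized and not only the syntax. For any policy $\pi$, linearity of expectation and of the inner product gives
\[
\boldsymbol{\omega}^{\top}\hat{\vr}^{\pi} \;=\; \mathbb{E}\!\left[\sum_{t=0}^{\infty}\gamma^t \boldsymbol{\omega}^{\top}\vr(s_t,a_t)\,\middle|\,\pi\right] \;=\; V^{\pi}_{r_{\boldsymbol{\omega}}}.
\]
The left side is the scalarized multi-objective return and the right side is the scalar value of $\pi$ in the reduced \ac{MDP}. The same identity holds for the action-value functions: $Q_{\boldsymbol{\omega}}(s,a)=\boldsymbol{\omega}^\top Q(s,a)$. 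Consequently, a policy maximizes $f_\omega(\hat{\vr}^\pi)$ exactly when it is optimal for the reduced \ac{MDP}, and the Bellman optimality equation for $r_{\boldsymbol{\omega}}$ governs it.

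The only delicate step is exchanging the inner product with the infinite discounted sum and the expectation. I would justify it by boundedness of the rewards together with $\gamma<1$, which gives absolute convergence and lets dominated convergence apply, or alternatively by finiteness of $\sum_t\gamma^t\mathbb{E}|r_i|$. Beyond that the statement is essentially definitional.
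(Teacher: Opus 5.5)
Your proposal is correct and takes the same definitional route the paper intends. The paper states the observation without proof, as an immediate consequence of the linear scalarization $f_{\omega}(\boldsymbol{r})=\boldsymbol{\omega}^{\top}\boldsymbol{r}$ (keep $\mathcal{S},\mathcal{A},\mathcal{P},\gamma$ and use the scalar reward $\boldsymbol{\omega}^{\top}\boldsymbol{r}(s,a)$); your linearity identity $\boldsymbol{\omega}^{\top}\hat{\boldsymbol{r}}^{\pi}=V^{\pi}_{r_{\boldsymbol{\omega}}}$ and the boundedness/interchange justification add harmless rigor, with one small caveat: your ``exactly when'' equivalence holds if ``optimal for the reduced MDP'' is read relative to the same initial-state distribution that defines $\hat{\boldsymbol{r}}^{\pi}$, not as optimality from every state.
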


\begin{remark}[Interpretation of Linear Scalarization]
The scalarization function $f_{\omega}(\vr)=\boldsymbol{\omega}^{\top}\vr$ can be interpreted as an expectation. 
If $\vr = [r_1, r_2, \dots, r_m]^{\top}$ is the reward vector and 
$\boldsymbol{\omega} = [w_1, w_2, \dots, w_m]^{\top}$ is a weight vector with 
$\sum_{i=1}^{m} w_i = 1$ and $w_i \geq 0$, then
\[
r_s = \boldsymbol{\omega}^{\top}\vr = \sum_{i=1}^{m} w_i r_i 
     = \mathbb{E}_{i \sim \boldsymbol{\omega}}[r_i].
\]
Thus, $\boldsymbol{\omega}$ can be interpreted as a probability distribution over objectives, 
and the function $f_{\omega}$ corresponds to the expected reward under this distribution~\citep{RVW+:13}.
\end{remark}

\begin{example}[Multi-Objective Q-Learning]
In traditional single-objective Q-learning, the agent estimates a scalar Q-value for each state--action pair,
\[
Q(s,a) = \mathbb{E}\!\left[\sum_{t=0}^{\infty} \gamma^t r(s_t,a_t) \;\middle|\; s_0=s, a_0=a \right].
\]
In contrast, in multi-objective reinforcement learning, each action may yield a \emph{vector of rewards} 
corresponding to different objectives  \citep{RVW+:13, NNN:18}.

For example, in a self-driving car scenario, the agent may consider:
\begin{itemize}
    \item $r_{\text{speed}}$: how fast the car goes,
    \item $r_{\text{fuel}}$: fuel efficiency,
    \item $r_{\text{safety}}$: safety score.
\end{itemize}

Thus, the Q-value function becomes vector-valued:
\[
Q(s,a) = 
\begin{bmatrix}
Q_{\text{speed}}(s,a) \\
Q_{\text{fuel}}(s,a) \\
Q_{\text{safety}}(s,a)
\end{bmatrix} \in \mathbb{R}^{3}.
\]

Each decision-maker may have different trade-offs between these objectives, expressed as a \emph{preference vector} 
$\boldsymbol{\omega} \in \Omega = \Delta^{2}$. For instance, $\boldsymbol{\omega} = [0.7, 0.2, 0.1]^{\top}$ indicates $70\%$ priority on speed, $20\%$ on fuel efficiency, and $10\%$ on safety. The corresponding scalarized utility is given by $\boldsymbol{\omega}^{\top} Q(s,a)$.
\end{example}

%-------------------------------------------------------------------------
\subsection{Convex Coverage Set}
\label{subsec:CCS}
%-------------------------------------------------------------------------

In multi-objective optimization, the \emph{Pareto front} $\mathcal{F}^{\star}$ contains all Pareto optimal solutions \citep{RVW+:13, MoN:14}, defined as
$$
\mathcal{F}^{\star} \triangleq \left\{ \vr\in\mathbb{R}^{m} : \nexists\vr^{\prime}\in\mathbb{R}^{m} ~ \textrm{such that} ~ \vr_{i}^{\prime} \geq \vr_{i} ~ \textrm{for all $i$ and} ~ \vr_{j}^{\prime} > \vr_{j} ~ \textrm{for at least one} ~ j \right\} \;.
$$

However, not all Pareto-optimal points are relevant when preferences are restricted to linear scalarizations. In this case, only a subset of the Pareto front---the \emph{\ac{CCS}}---is sufficient \citep{RVW+:15}.

The \ac{CCS} is a subset of $\mathcal{F}^{\star}$ consisting of non-dominated solutions that are optimal under some linear preference vector. Mathematically, it is defined as
\begin{align*}
\mathcal{C} \triangleq \left\{\hat{\vr}\in\mathcal{F}^{\star} : \exists\boldsymbol{\omega}\in\Delta^{m-1} ~ \textrm{such that} ~ \boldsymbol{\omega}^{\top}\hat{\vr} \geq \boldsymbol{\omega}^{\top}\hat{\vr}^{\prime} ~ \textrm{for all} ~ \hat{\vr}^{\prime}\in\mathcal{F}^{\star} \right\}.
\end{align*}
Thus, the \ac{CCS} comprises those points on the outer convex boundary of $\mathcal{F}^{\star}$ that maximize utility for at least one preference vector $\boldsymbol{\omega}$ (with $\Omega = \Delta^{m-1}$ as defined above) \citep{RVW+:15}.

\begin{example}
Consider a bi-objective optimization problem with the objectives of maximizing accuracy and maximizing interpretability. The \ac{CCS} would include those points on the Pareto front that maximize a weighted sum of the two objectives for some given trade-off between them; these points lie on the convex outer boundary of the feasible set in the objective space.
\end{example}

Therefore, the \ac{CCS} represents the minimal subset of the Pareto front that guarantees optimality under some linear preference. It is particularly valuable in decision-making scenarios where preferences may vary, since it identifies exactly those solutions that are relevant for all possible linear trade-offs \citep{RVW+:13}.

%-------------------------------------------------------------------------
\subsection{Envelope Q-Learning}
\label{subsec:EQL}
%-------------------------------------------------------------------------

\textbf{Scope.} Learn a \emph{single} preference-conditioned action-value
function $Q:\mathcal{S}\times\mathcal{A}\times\Delta^{m-1}\to\sR^{m}$ such that, for any linear preference $\vomega\in\Delta^{m-1}$, the scalar projection $\vomega^{\top}Q(s,a,\vomega)$ equals the optimal scalarized value for acting under $\vomega$~\citep{YSN:19}. The induced policy is
\begin{equation}
\pi_{\omega}(s)=\arg\max_{a\in\mathcal{A}}\,\vomega^{\top}Q(s,a,\vomega;\vtheta).
\label{eq:eql-policy}
\end{equation}

\paragraph{Envelope maximizer selection (\ac{DDQN} style).}
Given a transition $(s,a,\vr,s')$ with reward vector $\vr(s,a)\in\sR^{m}$, \ac{EQL}~\citep{YSN:19} bootstraps from the \emph{envelope} at the next state by selecting the action–preference pair that maximizes the $\omega$-projection using the \emph{online} network (parameters $\vtheta$):
\begin{equation}
(a^{\star},\omega^{\star})
=\arg\max_{a'\in\mathcal{A},\,\omega'\in\Delta^{m-1}}
\omega^{\top}Q(s',a',\vomega';\theta).
\label{eq:eql-select}
\end{equation}
This couples preferences because $\vomega^{\star}$ is not necessarily equal to the current $\omega$.

\paragraph{Vector \ac{TD} target (envelope bootstrap).}
Evaluate the selected pair with the \emph{target} network $\theta^{-}$ to obtain
a \emph{vector} target
\begin{equation}
y \;=\; r(s,a) \;+\; \gamma\,Q\!\big(s',a^{\star},\omega^{\star};\theta^{-}\big)
\;\in\;\mathbb{R}^{m},
\label{eq:eql-target-vector}
\end{equation}
where the expectation over $s'\!\sim\mathcal{P}(\cdot\mid s,a)$ is approximated
by sampling $s'$ from the replay buffer.

% ---------------- Algorithm (updated to match the vector target/losses) -------------
\begin{algorithm}[!t]
\caption{Envelope Q-learning (EQL) with Preference-Guided Replay}
\label{alg:eql}
\begin{algorithmic}[1]
\Require Discount factor $\gamma$, prioritized buffer $\mathcal{B}$, target period $C$, Dirichlet prior $\alpha$, preferences-per-sample $K$, minibatch size $B$
\State Initialize $\theta$; set $\theta^- \gets \theta$
\Statex
\Comment{Interaction (\Cref{fig:eql-arch}, steps 1--5)}
\For{each environment step}
  \State observe $s$ \Comment{(1)}
  \State sample $\omega \sim \mathrm{Dir}(\alpha)$ \Comment{(2)}
  \State choose $a$ by $\varepsilon$-greedy on $\omega^\top Q(s,a,\omega;\theta)$ \Comment{(3)}
  \State execute $a$; observe $r,s'$ \Comment{(4)}
  \State push $(s,a,r,s')$ into $\mathcal{B}$ with initial priority $p_{\max}$ \Comment{(5)}
\EndFor
\Statex
\Comment{Learning (\Cref{fig:eql-arch}, steps 6--11)}
\For{each gradient step}
  \State sample $\{(s_i,a_i,r_i,s'_i)\}_{i=1}^B \sim \mathcal{B}$ by priority \Comment{(7)}
  \For{$i=1$ to $B$}
    \State sample $\mathcal{W}_i=\{\omega'_{ij}\}_{j=1}^K$ \Comment{(6), (8)}
    \State $(a_i^\star,\omega_i^\star) \gets \arg\max_{a',\,\omega'\in\mathcal{W}_i}\ \omega_i^\top Q(s'_i,a',\omega';\theta)$
    \State \textbf{Vector target:} $y_i \gets r_i + \gamma\,Q(s'_i,a_i^\star,\omega_i^\star;\theta^-)$ \Comment{\Cref{eq:eql-target-vector}}
    \State $\delta^{A}_i \gets y_i - Q(s_i,a_i,\omega_i;\theta)$ \Comment{vector TD}
    \State $\delta^{B}_i \gets \omega_i^\top y_i - \omega_i^\top Q(s_i,a_i,\omega_i;\theta)$ \Comment{scalarized TD}
  \EndFor
  \State minimize $\mathcal{L}(\theta)=\frac{1}{B}\sum_i \big[(1-\lambda)|\delta^{B}_i| + \lambda \|\delta^{A}_i\|_2^2\big]$ \Comment{\Crefrange{eq:eql-LA}{eq:eql-Lcomb}}
  \State update priorities in $\mathcal{B}$ using $\|\delta^{A}_i\|_1$ or $|\delta^{B}_i|$ \Comment{\Cref{eq:eql-priority}}
  \If{step mod $C = 0$} \State $\theta^- \gets \theta$ \EndIf
\EndFor
\end{algorithmic}
\end{algorithm}

\paragraph{Losses.}
The primary objective regresses the full vector target (cf. Equation~(6) in
\citep{YSN:19}):
\begin{equation}
\mathcal{L}^{A}(\theta)
=\mathbb{E}_{(s,a,r,s'),\,\omega}\!\left[
\big\|y-Q(s,a,\omega;\theta)\big\|_{2}^{2}\right].
\label{eq:eql-LA}
\end{equation}
Because the optimal frontier contains many discrete extreme points (a
nonsmooth landscape), an auxiliary \emph{scalarized} loss improves
optimization stability (cf. Equation~(7) in \citep{YSN:19}):
\begin{equation}
\mathcal{L}^{B}(\theta)
=\mathbb{E}_{(s,a,r,s'),\,\omega}\!\left[
\big\|\omega^{\top}y-\omega^{\top}Q(s,a,\omega;\theta)\big\|_{2}\right].
\label{eq:eql-LB}
\end{equation}
The neural network is trained by employing a simple homotopy:
\begin{equation}
\mathcal{L}(\theta)=(1-\lambda)\,\mathcal{L}^{B}(\theta)
+\lambda\,\mathcal{L}^{A}(\theta),\qquad \lambda\in[0,1],~\lambda\uparrow1.
\label{eq:eql-Lcomb}
\end{equation}

\paragraph{Approximating the inner maximization.}
The maximization over $\omega'\in\Delta^{m-1}$ in \Cref{eq:eql-select} is
approximated by sampling a small candidate set
$\mathcal{W}=\{\omega'_{j}\}_{j=1}^{K}\subset\Delta^{m-1}$ (e.g., from a
Dirichlet distribution) and computing
\begin{equation}
(a^{\star},\omega^{\star})
\approx\arg\max_{a'\in\mathcal{A},\,\omega'\in\mathcal{W}}
\omega^{\top}Q(s',a',\omega';\theta).
\label{eq:eql-approx}
\end{equation}
Each transition is \emph{relabeled} with multiple sampled preferences
(hindsight preference relabeling), which couples learning across the preference space and greatly improves sample efficiency. The complete training loop—with preference sampling, hindsight relabeling, prioritized replay, and the envelope bootstrap—is summarized in~\Cref{fig:eql-arch} and \Cref{alg:eql}.

\paragraph{Replay priority.}
Priorities can be derived from vector or scalarized TD errors, e.g.,
\begin{equation}
p\propto \big\|y-Q(s,a,\omega;\theta)\big\|_{1}
\quad\text{or}\quad
p\propto\big|\omega^{\top}y-\omega^{\top}Q(s,a,\omega;\theta)\big|.
\label{eq:eql-priority}
\end{equation}

\paragraph{Theory and intuition.}
The \emph{envelope Bellman operator} (induced by~\Cref{eq:eql-target-vector}) has a
unique fixed point and is a $\gamma$-contraction under a suitable metric; hence
EQL converges in tabular settings~\citep{YSN:19}. 

\paragraph{Geometric interpretation.}
The selection of the envelope maximizer, seen in \Cref{eq:eql-select}, \emph{backs up from} the upper convex hull of the next state returns.
Define
\[
\mathcal{V}(s') \triangleq \bigl\{\, Q(s',a',\omega') : a'\!\in\!\mathcal{A},\ \omega'\!\in\!\Omega \,\bigr\} \subset \mathbb{R}^{m}.
\]
Since $\omega^\top(\cdot)$ is linear, maximizing it over a set equals maximizing it over that set’s
convex hull (support-function invariance):
\begin{equation}
\max_{a',\,\omega'} \ \omega^\top Q(s',a',\omega')
\;=\;
\max_{v\in \operatorname{conv}\mathcal{V}(s')} \ \omega^\top v.
\label{eq:e3-support}
\end{equation}
Consequently, \Cref{eq:eql-select} selects a supporting extreme point of
$\operatorname{conv}\mathcal{V}(s')$, and the target vector in
\Cref{eq:eql-target-vector} \emph{bootstraps} from this convex envelope of the solution frontier; hence dominated trade-offs are not reinforced and EQL effectively
targets the convex coverage set (CCS).\footnote{With finite $\mathcal{A}$
(and discretized $\Omega$ in practice), $\mathcal{V}(s')$ is finite, so
$\operatorname{conv}\mathcal{V}(s')$ is a polytope and the maximum in~\Cref{eq:e3-support} is attained at a vertex.}

\paragraph{Adaptation.}
At runtime, the trained policy $\pi_{\omega}$ can be executed with any desired preference vector $\omega$ without retraining.

% ---------------- Figure caption (references updated) -------------------
\begin{figure}[!tbh]
  \centering
  \includegraphics[width=0.65\textwidth]{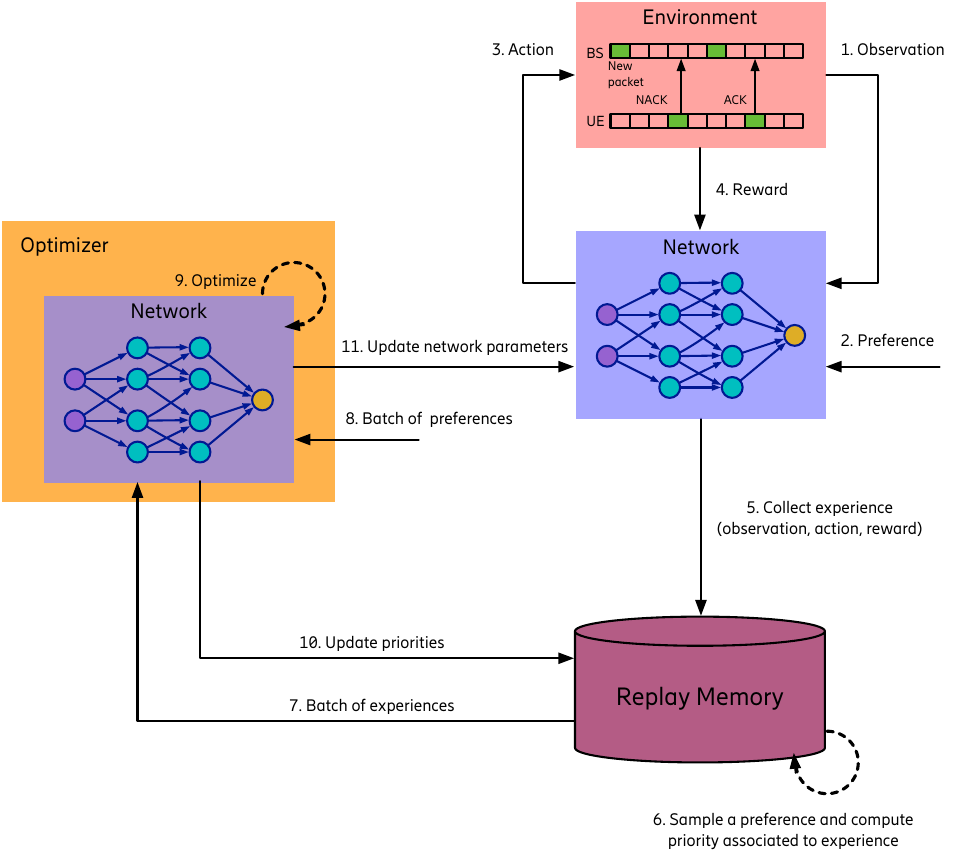}
  \caption{\textbf{Multi-objective RL with preference-guided optimization (EQL training loop).}
  \textbf{(1)} Observe state; \textbf{(2)} sample preference $\omega$; \textbf{(3)} act; \textbf{(4)} receive vector reward $r$; \textbf{(5)} store $(s,a,r,s')$; \textbf{(6)} memory samples auxiliary preferences and computes envelope-based priorities; \textbf{(7)} return prioritized batch; \textbf{(8)} return batch of preferences; \textbf{(9)} optimize using the vector target in~\Cref{eq:eql-target-vector} and losses~\Crefrange{eq:eql-LA}{eq:eql-Lcomb}; \textbf{(10)} update priorities; \textbf{(11)} update network parameters.}
  \label{fig:eql-arch}
\end{figure}

\paragraph{\ac{EQL} shortcomings:} Two design aspects limit the scalability of \ac{EQL}~\cite{YSN:19} in large state–action spaces. First, \ac{EQL} relies on a singleton architecture~\Cref{alg:eql}, where a single actor must explore the full joint state–action–preference space, leading to poor coverage and inefficient learning. Second, sample priorities are assigned only once at generation and never updated during training (unlike, e.g.,~\citep{Horgan:18}), which slows convergence. Since \ac{RAN} control problems involve vast state–action spaces, we propose a distributed \ac{EQL} variant where multiple actors share the exploration load that leads to improved coverage and performance; see~\Cref{appendixDMRL:DEQL_tests}.

% ===================================================================
% SECTION 14: Distributed Envelope Q-Learning
% ===================================================================
\clearpage
\section{Distributed Envelope Q-Learning}
\label{appendix:distributed_morl}

We propose \ac{D-EQL}, a distributed \ac{MORL} algorithm that extends \ac{EQL}~\citep{YSN:19} with an APE\hbox{-}X–style distributed architecture~\citep{Horgan:18} for faster and more efficient exploration of the preference space. As in vanilla \ac{EQL}, our method optimizes a single policy/value network over preferences for multiple competing objectives. Unlike the original setting, we employ learner–actor decoupling~\citep{Horgan:18} and distribute exploration over the preference space across multiple parallel actors. Specifically, we partition the preference simplex into subspaces and allocate different actors to explore different subspaces in parallel. While the partitioning of the preference space is inspired by~\citep{Xu:20}, distributing exploration across actors improves coverage and exploration efficiency. Furthermore, we employ distributed prioritized experience replay with hindsight to improve sample efficiency: prioritized replay selects the most informative experiences at each training step, while hindsight relabeling increases reuse by updating priorities under multiple preferences $\boldsymbol{\omega}$.

\subsection{D-EQL Architecture}
\label{subsec:architecture}

Our framework follows a scalable distributed (multi-objective) reinforcement learning architecture that decouples data collection, storage, and learning; see~\Cref{fig:distributed-arch}. A set of CPU-based actors, each running a replica of the policy network, interact in parallel with multiple simulation environments to generate trajectories of state, action, reward, and next state tuples. Actors generate experiences by exploring \emph{only} a partition the preference simplex. These experiences are first stored locally and then pushed to a sharded replay memory, where data is distributed either via load-balancing or fixed actor-to-shard mappings. Each shard operates as an independent replay buffer, enabling parallel writes and prioritized sampling. A GPU-based learner initially allocates a subspaces of the preference simplex to each actor for distributed exploration. The learner then periodically samples mini-batches from all shards, performs gradient updates on the policy network, and returns updated priorities to maintain efficient replay. Updated network weights are then broadcast to all actors, ensuring consistent synchronization across distributed processes. This design allows the system to scale efficiently with the number of actors and environments, achieving high-throughput of experience collection while preserving stability in training.

\begin{figure}[!thb]
    \centering
    \includegraphics[width=1.0\linewidth]{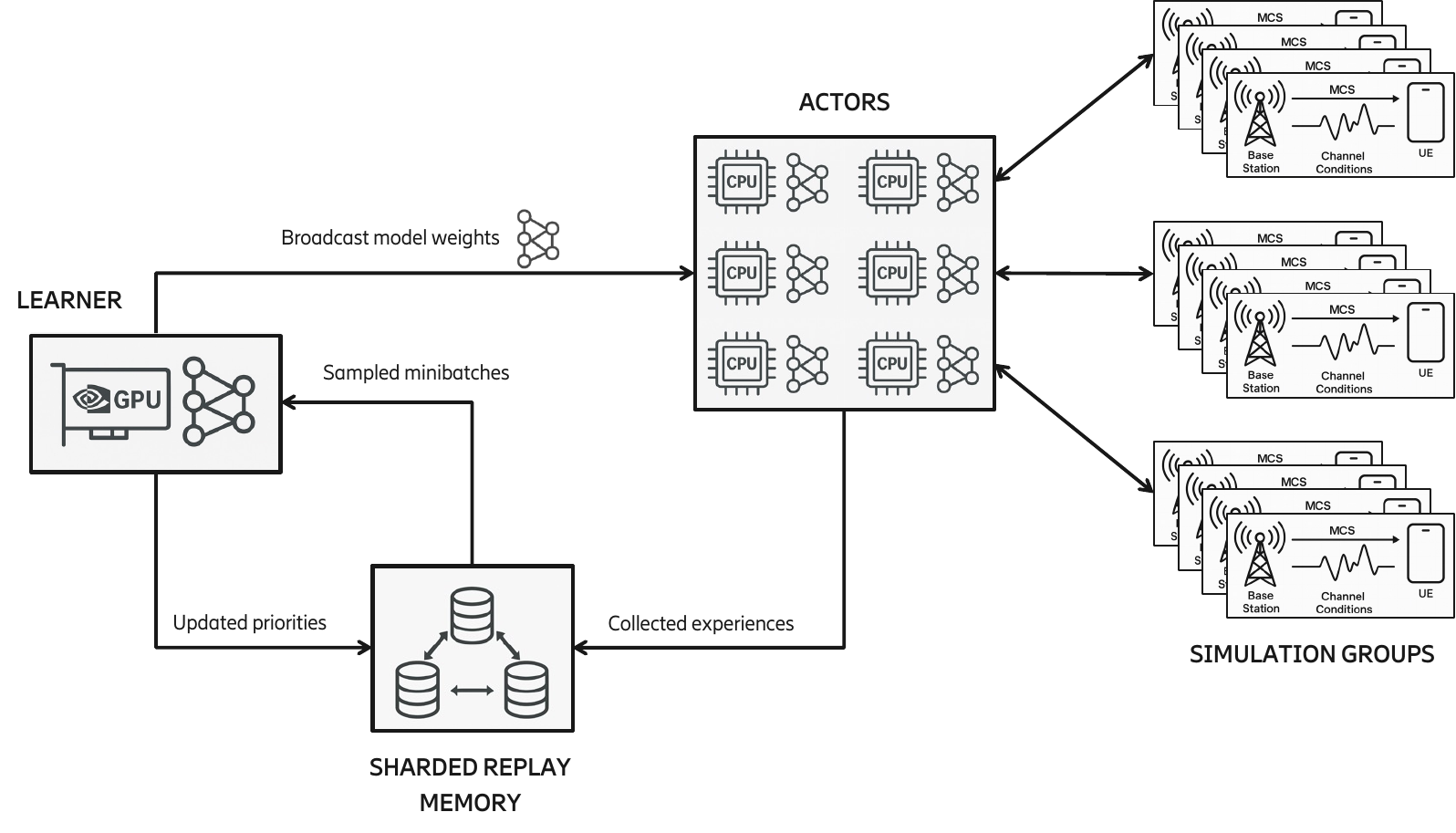}
    \caption{\textbf{Overview of the distributed (multi-objective) reinforcement learning architecture.} CPU-based actors interact in parallel with multiple simulation groups to generate trajectories, which are stored in a sharded replay memory, while exploring different subspaces of the preference simplex. A GPU-based learner samples mini-batches from the shards, updates the policy network, and broadcasts the updated weights back to the actors while sending updated priorities to the replay memory. This design enables scalable experience collection and stable policy learning.}
    \label{fig:distributed-arch}
\end{figure}

%-------------------------------------------------------------------------
\subsection{Distributed Actors}
\label{subsec:DEQL-actor}
%-------------------------------------------------------------------------

We consider a \ac{MOMDP} with state space $\gS$, action set $\gA$, transition kernel $\mathcal{P}(s'\!\mid s,a)$, discount factor $\gamma\!\in[0,1)$, and vectorial reward $\vr(s,a)\!\in\!\sR^{m}$. Preferences lie on the probability simplex $\Omega=\Delta^{m-1}$. The learner maintains vectorial action-value functions $Q(s,a,\vomega;\vtheta)\!\in\!\R^{m}$ (online) and $Q(s,a,\vomega;\vtheta^{-})\!\in\!\sR^{m}$ (target) and periodically shares them with all actors. For any $\vomega\!\in\!\Omega$, we define the scalarization as 
\begin{equation}
Q_{\vomega}(s,a;\vtheta)\;\triangleq\;\vomega^{\top} Q(s,a,\vomega;\vtheta)\in\sR .
\label{eq:DEQL-actor-scalarization}
\end{equation}

Each actor (i) generates experiences under an $\varepsilon$-greedy policy conditioned on a preference $\vomega$, (ii) computes a \emph{scalarized} \ac{DDQN} \ac{TD} error to initialize prioritized experience replay priorities, and (iii) sends batched transitions and priorities to its assigned replay memory shard. Each actor is assigned to a stratum $\Omega_{L}^{(u)}$ from a simplex lattice (see~\Cref{alg:sls-build}) and samples $\vomega$ uniformly in that simplex via barycentric weights (see~\Cref{alg:sls-sample}).

%-------------------
\subsubsection*{Behavior policy and data generation}
%-------------------
At the beginning of episode $e$, the actor draws a preference with support on the actor’s stratum $\Omega_{L}^{(u)}$ (see~\Cref{subsec:stratified-simplex}) and keeps it fixed: 
\[
\vomega^{(e)} \sim p_u(\cdot), 
\qquad 
\vomega_t \equiv \vomega^{(e)} \ \ \text{for all } t \text{ in episode } e .
\]

At the environment step $t\!\in\!\sN$, the exploration rate is linearly annealed from $\varepsilon_{\max}$ to $\varepsilon_{\min}$ over $T_{\mathrm{decay}}$ steps:
\begin{equation}
\varepsilon_t \;=\; \max\!\left\{
\varepsilon_{\min},\;
\varepsilon_{\max}-\frac{\varepsilon_{\max}-\varepsilon_{\min}}{T_{\mathrm{decay}}}\,t
\right\}.
\label{eq:DEQL-actor-eps}
\end{equation}
Here, $\varepsilon_{\max}\!\in\!(0,1]$ is the initial exploration rate, $\varepsilon_{\min}\!\in\![0,\varepsilon_{\max})$ is the floor, and $T_{\mathrm{decay}}\!\in\!\sN$ is the annealing horizon (after which $\varepsilon_t$ is clamped).

We define the greedy action under the fixed preference (using~\Cref{eq:DEQL-actor-scalarization}) as
\begin{equation}
a^\star_{\vomega}(s)\;=\;\argmax_{a\in\gA}\;Q_{\vomega}(s,a;\vtheta).
\label{eq:greedy-map}
\end{equation}
Let $\xi_t \sim \mathrm{Unif}(0,1)$ and let $\mathrm{UnifAct}(\gA)$ denote a single uniform draw from $\gA$ used only when exploring. As a result, the executed action becomes:
\begin{equation}
a_t \;=\;
\begin{cases}
\mathrm{UnifAct}(\gA), & \text{if } \xi_t < \varepsilon_t,\\[3pt]
a^\star_{\vomega}(s_t), & \text{otherwise.}
\end{cases}
\label{eq:eps-switch}
\end{equation}
Applying $a_t$ yields a transition $(s_t,a_t,\vr_t,s_{t+1},d_t)$ with a terminal flag $d_t\!\in\!\{0,1\}$.

%-------------------
\subsubsection*{Initial priority computation}
%-------------------
To initialize prioritized replay, the actor computes a \emph{scalar} \ac{DDQN} \ac{TD} error using a \emph{fresh} preference
$\tilde{\vomega} \sim p_u(\cdot)$ (independent of the behavior preference) to diversify the priorities:
\begin{align}
a^{\star}_{\tilde{\vomega}}(s_{t+1})
&=\argmax_{a'\in\gA}\; \tilde{\vomega}^{\!\top} Q(s_{t+1},a',\tilde{\vomega};\vtheta),
\label{eq:actor-ddqn-select}\\
\delta_{\mathrm{act}}
&=\underbrace{\tilde{\vomega}^{\!\top}\vr_t}_{r_{\tilde{\vomega}}}
+\gamma (1-d_t)\,
\tilde{\vomega}^{\!\top} Q(s_{t+1},a^{\star}_{\tilde{\vomega}}(s_{t+1}),\tilde{\vomega};\vtheta^{-})
-\tilde{\vomega}^{\!\top} Q(s_t,a_t,\tilde{\vomega};\vtheta),
\label{eq:actor-td}\\
p_{\mathrm{init}}
&=\;|\delta_{\mathrm{act}}|+\eps_0,
\qquad \eps_0 \ll 1 .
\label{eq:actor-prio}
\end{align}
The pair $\big((s_t,a_t,\vr_t,s_{t+1},d_t),\,p_{\mathrm{init}}\big)$ is buffered locally and flushed to the assigned replay shard.

%-------------------
\subsubsection*{Local batching and batched communication}
%-------------------
Let $u\in\{0,\dots,U-1\}$ denote the actor id and $K$ be the number of replay shards.
The actor~accumulates transitions in a local circular buffer of capacity $C$
and, when full, sends the batch to a designated shard via a single \ac{RPC}. Shard
selection uses the deterministic mapping
\begin{equation}
k(u)\;=\; u \bmod K .
\label{eq:DEQL-actor-shard}
\end{equation}
The actor $u$ then transmits the batch $\{(s_n,a_n,\vr_n,s'_n,d_n),\,p_{\text{init},n}\}_{n=1}^{C}$ to shard $k(u)$.

\begin{algorithm}[t]
\caption{D\hbox{-}EQL Actor}
\label{alg:DEQL-actor-final}
\begin{algorithmic}[1]
\Require actor id $u$, shards $\{\textsc{Shard}_k\}_{k=0}^{K-1}$, local buffer capacity $C$, discount factor $\gamma$, schedule params $(\varepsilon_{\max},\varepsilon_{\min},T_{\mathrm{decay}})$, stratum $\Omega_{L}^{(u)}$ with sampler $p_u(\cdot)$, online $Q(\cdot;\vtheta)$, target $Q(\cdot;\vtheta^{-})$
\State $k \gets u \bmod K$ \Comment{assigned shard, cf.\ \Eqref{eq:DEQL-actor-shard}}
\State $\mathcal{B}\gets\varnothing$, $\mathcal{P}\gets\varnothing$, $t\gets 0$
\For{episode $e=1,2,\ldots$}
  \State \textbf{Sample episodic preference:} $\vomega^{(e)} \sim p_u(\cdot)$; set $\vomega_t \equiv \vomega^{(e)}$ for this episode
  \While{episode not terminated}
    \State observe $s_t$
    \State compute $\varepsilon_t$ via the linear schedule \Eqref{eq:DEQL-actor-eps}
    \State draw $\xi_t \sim \mathrm{Unif}(0,1)$
    \If{$\xi_t < \varepsilon_t$}
      \State $a_t \gets \mathrm{UnifAct}(\gA)$
    \Else
      \State $a_t \gets \argmax_{a\in\gA} Q_{\vomega_t}(s_t,a;\vtheta)$ \Comment{greedy map \Eqref{eq:greedy-map}}
    \EndIf
    \State execute $a_t$; observe $(\vr_t,s_{t+1},d_t)$
    \State \textbf{Sample priority preference:} $\tilde{\vomega} \sim p_u(\cdot)$
    \State $a^\star \gets \argmax_{a'\in\gA} \tilde{\vomega}^{\!\top} Q(s_{t+1},a',\tilde{\vomega};\vtheta)$ \Comment{selection \Eqref{eq:actor-ddqn-select}}
    \State $\delta_{\mathrm{act}} \gets \tilde{\vomega}^{\!\top}\vr_t + \gamma(1-d_t)\tilde{\vomega}^{\!\top} Q(s_{t+1},a^\star,\tilde{\vomega};\vtheta^{-}) - \tilde{\vomega}^{\!\top} Q(s_t,a_t,\tilde{\vomega};\vtheta)$ \Comment{TD error \Eqref{eq:actor-td}}
    \State $p_{\mathrm{init}} \gets |\delta_{\mathrm{act}}| + \eps_0$ \Comment{priority \Eqref{eq:actor-prio}}
    \State append $\big((s_t,a_t,\vr_t,s_{t+1},d_t),\,p_{\mathrm{init}}\big)$ to $(\mathcal{B},\mathcal{P})$
    \If{$|\mathcal{B}|=C$}
      \State \textsc{AddExperiences}$\big(\textsc{Shard}_k,\mathcal{B},\mathcal{P}\big)$; reset $\mathcal{B},\mathcal{P}\gets\varnothing$
    \EndIf
    \State \textsc{Periodically}($\vtheta\gets\,\!$\textsc{Learner.Parameters()})
    \State $t\gets t+1$
  \EndWhile
\EndFor
\end{algorithmic}
\end{algorithm}

%-------------------------------------------------------------------------
\subsection{Centralized Learner}
\label{subsec:DEQL-learner}
%-------------------------------------------------------------------------
The learner (i) assembles prioritized mini-batches from $K$ replay shards,
(ii) samples a mini-batch of preferences from a Dirichlet and forms a
\emph{Cartesian product} with the transitions, (iii) for each
(transition, preference) pair computes an \emph{envelope} DDQN target by
maximizing over both actions and a finite set of supporting preferences,
(iv) updates the online parameters $\vtheta$ using a vector regression objective
plus a cosine similarity term, (v) refreshes per-transition priorities on the
shards, and (vi) periodically synchronizes the target network $\vtheta^{-}$
and publishes the latest online parameters to actors.

%-------------------
\subsubsection*{Mini-batch assembly from replay shards}
%-------------------
Let assume shard $k\!\in\!\{0,\dots,K-1\}$ stores $N_k$ items with priorities
$\{p_{k,i}\}$, with a total running value
$Z_k\triangleq\sum_{i=1}^{N_k} p_{k,i}^{\alpha}$, where $\alpha\!\in\![0,1]$.
A pair $(k,i)$ is sampled with probability
\begin{equation}
\Pr\!\big((k,i)\big)
\;=\;
\frac{Z_k}{\sum_{\ell=0}^{K-1}Z_{\ell}}
\cdot
\frac{p_{k,i}^{\alpha}}{Z_k}
\;=\;
\frac{p_{k,i}^{\alpha}}{\sum_{\ell}\sum_{j}p_{\ell,j}^{\alpha}}.
\label{eq:learner-per-prob}
\end{equation}
Given $N\triangleq\sum_k N_k$, experience replay importance weights $w_{k,i}$ (with exponent
$\beta\!\in\![0,1]$) are
\begin{equation}
w_{k,i}
=
\left(\frac{1}{N}\cdot\frac{1}{\Pr((k,i))}\right)^{\beta}
\bigg/
\max_{k',i'}\left(\frac{1}{N}\cdot\frac{1}{\Pr((k',i'))}\right)^{\beta}.
\label{eq:learner-per-weights}
\end{equation}
The learner queries the shards values $\{Z_k\}_{k=1}^K$, allocates per-shard batch sizes proportionally, fetches tuples (transition, index, $w_{k,i}$, shard id), and aggregates them into a transition batch of size $B$.
% $(\text{transition},\ \text{index},\ w_{k,i},\ \text{shard id})$,

%-------------------
\subsubsection*{Preference Sampling}
%-------------------
For each training step, we draw an i.i.d.\ mini-batch of preferences from a
Dirichlet distribution:
\begin{equation}
\{\vomega_j\}_{j=1}^{P}\subset\Omega=\Delta^{m-1},
\qquad
\vomega_j \stackrel{\text{i.i.d.}}{\sim} \mathrm{Dir}(\valpha),
\qquad
\valpha\in(0,\infty)^{m}.
\label{eq:learner-pref-sampling-dir}
\end{equation}
The choice $\valpha=\vone_m$ yields the uniform distribution on the simplex; $\valpha\!<\!\vone$ emphasizes corners, while $\valpha\!>\!\vone$ emphasizes the interior.
To couple transitions and preferences, we form the \emph{Cartesian product} index set
\[
\gI \;=\; \{1,\dots,B\}\times\{1,\dots,P\},
\]
so every transition is paired with every preference. For $(i,j)\in\gI$, we write
$(s_i,a_i,\vr_i,s'_i,d_i,\vomega_j)$ and define the scalarization
$Q_{\vomega_j}(s,a;\vtheta)=\vomega_j^{\!\top}Q(s,a,\vomega_j;\vtheta)$.

%-------------------
\subsubsection*{Envelope DDQN selection and vector target}
%-------------------
The EQL's envelope backup requires maximizing over both actions and supporting
preferences. Directly optimizing over $\Omega$ is intractable, so we
\emph{approximate} the inner maximization by searching over a sampled set
$\gW\!=\!\{\vomega_j\}_{j=1}^{P}$. For each pair $(i,j)\in\gI$, we get:
\begin{align}
(a_{i,j}^{\star},\ \tilde{\vomega}_{i,j}^{\star})
&=\argmax_{a'\in\gA,\ \vomega'\in\gW}
\ \vomega_j^{\!\top} Q(s'_i,a',\vomega';\vtheta),
\label{eq:learner-env-select}\\[4pt]
\vy_{i,j}
&=\vr_i\;+\;\gamma(1-d_i)\,
Q\!\big(s'_i,a_{i,j}^{\star},\tilde{\vomega}_{i,j}^{\star};\vtheta^{-}\big)
\ \in\ \sR^{m}.
\label{eq:learner-env-target}
\end{align}
Thus each \emph{query preference} $\vomega_j$ selects a \emph{supporting}
preference $\tilde{\vomega}_{i,j}^{\star}\!\in\!\gW$ and action
$a_{i,j}^{\star}$ that together realize the envelope along direction
$\vomega_j$. The online prediction is
$Q_{\mathrm{pred},i,j}=Q(s_i,a_i,\vomega_j;\vtheta)\in\sR^{m}$.

%-------------------
\subsubsection*{Training Loss}
%-------------------
For each $(i,j)\in\gI$, we define:
\begin{equation}
\Ls_{\text{mmse}}(i,j;\vtheta)
\;=\;
\big\|\vy_{i,j} - Q(s_i,a_i,\vomega_j;\vtheta)\big\|_2^2,
\label{eq:learner-loss-vec}
\end{equation}
\begin{equation}
\Ls_{\text{cos}}(i,j;\vtheta)
\;=\;
1 -
\frac{\vomega_j^{\!\top}Q(s_i,a_i,\vomega_j;\vtheta)}
{\|\vomega_j\|_2\,\|Q(s_i,a_i,\vomega_j;\vtheta)\|_2}.
\label{eq:learner-loss-cos}
\end{equation}
With tradeoff $\lambda\!\ge\!0$ and PER weights $w_{k,i}$ tied to the
\emph{transition} (replicated over its $P$ preference copies), the learner
minimizes
\begin{equation}
\gL(\vtheta)
\;=\;
\frac{1}{BP}\sum_{i=1}^{B} \sum_{j=1}^{P}
w_{k(i),\,idx(i)}\,
\Big(\Ls_{\text{mmse}}(i,j;\vtheta)
+\lambda\,\Ls_{\text{cos}}(i,j;\vtheta)\Big),
\label{eq:learner-loss-total}
\end{equation}
where $k(i)$ and $idx(i)$ are the shard id and local index of transition $i$.

%-------------------
\subsubsection*{Priority refresh (learner side)}
%-------------------
To refresh priority weights, we first compute scalarized residuals per pair $(i,j)$,
\begin{equation}
\delta_{i,j}
\;=\;
\vomega_j^{\!\top}\!\big(\vy_{i,j}-Q(s_i,a_i,\vomega_j;\vtheta)\big),
\label{eq:learner-delta}
\end{equation}
then aggregate to a single priority per \emph{original transition} $i$,
\begin{equation}
p_{\mathrm{new}}(i)
\;=\;
\max_{1\le j\le P} |\delta_{i,j}| \;+\; \eps_0,
\qquad \eps_0>0,
\label{eq:learner-priority}
\end{equation}
and return $\big(\text{indices}(i),\,p_{\mathrm{new}}(i)\big)$ to the
corresponding shards to update PER totals.

%-------------------
\subsubsection*{Target updates and parameter broadcast}
%-------------------
Every $C_{\mathrm{tgt}}$ steps, the learner updates the target network either by a hard copy
\begin{equation}
\vtheta^{-}\ \gets\ \vtheta
\label{eq:learner-hard}
\end{equation}
or a soft update with factor $\tau\!\in\!(0,1]$:
\begin{equation}
\vtheta^{-}\ \gets\ (1-\tau)\,\vtheta^{-}\;+\;\tau\,\vtheta.
\label{eq:learner-soft}
\end{equation}
Every $C_{\mathrm{push}}$ steps, the latest online parameters are published to the
shared model used by all actors.

\begin{algorithm}[t]
\caption{D\hbox{-}EQL Learner}
\label{alg:DEQL-learner}
\begin{algorithmic}[1]
\Require shards $k\in\{0, \dots, K-1\}$, discount $\gamma$, PER exponents $(\alpha,\beta)$, target period $C_{\mathrm{tgt}}$, push period $C_{\mathrm{push}}$, preference batch size $P$, tradeoff $\lambda$, Dirichlet parameter $\valpha$
\State Initialize online $\vtheta$; set target $\vtheta^{-}\!\gets\!\vtheta$
\While{training}
  \State Query $\{Z_k\}$; allocate per-shard batch sizes; fetch prioritized transitions with $(\text{indices}(i),\ w_{k(i),\,idx(i)})$ \Comment{\Eqref{eq:learner-per-prob}--\Eqref{eq:learner-per-weights}}
  \State Sample preferences $\{\vomega_j\}_{j=1}^{P}\stackrel{\text{i.i.d.}}{\sim}\mathrm{Dir}(\valpha)$ \Comment{\Eqref{eq:learner-pref-sampling-dir}}
  \State Form Cartesian product $\gI=\{1..B\}\times\{1..P\}$ (replicate transitions across all $\vomega_j$)
  \ForAll{$(i,j)\in\gI$}
    \State $(a_{i,j}^{\star},\tilde{\vomega}_{i,j}^{\star})\!\gets\!\argmax_{a'\in\gA,\ \tilde{\vomega}\in\{\vomega_1,\dots,\vomega_P\}}
    \ \vomega_j^{\!\top}Q(s'_i,a',\tilde{\vomega};\vtheta)$ \Comment{\Eqref{eq:learner-env-select}}
    \State $\vy_{i,j}\!\gets\!\vr_i+\gamma(1-d_i)\,Q(s'_i,a_{i,j}^{\star},\tilde{\vomega}_{i,j}^{\star};\vtheta^{-})$ \Comment{\Eqref{eq:learner-env-target}}
  \EndFor
  \State Compute $\gL(\vtheta)$ via \Eqref{eq:learner-loss-total}; take a gradient step on $\vtheta$
  \State For each $i\in\{1..B\}$, compute $p_{\mathrm{new}}(i)$ via \Eqref{eq:learner-delta}--\Eqref{eq:learner-priority}; send updates to shards
  \If{step $\bmod\ C_{\mathrm{tgt}}=0$} \State Update $\vtheta^{-}$ via \Eqref{eq:learner-hard} or \Eqref{eq:learner-soft} \EndIf
  \If{step $\bmod\ C_{\mathrm{push}}=0$} \State Publish $\vtheta$ to actors \EndIf
\EndWhile
\end{algorithmic}
\end{algorithm}

\paragraph{Notes.}\emph{(i)} The envelope selection (see~\Cref{eq:learner-env-select}) is the finite-set
approximation of the bi-level inner maximization over preferences; letting
$P\!\uparrow\!\infty$ densifies the approximation.
\emph{(ii)} The Cartesian product ensures that \emph{every} transition is trained
under \emph{every} sampled preference each step (dense supervision).
\emph{(iii)} Priorities are defined per transition by aggregating scalarized TD
residuals over $P$ preference replicas.

\paragraph{Communication summary.}
The learner \emph{pulls} batches proportional to $Z_k$, \emph{pushes} refreshed
priorities $p_{\mathrm{new}}$ (updating shard totals), and periodically
\emph{broadcasts} the latest online parameters. Preference expansion couples
updates across $\Omega$, while \ac{DDQN} selection/evaluation preserves stability.

%-------------------------------------------------------------------------
\subsection{Stratified Sampling on the Probability Simplex}
\label{subsec:stratified-simplex}
%-------------------------------------------------------------------------
We want to sample preferences $\vomega\!\in\!\Omega:=\Delta^{m-1}$ so that \emph{all regions} of the simplex are adequately covered during data generation, thereby reducing variance and avoiding mode collapse toward a few scalarizations.

\paragraph{Baseline (no stratification).}
When no partition is imposed, we draw independent and identically distributed (i.i.d.) preferences from a Dirichlet distribution,
\begin{equation}
\vomega \;\sim\; \mathrm{Dir}(\valpha),\qquad
\valpha\in(0,\infty)^{m}.
\label{eq:dirichlet-baseline}
\end{equation}
Choosing $\valpha=\vone_m$ yields the uniform distribution on $\Omega$.

%-------------------
\subsubsection{Deterministic Equal-Volume Strata via a Simplex Lattice}
%-------------------
We partition $\Omega$ into congruent $(m{-}1)$-simplices by using a barycentric lattice with resolution $L\!\in\!\sN$. We define the lattice vertices as:
\begin{equation}
\gV_L
\;:=\;
\left\{
\frac{\vk}{L}\in\Omega \;:\;
\vk=(k_1,\dots,k_m)\in\sN^{m},\ \sum_{i=1}^{m}k_i=L
\right\},
\label{eq:sls-vertices}
\end{equation}
and we consider a fixed permutation $\pi$ of $\{1,\dots,m\}$. For each \emph{base} point $\vk\in\sN^{m}$ with $\sum_i k_i=L-1$, we form the $m$ lattice points as:
\begin{equation}
\vv_0=\frac{\vk}{L}, \quad \vv_r=\frac{\vk+\ve_{\pi(1)}+\cdots+\ve_{\pi(r)}}{L}, \quad r=1,\dots,m-1,
\label{eq:sls-verts-cell}
\end{equation}
and define the micro-simplex (stratum):
\begin{equation}
\Omega_{L}(\vk)\;:=\;\mathrm{conv}\{\vv_0,\vv_1,\dots,\vv_{m-1}\}\ \subset\ \Omega.
\label{eq:sls-cell}
\end{equation}
The collection $\{\Omega_{L}(\vk):\ \vk\in\sN^{m},\ \sum_i k_i=L-1\}$ tiles $\Omega$ into $L^{\,m-1}$ equal-volume strata.

%-------------------
\subsubsection{Uniform Sampling Within a Stratum}
%-------------------

Let $\Omega_{L}(\vk)=\mathrm{conv}\{\vv_0,\dots,\vv_{m-1}\}$ be any stratum. We draw barycentric weights $\rvz\sim\mathrm{Dir}(\vone_m)$ and map affinely:
\begin{equation}
\vomega \;=\; \sum_{r=0}^{m-1} z_r\,\vv_r
\;\in\; \Omega_{L}(\vk).
\label{eq:sls-affine-sample}
\end{equation}
This yields a sample \emph{uniform} in the stratum.

%-------------------
\subsubsection{Assigning Strata to Actors}
%-------------------

Index the $L^{\,m-1}$ strata in a fixed order as $\{\Omega_{L}^{(u)}\}_{u=0}^{U-1}$ with $U=L^{\,m-1}$ (or group them when $U$ exceeds the number of actors). Actor $u$ repeatedly samples $\omega\in\Omega_{L}^{(u)}$ via~\Cref{alg:sls-sample}, ensuring non-overlapping coverage across actors.

\begin{algorithm}[t]
\caption{Simplex–Lattice Stratification (build strata at resolution $L$)}
\label{alg:sls-build}
\begin{algorithmic}[1]
\Require dimension $m$, resolution $L$, a fixed permutation $\pi$ of $\{1,\dots,m\}$
\State $\gS \gets \varnothing$ \Comment{list of strata (each as $m$ vertices in $\sR^{m}$)}
\ForAll{$\vk\in\sN^{m}$ with $\sum_{i=1}^{m}k_i=L-1$}
  \State $\vv_0 \gets (\vk)/L$
  \For{$r=1$ to $m-1$} \State $\vv_r \gets (\vk+\ve_{\pi(1)}+\cdots+\ve_{\pi(r)})/L$ \EndFor
  \State append $\mathrm{conv}\{\vv_0,\dots,\vv_{m-1}\}$ to $\gS$
\EndFor
\State \Return $\gS$ \Comment{$|\gS|=L^{\,m-1}$ equal-volume strata}
\end{algorithmic}
\end{algorithm}

\begin{algorithm}[t]
\caption{Sample uniformly from a stratum $\Omega_{L}(\vk)$}
\label{alg:sls-sample}
\begin{algorithmic}[1]
\Require vertices $\{\vv_0,\dots,\vv_{m-1}\}$ of $\Omega_{L}(\vk)$
\State draw $\rvz\sim\mathrm{Dir}(\vone_m)$
\State \Return $\vomega=\sum_{r=0}^{m-1} z_r\,\vv_r$ \Comment{uniform in the stratum}
\end{algorithmic}
\end{algorithm}

%-------------------
\subsubsection{Discussion and alternatives}
%-------------------
\textbf{Coverage and variance.} Compared to i.i.d.\ Dirichlet sampling
shown in~\Cref{eq:dirichlet-baseline}, the lattice partition yields systematic
coverage of the entire simplex and reduces estimator variance by ensuring that
each subregion is represented.

\textbf{Resolution.} Larger $L$ gives finer strata ($L^{\,m-1}$ pieces) and
smoother coverage at the cost of more partitions to manage.

\textbf{Clustering alternative.} When equal-volume strata are unnecessary,
a simple alternative is to draw a large pilot set
$\{\omega^{(n)}\}_{n=1}^{N_0}\sim\mathrm{Dir}(\valpha)$ and run $k$-means
on the $(m{-}1)$-dimensional simplex (with cosine or Euclidean distance); the
Voronoi cells of the cluster centers define strata. Sampling within a cell can
be done by re-running Dirichlet draws and accepting points whose nearest
center matches the cell (approximately uniform within each cell).

%-------------------------------------------------------------------------
\subsection{Experiments}
\label{appendixDMRL:DEQL_tests}
%-------------------------------------------------------------------------
\subsubsection{Environments and Setup}
We evaluate our approach using two well-established \ac{MORL} benchmarks: \ac{DST} and \ac{FTN}. Both environments are widely used in the literature \citep{YSN:19,Basaklar:23}, providing standardized testbeds for assessing Pareto front coverage and preference generalization.

The \ac{DST} environment is a grid-world wherein an agent controls a submarine that must navigate~from the surface to collect one of several treasures placed at different depths. Each treasure yields~a~two- dimensional reward: a positive value and a negative time penalty. The task is inherently multi-objective, requiring the agent to balance collecting high-value treasures against minimizing travel time. The Pareto front is well understood and serves as a reliable benchmark for coverage~and~accuracy.

The \ac{FTN} environment generalizes this idea to a tree-structured setting. Starting from the root, the agent makes sequential decisions until it reaches a leaf node, where it receives a multi-dimensional reward corresponding to the chosen fruit. The tree depth controls task complexity. At depth five, the agent makes five sequential decisions; at depth seven, the number of possible outcomes grows exponentially, producing a much larger and more diverse Pareto front. This makes \ac{FTN} with higher depth a significantly more challenging benchmark, particularly for algorithms that must adapt to unseen preferences or maintain wide coverage.

In all experiments, algorithms are trained across a range of randomly sampled linear preference vectors, following the setup in \citet{YSN:19}. At test time, additional preference vectors are sampled to assess generalization. All results are averaged over multiple random seeds to account for variance.

\subsubsection{Metrics and Results}
We evaluate performance using three widely adopted metrics in \ac{MORL}: 
\begin{enumerate}
    \item \textbf{Coverage Ratio F1 (CRF1):} A harmonic mean of precision and recall that captures both accuracy and coverage of the Pareto front. 
    \item \textbf{Hypervolume:} The volume dominated by the obtained solutions with respect to a reference point, reflecting both the quality and diversity of the Pareto front.  
    \item \textbf{Sparsity:} The average distance between neighboring solutions, indicating how uniformly the Pareto front is covered. 
\end{enumerate}

\begin{table}[!thb] 
\centering 
\caption{We compare the performance of the proposed distributed Q-learning algorithm with the original Q-learning approach~\citet{YSN:19} and~\citet{Basaklar:23} in terms of CFR1, hypervolume and sparsity, showing superior performance in more complex scenarios.} 
\label{my-label} 
\small % or \footnotesize, or \scriptsize
\begin{tabular}{@{}lS[table-format=1.3]S[table-format=3.2]S[table-format=1.2]S[table-format=1.3]S[table-format=5.2]S[table-format=1.3]S[table-format=4.2]@{}} 
\toprule 
  & \multicolumn{3}{c}{Deep Sea Treasure} & \multicolumn{2}{c}{Fruit Tree Nav. (d=5)} & \multicolumn{2}{c}{Fruit Tree Nav. (d=7)} \\ 
\cmidrule(r){2-4} \cmidrule(l){5-6} \cmidrule(l){7-8} 
  & {CRF1} & {Hyperv.} & {Sparsity} & {CRF1} & {Hyperv.} & {CRF1} & {Hyperv.} \\ 
\midrule 
\cite{YSN:19} \hspace{-3.5mm} & 0.994 & 227.39 & 2.62 & 1.0 & 6920.58 & 0.819 & 6395.27 \\  
\cite{Basaklar:23} \hspace{-3.5mm} & 1.0 & 241.73 & 1.14 & 1.0 & 6920.58 & 0.920 & 11419.58 \\  
\ac{D-EQL} (ours) \hspace{-3.5mm} & \hspace{-3.5mm}\textbf{1.0} & \textbf{241.73} & \textbf{1.14} & \hspace{-3.5mm}\textbf{1.0} & \hspace{2.5mm}\textbf{6920.58} & \hspace{-3.5mm}\textbf{1.0} & \hspace{-3.5mm}\textbf{12110.74} \\ 
\bottomrule 
\end{tabular} 
\end{table}

\Cref{my-label} compares \ac{D-EQL} with prior approaches. On the simpler \ac{DST} domain, all methods achieve near-perfect coverage. Our method, on pair with \cite{Basaklar:23}, attains $\text{CRF1}=1.0$ and simultaneously achieves the highest hypervolume and lowest sparsity, indicating comprehensive and well-distributed solutions along the Pareto front.

On \ac{FTN} with depth five, performance saturates across all methods with perfect coverage and identical hypervolume, reflecting the relative simplicity of this setting.

The advantage of \ac{D-EQL} becomes evident in the more complex \ac{FTN} with depth seven. Prior methods show a notable drop in coverage ($\text{CRF1}=0.819$ for~\cite{YSN:19} and $0.92$ for~\cite{Basaklar:23}), whereas \ac{D-EQL} maintains perfect coverage ($\text{CRF1}=1.0$). Moreover, \ac{D-EQL} achieves the highest hypervolume ($12110.74$), showing $22.1\%$ and $8.69\%$ improvements over~\cite{YSN:19} and~\cite{Basaklar:23}, respectively, and  demonstrating broader Pareto front coverage and superior solution diversity. These results show that our distributed training scheme preserves accuracy while scaling effectively to environments with exponentially growing outcome spaces.

Overall, \ac{D-EQL} achieves state-of-the-art performance: it matches existing methods on simpler tasks and clearly outperforms them in complex scenarios, highlighting the benefits of distributed training for multi-objective reinforcement learning. This makes \ac{D-EQL} a more suitable for handling the vast dimensions of state-action spaces in \ac{RAN} control functions.

\subsubsection{Hyperparameters}  
The hyperparameters used in \ac{D-EQL} training for \ac{DST} and \ac{FTN} environment are listed in \Cref{tab:hyperparams-DEQL-both}.

\begin{table}[!htb]
\centering
\caption{Hyperparameters used for \ac{D-EQL} on deep sea treasure and fruit tree navigation.}
\label{tab:hyperparams-DEQL-both}
\begin{tabular}{lll}
\toprule
\textbf{Hyperparameter} & \textbf{DST} & \textbf{FTN} \\
\midrule
\multicolumn{3}{l}{\textbf{Model}} \\
\midrule
Hidden feature (units per layer) & 256 & 512 \\
Activation & SiLU & SiLU~\cite{EUD:18} \\
Number of layers & 3 & 3 \\
\midrule
\multicolumn{3}{l}{\textbf{Actor}} \\
\midrule
Number of actors & 10 & 10 \\
$\epsilon$-greedy (linear, start $\rightarrow$ final) & 0.8 $\rightarrow$ 0.1 & 0.8 $\rightarrow$ 0.1 \\
Anneal timesteps & $1 \times 10^{6}$ & $1 \times 10^{6}$ \\
Local buffer capacity & 125 & 125 \\
Max environment step & $1 \times 10^{6}$ & $1 \times 10^{6}$ \\
\midrule
\multicolumn{3}{l}{\textbf{Learner}} \\
\midrule
Learning rate & $3.5 \times 10^{-4}$ & $3.5 \times 10^{-4}$ \\
Target update period (gradient step) & 1 & 1 \\
Model sync period (gradient step) & 250 & 250 \\
Discount factor $\gamma$ & 0.99 & 0.99 \\
Prefetched batches & 16 & 16 \\
Transition batch size & 128 & 128 \\
Preference batch size & 128 & 128 \\
\midrule
\multicolumn{3}{l}{\textbf{Replay Memory (PER)}} \\
\midrule
Number of shards & 1 & 1 \\
Capacity & $5 \times 10^{5}$ & $5 \times 10^{5}$ \\
Priority exponent $\alpha$ & 0.7 & 0.7 \\
Importance sampling $\beta$ (linear, start $\rightarrow$ final) & 0.4 $\rightarrow$ 1.0 & 0.4 $\rightarrow$ 1.0 \\
$\beta$ anneal timesteps & $2 \times 10^{4}$ & $2 \times 10^{4}$ \\
\bottomrule
\end{tabular}
\end{table}

% ===================================================================
% SECTION 15: Case Study
% ===================================================================
\clearpage
\section{Case Study}
\label{appendix:study_case}

We design a controller agent for \ac{LA}, a crucial functionality of modern wireless communication systems that employs adaptive coding and modulation to optimize the spectral efficiency of the radio link between transmitter and receiver. By adopting a \ac{MORL} approach, the \ac{LA} controller agent can adjust transmission parameters to meet connectivity service intents expressed in terms of data rate, reliability, and latency requirements for individual users.

\subsection{Link Adaptation}

\ac{LA} adapts the modulation order and code rate of individual packet transmissions to match the capacity of the radio link capacity, given the radio link state. The \ac{LA} parameters are encoded into a unique value, referred to as \ac{MCS} index in~\cite{3GPP38214}, that is provided to the receiver for packet decoding. 

The 3GPP \ac{5G} \ac{NR} system, rely on an \ac{OLLA} approach inspired to~\cite{PMK+:07} to maximize the link spectral efficiency while adhering to a predefined \ac{BLER} target using receiver-side \ac{CSI}, such as \ac{CQI}~\cite{3GPP38214}, and \ac{HARQ} feedback--a 1-bit information indicating whether a prior packet transmission was successful or not. While this approach suits best-effort traffic, its reliance on long communication sessions to converge makes is suboptimal to address connectivity service intents under more general conditions, such as short bursty traffic, fast channel aging, medium-high user mobility, etc. 

A \ac{MORL} approach instead enables to dynamic \ac{LA} toward selection transmission parameters that best align with different service intents. For example, selecting \ac{MCS} conservatively—e.g., lower modulation orders such as \ac{QPSK} or reduced code rates—favor robustness by lowering the probability of decoding errors. This enables to achieve highly reliable transmissions at the cost of throughput, since more time-frequency \ac{RE} are required per information bit. Conversely, an aggressive \ac{MCS} selection can push spectral efficiency closer to or even beyond the instantaneous link capacity, exploiting retransmissions to increase data rate and reduce latency for best-effort traffic. However, overly aggressive choices may lead to excessive retransmissions and throughput degradation. By explicitly balancing these conflicting objectives, MORL allows \ac{LA} to adapt beyond fixed \ac{BLER}-driven policies, supporting a wider range of connectivity intents.

\subsection{MOMDP Design for Link Adaptation}

Our goal is to train a single \ac{PEUMO} for \ac{LA} to learn the Pareto frontier outlining the optimal trade-off between the utilization of radio resources and the amount of information bits delivered by a packet transmission.
 
As \ac{LA} and \ac{HARQ} operate on a per-\ac{UE} and per-packet transmission basis, we formulate this problem as an episodic \ac{MOMDP} $\mathcal{M} = \left\langle\mathcal{S}, \mathcal{A}, p, \vr, \Omega, \gamma, \rho_{0} \right\rangle$, where $\mathcal{S}$ denotes the state space, $\mathcal{A}$ the action space, $p(s^{\prime} \mid s, a)$ the transition dynamics, $\vr:\mathcal{S} \times \mathcal{A} \rightarrow \mathbb{R}^K$ a multi-dimensional reward vector, $\Omega\subseteq \mathbb{R}^K$ the reward preference space, $\gamma \in [0,1)$ a discount factor, and $\rho_{0}$ the initial state distribution.

An episode models the lifespan of a \ac{UE} packet in the \ac{HARQ} process—from its first transmission to either a successful reception or the packet being dropped upon $N$ transmission attempts, as illustrated in~\Cref{fig:episode}. This enables us to train a single \ac{RL} policy from the collective experience generated by any \acp{UE} across the network. A transition in the episode represents the duration of a packet transmission in the \ac{HARQ} process, from the selection of LA parameters (i.e., the action) to the reception of the associated \ac{HARQ} feedback, i.e., an \ac{ACK} or \ac{NACK} for successful or failed transmission, respectively. For instance, the 3GPP \ac{5G} \ac{NR} system, used in our evaluations, supports at most four packet retransmissions. Hence, the episode length $N$ may range from one to five~steps. Each step is characterized by a state, an action, and an associated reward and preference vectors, as presented next. 

\begin{figure}[t]
    \centering
    \includegraphics[width=1\columnwidth]{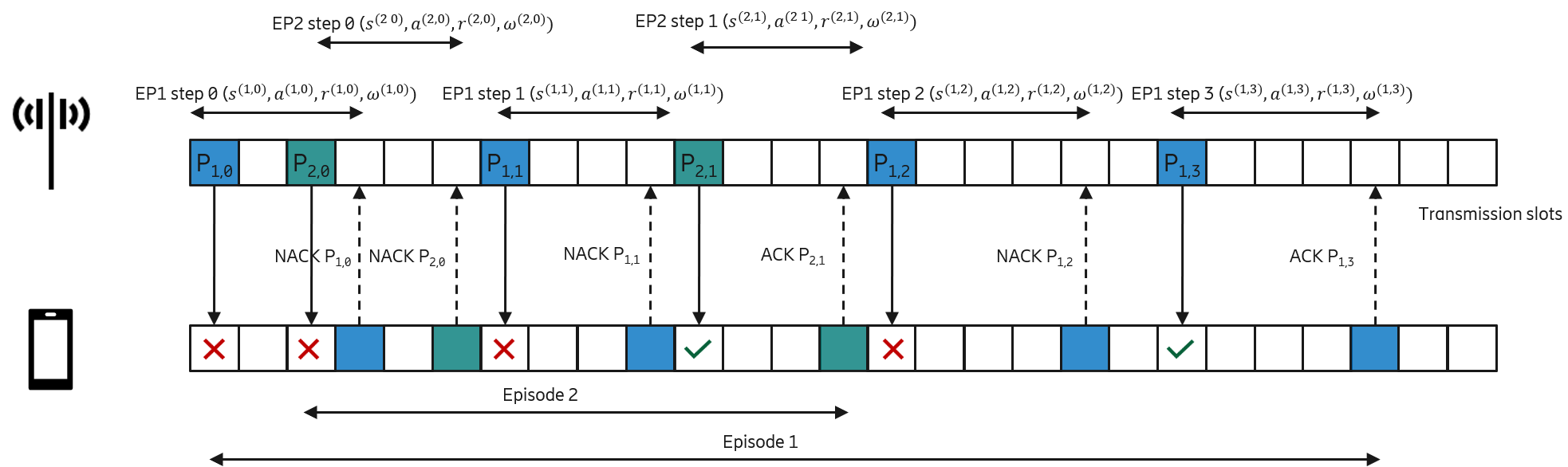}
    \caption{Example of \ac{MOMDP} episodes modelling the downlink \ac{LA} and \ac{HARQ} process. Here, the notation $P_{i, n}$ denotes the $n$-th transmission attempt of the $i$-th data packet.}
    \label{fig:episode}
\end{figure}

\subsection{Action Space}

The action space consists of the set of \ac{MCS} index values supported by a communication standard, i.e., $\mathcal{A}=\{a_m \mid a_m = m, m= 0, \dots, M-1\}$. Therefore, an action $a_m \in \mathcal{A}$ implicitly provides a combination of modulation order, code rate, and spectral efficiency to be used to transmit a packet. The \ac{5G} \ac{NR} system used in our evaluations in \Cref{sec:experiment} supports $M=28$ \ac{MCS} index values as specified in Table 5.1.3.1-1 or Table 5.1.3.1-2 in~\cite{3GPP38214}, corresponding to modulation orders up to 64QAM and 256QAM, respectively. 

For a new packet transmission, the selection of an \ac{MCS} index, combined with the time-frequency resources allocated by a scheduler, determines the amount of information bits, i.e., the \ac{TBS}, to be transmitted. Packet re-transmissions, however, reuse the \ac{TBS} value of the original transmission, as no new information is transmitted. A packet re-transmission, however, may occur with a different \ac{MCS} index therefore resulting in  possibly a different amount of radio resources.

\subsection{Reward Vector and Preference Space}\label{appendixE:subsec:morl_reward}

We design a two-dimensional reward function $\vr = [r_1, r_2]^{\top} \in \mathcal{S}\times \mathcal{A} \in \mathbb{R}^{2} $ with two competing components: $r_1$ representing the amount of information bits successfully carried by a packet; and $r_2$ denoting the cost, in terms of time-frequency resource, incurred in each individual transmission of the packet. Specifically, for each transmission attempt $n$ of a packet, we define the reward function as
\begin{align}
    \vr^{(n)}(s,a) = \left\{
    \begin{array}{cl}
       \left[\begin{array}{c}
            0\\
            -\frac{N_{RE}^{(n)}}{N_{RE}^{\max}}
       \end{array}
       \right]  &  \mathrm{if\;transmision\;fails\;at}\; n\mathrm{-th}\;\mathrm{attempt},\\
       & \\
         \left[\begin{array}{c}
            \frac{TBS}{N_{RE}^{\max}}\\
            -\frac{N_{RE}^{(n)}}{N_{RE}^{\max}}
       \end{array}
       \right]  &  \mathrm{if\;transmision\;succeeds\;at}\; n\mathrm{-th}\;\mathrm{attempt},
    \end{array}
    \right.\label{eqn:2D_reward}
\end{align}
where the \ac{TBS} and $N_{RE}^{(n)}$ denote the number of information bits carried by the packet and the number of \ac{RE} used for the $n$-th transmission attempt, respectively, and $N_{RE}^{\max}$ is the maximum number of \acp{RE} available, given the system bandwidth. Scaling the reward components by $N_{RE}^{\max}$ has a twofold purpose: Firstly, it keeps each component within similar range of values, while preserving the functional relation between the \ac{MCS} index selected to transmit \ac{TBS} information bits and the required number of time-frequency \ac{RE}. This relation is specified by communication standards, as in the \ac{3GPP} \ac{TS} 38.211~\cite{3GPP38211}. Secondly, it makes the reward design agnostic to the system bandwidth, with $\frac{TBS}{N_{RE}^{\max}}$ representing the spectral efficiency for transmitting \ac{TBS} bits using the entire system bandwidth. This allows us to employ domain randomization in training (cf.~\cref{appendix:F1_network_simulator}) to improve model generalization over the \ac{RAN} environment.

Therefore, for each packet transmission attempt $n$, the first reward component takes value $r_1^{(n)}=\nicefrac{TBS}{N_{RE}^{\max}}$ is the transmission is successful or $r_1^{(n)}=0$ otherwise. The second reward component, on the other hand, always indicates the resource cost incurred at the current transmission attempt $n$, i.e., $r_2^{(n)}= - \nicefrac{N_{RE}^{(n)}}{N_{RE}^{\max}}$, regardless of whether the transmission attempt succeeds or fails.

\subsection{State Design}

A key goal of our design is to achieve model generalization across diverse \ac{RAN} environments, enabling a single \ac{MORL} model to operate reliably under different deployments and radio conditions. To this end, we construct a rich state space $\mathcal{S}\subseteq \mathbb{R}^{K}$ and apply domain randomization in training~\cite{Igl:19domain_rand}. To model the state space $\mathcal{S}$ for link adaptation, we follow~\cite{demirel:2025} which considers a \ac{DQN} approach for \ac{LA} with a single, fixed reward design based on the link spectral efficiency. In particular, we model $\mathcal{S}$ using two types of features: (a) semi-static information characterizing the network deployment surrounding the \ac{UE}; (b) and information describing observable link dynamics relevant to infer \ac{LA} parameters. 

Semi-static information characterizing the network deployment may include, for instance,  deployment type (e.g., rural, urban, dense urban, etc.), location, orientation, relationships among network sites or radio cells, as well as technology configurations, such as whether the system operated in time-duplex or full duplex mode, carrier frequency, system bandwidth, transmit power, antenna array type, etc. On the other hand, information characterizing the dynamics of \ac{LA} consist of real-time observation (measured in a milliseconds timescale), such as channel state information, \ac{HARQ} feedback, measurement of path loss, data buffer state, historical actions, and more. We refer to~\cite{demirel:2025} for a complete description of the state features.

%-------------------------------------------------------------------------
\clearpage
\section{Extended Experimental Evaluation}
\label{appendix:Experiments}
%-------------------------------------------------------------------------

This Section extends the discussion and empirical evaluation presented in~\Cref{sec:experiment} with additional results. We organized the material as follows:~\Cref{appendix:F1_network_simulator,appendix:F2_training_setup} describe the network simulator environment and training setup for the \ac{MORL} controller agent;~\Cref{appendix:F3_MORLagent_eval} extend the analysis of the controller agent presented in~\Cref{subsec:MORL_controller_LA}, including an additional scenario with two communication services. Finally,~\Cref{appendix:F4_BO_preference_steering} extends our analysis of the intent fulfillment loop.

%-------------------------------------------------------------------------
\subsection{Network Simulator Environment}
\label{appendix:F1_network_simulator}
%-------------------------------------------------------------------------

We train and evaluate the \ac{MORL} controller agent using a high-fidelity, event-driven system-level simulator compliant with the 3GPP 5G \ac{NR} specifications. Each rollout simulation models a heterogeneous multi-cell \ac{RAN} operating in \ac{TDD} mode with \ac{SU-MIMO} transmission. The carrier frequency is set to 3.5~GHz, and the physical layer follows the \ac{OFDM} numerology $\mu=0$ specified in 3GPP TS 38.211 (cf.~Table~4.2-1~\cite{3GPP38211}).

To improve model generalization across diverse \ac{RAN}deployments and radio enviroments, we apply domain randomization across multiple network characteristics, summarized in~\Cref{table:sim_params}. Each simulation consists of three tri-sector radio sites, randomly configured as either conventional \ac{MIMO} or \ac{mMIMO}, with antenna attributes defined in~\Cref{table:sim_params}. Site-level parameters such as location, cell radius, system bandwidth, and downlink transmit power are also randomized by sampling values from the same parameter set.

The training scenario is further diversified by randomizing cell load, traffic type, \acp{UE}, and receiver configuration. \acp{UE} are generated with a mixture of \ac{FB} and \ac{eMBB} traffic, randomly placed in the simulated area according to one of the indoor/outdoor probability distributions in~\Cref{table:sim_params}. Each \ac{eMBB} \ac{UE} generates traffic with variable packet size and inter-arrival times, modeled using empirical distributions derived from field measurement campaigns.  

Finally, individual \acp{UE} are randomized in terms of antenna configuration, mobility (speed), and receiver implementation. The latter accounts for manufacturer-specific differences in hardware (e.g., antenna arrays and chipsets) and internal algorithms (e.g., \ac{CSI} estimation), which influence perceived radio conditions. This randomized environment ensures that the \ac{MORL} controller agent is trained under various realistic network conditions, thus improving its ability to generalize to unseen scenarios.

\begin{table}[!thb]
	\caption{\ac{RAN} environment simulation parameters for domain randomization during training.} % title name of the table
	\centering % centering table
	\begin{tabular}{l l l} % creating 10 columns
		\toprule[1pt]\midrule[0.3pt]
		\textbf{Parameter} & \textbf{Value range} & \textbf{Description} \\ [0.5ex]
		\midrule
		% Entering 1st row
		Duplexing type & TDD & Fixed\\
		Carrier frequency & 3.5 GHz & Fixed \\
		Deployment type & 3-site 9-sector & \\
		Site type & \{\ac{MIMO}, \ac{mMIMO}\} & Randomized\\
		Antenna array & 1x2x2 \ac{MIMO} (4) & Fixed  \\
		&8x4x2 \ac{mMIMO} (64) & Fixed \\
		Cell radius & \{166, 300, 600, 900, 1200\} m & Randomized \\
		Bandwidth & \{20, 40, 50, 80, 100\} MHz & Randomized\\
		Number of sub-bands & \{20, 106, 133, 217, 273\} & Randomized \\
		DL TX power & \{20, 40, 50, 80, 100\} W  & Randomized \\
		\ac{UE} antennas & \{2, 4\} & Randomized \\
		Maximum TX rank & \{2, 4\} & As per \ac{UE} ant. \\
		Maximum DL TX & 5 & Fixed \\
		\ac{UE} traffic type & \{\ac{FB}, \ac{eMBB}\} & Randomized \\
		Number \ac{FB} \acp{UE}  & \{1, 5, 10\} & Randomized \\
		Number \ac{eMBB} \acp{UE} & \{0, 10, 25, 50, 100, 200, 300\} & Randomized\\
		Speed \ac{UE} \ac{FB} & \{0.67, 10, 15, 30\} m/s & Randomized \\
		Speed \ac{UE} \ac{eMBB} & \{0.67, 1.5, 3\} m/s & Randomized\\
		\ac{UE} receiver types & \{type0, type1, type2, type3\} & Randomized \\
		Indoor probability & \{0.2, 0.4, 0.8\} & Randomized\\
		\midrule[0.3pt]%\bottomrule[1pt]
	\end{tabular}\label{table:sim_params}
\end{table}

%-------------------------------------------------------------------------
\subsection{Training Setup}
\label{appendix:F2_training_setup}
%-------------------------------------------------------------------------
We train the \ac{MORL} \ac{LA} controller agent using our \ac{D-EQL} algorithm, described in~\Cref{appendix:distributed_morl}, with a single GPU and 560 CPU cores. The learner uses Adam optimizer~\citep{KiB:17} with a learning rate of $5\times10^{-5}$, weight decay of $\nicefrac{0.02}{512}$, and default momentum terms $(\beta_{1},\beta_{2})=(0.9,0.999)$, and \ac{MSE} loss. He initialization is used for all network parameters. A soft target update policy is applied with an update factor of $0.001$ and a period of one timestep. The model synchronization period is 200 gradient iterations, and training begins after 50,000 timesteps. To reduce communication overhead between the learner and replay memory, 16 batches are prefetched per cycle. Experience and preference batches contain 512 and 128 samples, respectively.

The actor subsystem consists of 40 CPU-based rollout workers, each interacting with 14 parallel simulations (one CPU core per simulation), resulting in efficient experience generation. Each actor collects about 112 samples per second, for a total of roughly 279 million over the training horizon. The learner processes about 27,500 samples per second for gradient updates. Each actor maintains a local buffer of 2,500 samples and follows a linear epsilon-greedy strategy, decaying $\epsilon$ from 0.8 to 0.05 over 5.5 million timesteps. The agent operates with a discount factor of 1.0. Training throughput is about 53.8 batches per second, with each batch containing 65,536 samples.

Replay memory is organized as a single module with four independent shards, each capable of storing four million samples. Each shard has a fixed communication path to a designated learner shard, minimizing cross-shard delays. Prioritized experience replay is employed with parameters $\alpha=0.6$ and $\beta=0.4$ to improve sample efficiency. In total, the system runs 11,200 simulations under different random seeds to ensure reproducibility across diverse network conditions. Communication details of the distributed system are further discussed in~\Cref{appendix:hyperparams}.

Furthermore, we explore a preference space $\Omega=\Delta^{1}\triangleq \{\boldsymbol{\omega} \mid \boldsymbol{\omega} = [\omega, 1-\omega]^{\top}, \omega\in[0, 1]\}$ defined for the two-dimensional reward in~\Cref{eqn:2D_reward}. The preference space is partitioned into strata, and each actor is assigned to explore a different stratum. Preferences are then sampled from the corresponding strata following the procedure in~\Cref{alg:sls-build} (stratum construction) and~\Cref{alg:sls-sample} (stratum-based sampling). Further details on all hyperparameters used in training are summarized in~\Cref{appendix:hyperparams}.

%-------------------------------------------------------------------------
\subsection{Testing the MORL LA Controller Agent} \label{appendix:F3_MORLagent_eval}
%-------------------------------------------------------------------------
\subsubsection{Single Connectivity Service}\label{appendix:F3_single_service}

We extend the analysis presented in~\Cref{subsec:MORL_controller_LA} by further evaluating the \ac{MORL}-based \ac{LA} controller for a single connectivity service: video streaming users. Focusing on a single user class simplifies the analysis of the Pareto front achievable by the \ac{MORL} controller agent.

\Cref{fig:pareto-frontier} shows the Pareto front defined by the two-dimensional reward function in~\Cref{eqn:2D_reward} for a 3-cell deployment with 10 streaming users. Each point on the frontier is obtained from 480 independent simulations, where each radio cell employs the \ac{MORL}-based \ac{LA} controller with a fixed preference value $\omega \in [0,1]$. The parameter $\omega$ determines the trade-off between minimizing radio resources required for packet transmission and maximizing the transmitted payload size. Moving along the frontier results in different performance trade-offs in network \acp{KPI}, as detailed in~\Cref{fig:spec-eff-vs-throughput}.

At the top-right corner of~\Cref{fig:pareto-frontier,fig:spec-eff-vs-throughput}, large values of $\omega$ prioritize payload maximization (i.e., high \ac{TBS}), but at the cost of excessive radio resource consumption. In this case, the controller~agent selects overly aggressive \ac{MCS} values relative to the channel state (see~\Cref{fig:controller_MCS_distr}), targeting spectral efficiencies beyond channel capacity. This leads to frequent transmission failures (\ac{BLER} $\approx 60\%$, see~\Cref{fig:controller_BLER_distr}) and numerous retransmissions, yielding suboptimal throughput and spectral efficiency.

In contrast, when $\omega \approx 0$ (bottom-left corner of~\Cref{fig:pareto-frontier,fig:spec-eff-vs-throughput}), the controller favors conservative \ac{MCS} choices (see~\Cref{fig:controller_MCS_distr}), targeting spectral efficiencies well below channel capacity. Although this results in low resource utilization and highly reliable transmissions (\ac{BLER} $\approx 0\%$, see~\Cref{fig:controller_BLER_distr}), it under-utilizes favorable channel conditions by employing low modulation orders and code rates. Thus, the system fails to deliver higher payloads, limiting throughput and spectral efficiency.

Overall, throughput and spectral efficiency peak at $\omega \approx 0.34$ and $\omega \approx 0.5$, respectively. Beyond these values, throughput declines more rapidly than spectral efficiency due to the rising \ac{BLER}, which reduces transmission reliability. These results highlight the intrinsic tension between maximizing data rate and maintaining reliability in link adaptation, highlighting how a \ac{MORL} controller agent can be deployed to provide differentiated connectivity services. 

\begin{figure}[t!]
    \begin{subfigure}[h]{1.0\textwidth}
        \centering
        \includegraphics[width=0.6\textwidth]{figures/pareto_frontier.pdf}
    \end{subfigure}
    \caption{\textbf{Pareto front illustrating the trade-off between transport block size and resource utilization.} The Pareto front captures the relationship between the transport block size (vertical axis) and the total number of resource elements (horizontal axis) across a range of system configurations. Each point represents an outcome from $480$ independent simulations, computed using distinct preference vectors $\omega\in [0, 1]$, and is color-coded by the corresponding preference weight.}
    \label{fig:pareto-frontier}

    \vspace{1em} % spacing between groups of subfigures

    \begin{subfigure}[h]{1.0\textwidth}
        \centering
        \includegraphics[width=0.6\textwidth]{figures/kpis.pdf}
    \end{subfigure}
    \caption{\textbf{Joint characterization of spectral efficiency and throughput under varying preference weights $\omega$, with \ac{BLER}-encoded performance.} The figure presents the trade-off between spectral efficiency (squares, left axis) and throughput (circles, right axis) as a function of the preference weight $\omega$, which governs the optimization objective. Data points are color-coded based on \ac{BLER}, with cooler hues indicating lower error rates. Dashed vertical lines denote peaks in the performance~trends.}
    \label{fig:spec-eff-vs-throughput}
\end{figure}

\begin{figure*}[t!]
    \centering
    \begin{subfigure}{0.49\textwidth}
        \centering
        \includegraphics[width=\linewidth]{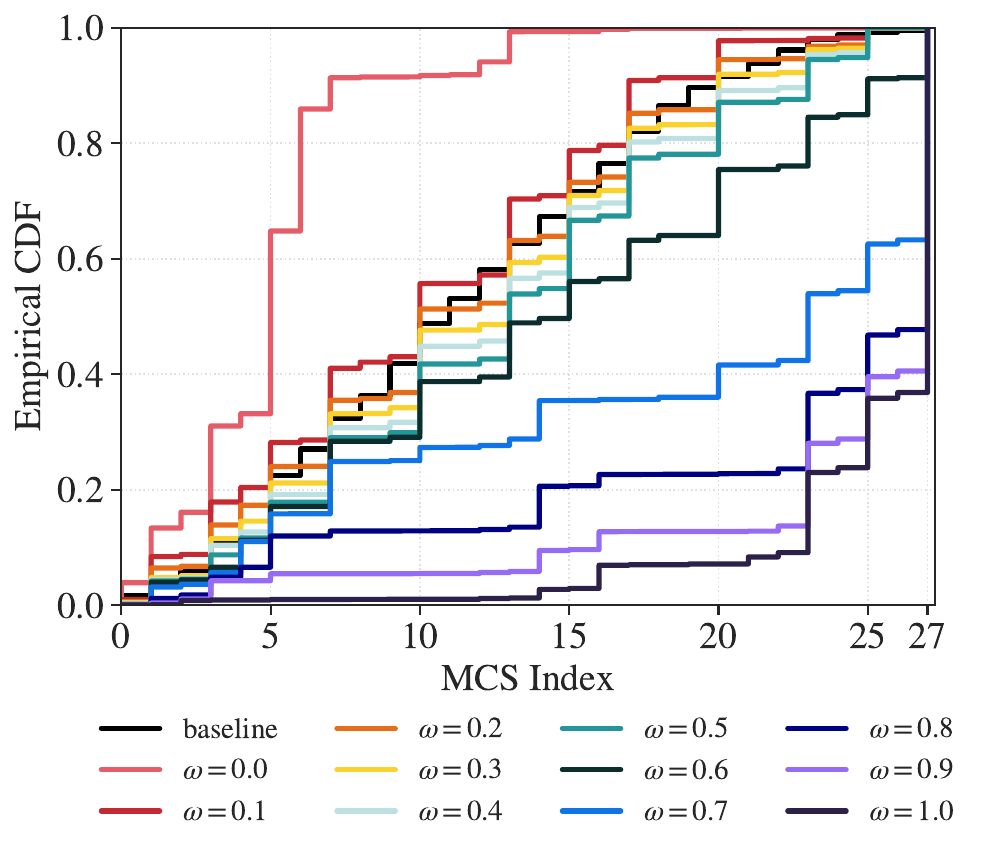}
        \caption{Action (\ac{MCS} index) distributions for various $\omega$.}
        \label{fig:controller_MCS_distr}
    \end{subfigure}
    \hfill
    \begin{subfigure}{0.49\textwidth}
        \centering
        \includegraphics[width=\linewidth]{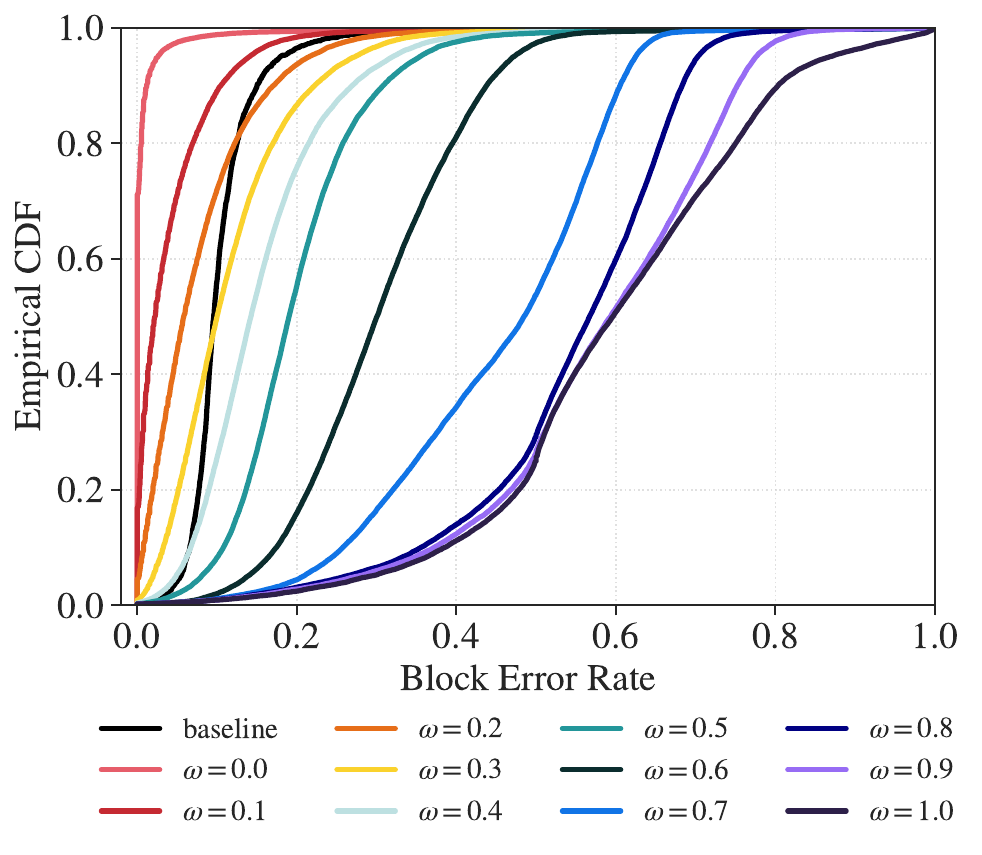}
        \caption{Block error rate distributions for various $\omega$.}
        \label{fig:controller_BLER_distr}
    \end{subfigure}
    \caption{Controller agent behavior for different preference values $\omega$. }
    \label{fig:controller_policies}
\end{figure*}

\Cref{fig:controller_policies} further illustrates how the controller policy changes by selecting different preference values. In particular,~\Cref{fig:controller_MCS_distr} illustrate the action (i.e., \ac{MCS} index) distribution induced by different preference values $\omega$. For example, it clearly highlights how small values of $\omega$ induce a link adaptation policy that selects overly conservative \ac{MCS} index values relative to the channel state, thus aiming for transmissions with low spectral efficiency (i.e, characterized by low modulation order and code rate). Although this makes the transmission very robust, as demonstrated by the corresponding \ac{BLER} distribution in~\Cref{fig:controller_BLER_distr}, such policy leads to low data throughput.

Conversely,~\Cref{fig:controller_MCS_distr} also shows how large preference values $\omega\approx 1$ induce a link adaptation policy that selects overly aggressive \ac{MCS} index values relative to the channel state, thus aiming for transmissions with too high spectral efficiency (i.e, characterized by high modulation order and code rates). This makes the transmissions over-the-air unreliable, as demonstrated by the corresponding \ac{BLER} distribution in~\Cref{fig:controller_BLER_distr}.

%-------------------------------------------------------------------------
\subsubsection{Multi Connectivity Services with QoS Differentiation}
%-------------------------------------------------------------------------

We consider a practical scenario with two service applications with distinct \ac{QoS} profiles concurrently sharing the resources of a radio cell: \emph{real-time gaming} and \emph{web browsing} users. \Cref{tab:QoS_profile_comparison} characterizes their \ac{QoS} profile in terms of purpose, service type, \ac{DSCP} value, \ac{5QI} value and \ac{QoS} features.

\begin{table}[h!]
\centering
\renewcommand{\arraystretch}{1.5}
\setlength{\tabcolsep}{7.5pt}
\footnotesize
\caption{\ac{QoS} Profile Comparison: Real-time gaming vs Web Browsing}
\begin{tabular}{p{2cm} p{5cm} p{5cm}}
\toprule[1pt]\midrule[0.3pt]
\textbf{Aspect} & \textbf{Real-time gaming} & \textbf{Web browsing} \\
\midrule
\textbf{Purpose} & Real-time, delay-sensitive traffic & Delay-tolerant, no bandwidth guarantees \\
\textbf{Service type} & \leavevmode\Ac{EF} & \leavevmode\Ac{BE} \\
\textbf{\ac{DSCP} value} & \leavevmode\ac{EF} (46) & \leavevmode\ac{BE} (0) \iffalse or CS0\fi \\
\textbf{\ac{5QI}} & 3 & 9 \\
\textbf{\ac{QoS} features} &  \leavevmode\Ac{GBR} \newline Ultra-low latency \newline Low jitter \newline \Ac{PDB} $ \approx 50\text{ms}$ \newline \ac{PER} $\approx 1 \times 10^{-3}$ & \leavevmode\Ac{non-GBR} \newline No strict latency  \newline No strict jitter \newline \ac{PDB} $\approx 300\text{ms}$ \\
\bottomrule
\end{tabular} \label{tab:QoS_profile_comparison}
\end{table}

\textbf{Real-time gaming} traffic consists of continuous, high-frequency bidirectional streams, often transmitted over \ac{UDP} based protocols to support real-time video rendering and user input feedback. This type of traffic demands substantially higher bitrates (ranging from 5 to 25 Mbps), ultra-low latency, and minimal jitter to maintain responsive and seamless game-play. As such, real-time gaming is classified as \ac{GBR} traffic and expedited forwarding service, necessitating stringent \ac{QoS} settings, including \ac{5QI} = 3 and \ac{DSCP} values like \ac{EF} (46), corresponding to a \ac{PDB} of $\approx 50$ ms and \ac{PER} $\approx 10^{-3}$ (cf.~Table~5.7.4-1, \cite{3GPP23501}).

\textbf{Web browsing} traffic is instead elastic, delay-tolerant, and bursty, following a request-response model (like the \ac{HTTP}) over reliable \ac{TCP} connections. It generally demands low to moderate bitrates (typically below 1 Mbps) and is relatively insensitive to latency and jitter, making it tolerant of network delays. Web browsing is typically classified as  \ac{non-GBR}  traffic, associated with \ac{5QI} = 9 and \ac{DSCP} values such as \ac{BE} (0), corresponding to a \ac{PDB} of $\approx 300$ ms and \ac{PER} $\approx 10^{-6}$.

\textbf{The evaluation scenario} consists of a dense-urban deployment comprising three 3-sector sites operating at 3.5 GHz with a 100 MHz bandwidth with inter-site distance of 167 meters to ensure full uplink coverage across the simulation area. Traffic is predominantly downlink-oriented, with minimal uplink activity. Real-time gaming users remain active throughout the simulation duration, whereas web browsing users follow a Poisson arrival process with a distribution modeled to fit realistic field data patterns and depart the simulation upon completing their downloads (e.g., webpage, email, etc.). 

Unlike the single-service application considered in~\Cref{appendix:F3_single_service}, the controller agent here applies a different preference vector to each service application:
$\boldsymbol{\omega}_{\mathrm{g}} = [\omega_{\mathrm{g}}, 1-\omega_{\mathrm{g}}]^T$ and $\boldsymbol{\omega}_{\mathrm{w}} = [\omega_{\mathrm{w}}, 1-\omega_{\mathrm{w}}]^T$. Like before, we analyze how shifting the \ac{MORL} controller policy along Pareto front defined by the two reward components in~\Cref{eqn:2D_reward}, by tuning $\boldsymbol{\omega}_{\mathrm{g}}$ or $\boldsymbol{\omega}_{\mathrm{w}}$, produces different trade-offs in various performance KPIs. Under these settings, the values achievable for a performance \ac{KPI} $g(\cdot)$ of each service application depends on both preference vectors, i.e., $g_g = g_g(\boldsymbol{\omega}_{\mathrm{g}}, \boldsymbol{\omega}_{\mathrm{w}})$ $g_w = g_w(\boldsymbol{\omega}_{\mathrm{g}}, \boldsymbol{\omega}_{\mathrm{w}})$.  

\Cref{fig:LowLoad_KPIs} to \Cref{fig:VeryHighLoad_KPIs} present the achievable user experience for real-time gaming and web browsing services in terms of three \acp{KPI} that closely relate to their \ac{QoS} profile: user throughput, latency, and \ac{BLER}. Each figure refers to one of the four traffic load scenarios, with a mixture of indoor and outdoor users, summarized in~\Cref{tab:sim_params_1}. Each figure also depicts the average \ac{MCS} value selected by the \ac{MORL} controller, showing how the controller agent applies a different policy to each service application for different combinations of preference vectors $\boldsymbol{\omega}_{\mathrm{g}}$ or $\boldsymbol{\omega}_{\mathrm{w}}$ and network load conditions. 

\begin{table}[t]
    \small
	\caption{\ac{RAN} environment simulation parameters.} % title name of the table
	\centering % centering table
    \renewcommand{\arraystretch}{1.3}
    \begin{tabular}{lccccc}
		\toprule[1pt]\midrule[0.3pt]
		\textbf{Load scenario} & \multicolumn{2}{c}{\textbf{Number of gaming users}} & \multicolumn{2}{c}{\textbf{Web users arrival  rate}} & \textbf{Performance \acp{KPI}} \\
         & Indoor & Outdoor & Indoor & Outdoor & \\[0.5ex]
		\midrule
        \textbf{Low} & 12 & 6 & 3.15 & 1.35 & \Cref{fig:LowLoad_KPIs}  \\
        \textbf{Medium} & 24 & 12 & 6.3 & 2.7 & \Cref{fig:MediumLoad_KPIs}  \\
        \textbf{High} & 48 & 24 & 12.6 & 5.4 & \Cref{fig:HighLoad_KPIs}  \\
        \textbf{Very high} & 72 & 36 & 18.9 & 8.1 &  \Cref{fig:VeryHighLoad_KPIs} \\
		\midrule[0.3pt]%\bottomrule[1pt]
	\end{tabular}\label{tab:sim_params_1}
\end{table}

For example, let us analyze the throughput distributions for real-time gaming uses (\Cref{subfig:LowLoad_a} to \Cref{subfig:VeryHighLoad_a}) and web browsing user (\Cref{subfig:LowLoad_b} to \Cref{subfig:VeryHighLoad_b}) for the various scenarios. For low and medium low load conditions, cf. \Cref{subfig:LowLoad_a}-\ref{subfig:LowLoad_b} and \Cref{subfig:MediumLoad_a}-\ref{subfig:MediumLoad_b}, respectively the mean throughput distribution of the two services shows similarities due to the abundance of radio resources compared to traffic load. The difference in mean throughput magnitude between the two type of services (i.e., Mbps vs Kbps) can be explained by the difference in traffic: continuous video streaming vs sporadic downloads of small packets.

At high and very high load conditions, cf.~\Crefrange{subfig:HighLoad_a}{subfig:HighLoad_b} and \Crefrange{subfig:VeryHighLoad_a}{subfig:VeryHighLoad_b}, respectively, the two distributions of throughput start showing significant differences, clearly revealing how each service achieves the best mean throughput with different combinations of preference values $(\boldsymbol{\omega}_{\mathrm{g}}, \boldsymbol{\omega}_{\mathrm{w}})$. As the traffic load becomes very high, the region of preference values $(\boldsymbol{\omega}_{\mathrm{g}}, \boldsymbol{\omega}_{\mathrm{w}})$ that optimizes the throughput of each service shrinks into a smaller and well defined area. Furthermore, since in these scenarios more users share the same amount of radio resources, both services achieve lower throughput.

Similar trends can be observed for latency (expressed as \ac{RTT} for real-time gaming users and as webpage load time for web browsing users, respectively), and block error rate. The conditions observed in different service \ac{KPI} in~\Cref{fig:LowLoad_KPIs} to \Cref{fig:VeryHighLoad_KPIs} can be related to the constraints $g_i(\boldsymbol{\omega})\leq b_i$ that can be required to be fulfilled by a service intent in the optimization problem~(\ref{eq:optimization}) solved by the optimizer agent to dynamically adapt the preference vectors for each service applications. For instance, in~\Cref{subsec:triadic_agents_workflow_validation} we presented an example with a video streaming service requiring a minimum of 7 Mbps per active user (i.e., $g_{i, thr}(\boldsymbol{\omega})\geq 7$). This threshold is rated as good for most real-time gaming
applications at 720p and 1080p resolutions, and excellent for video streaming, given that typical requirements range from 5 Mbps for HD to 15 Mbps for 4K content. 

% ----------------------------------------------------------------------------
% Low load
% ----------------------------------------------------------------------------
\begin{figure}[p]
    \centering
    
    % Row 1
    \begin{subfigure}[t]{0.425\textwidth}
        \centering
        \includegraphics[width=\linewidth]{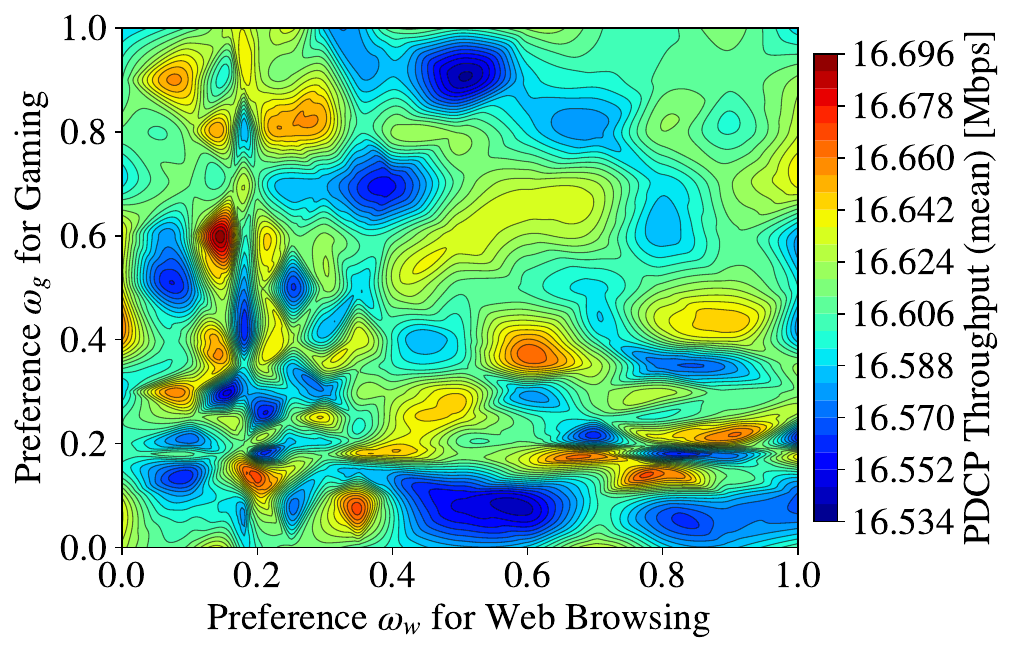}
        \caption{Real-time gaming: Mean PDCP Throughput}
        \label{subfig:LowLoad_a}
    \end{subfigure}
    \hfill
    \begin{subfigure}[t]{0.425\textwidth}
        \centering
        \includegraphics[width=\linewidth]{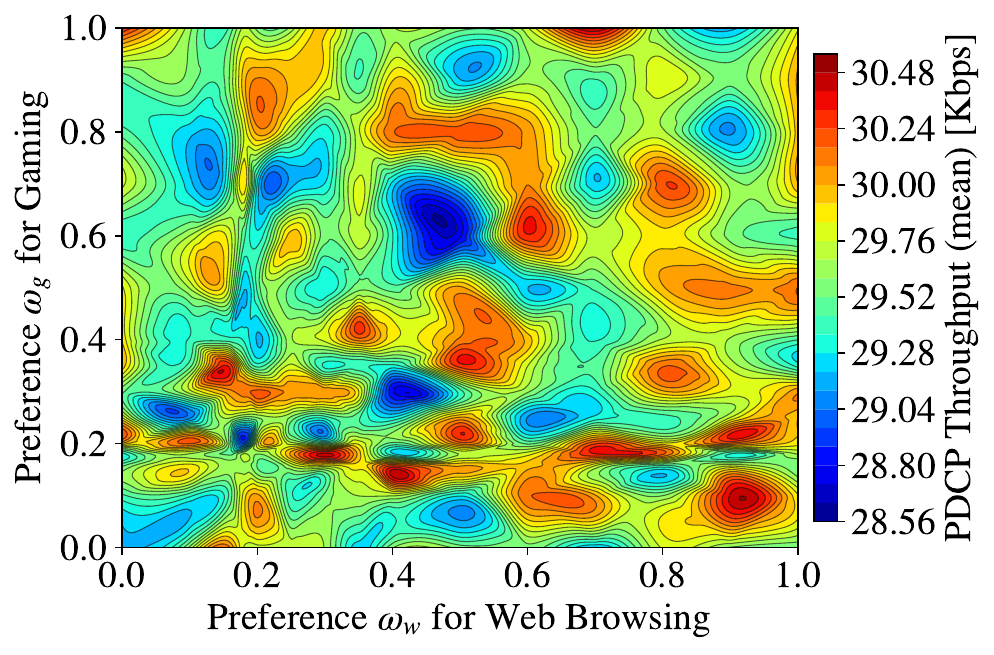}
        \caption{Web browsing: Mean PDCP Throughput}
        \label{subfig:LowLoad_b}
    \end{subfigure}

    % Row 2
    \begin{subfigure}[t]{0.425\textwidth}
        \centering
        \includegraphics[width=\linewidth]{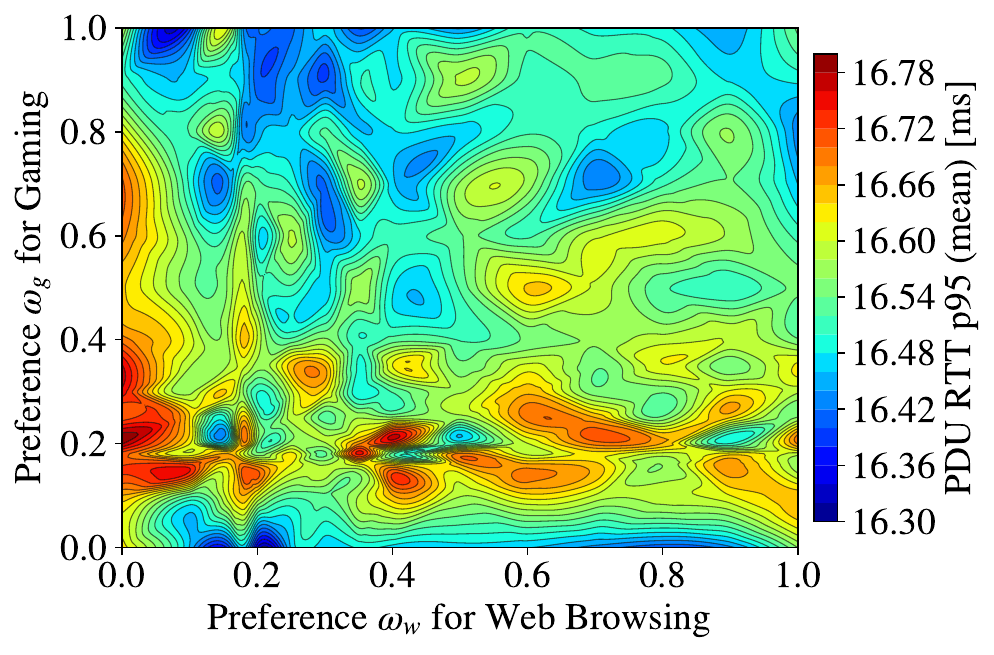}
        \caption{Real-time gaming: 95th Percentile PDU RTT}
        \label{subfig:LowLoad_c}
    \end{subfigure}
    \hfill
    \begin{subfigure}[t]{0.425\textwidth}
        \centering
        \includegraphics[width=\linewidth]{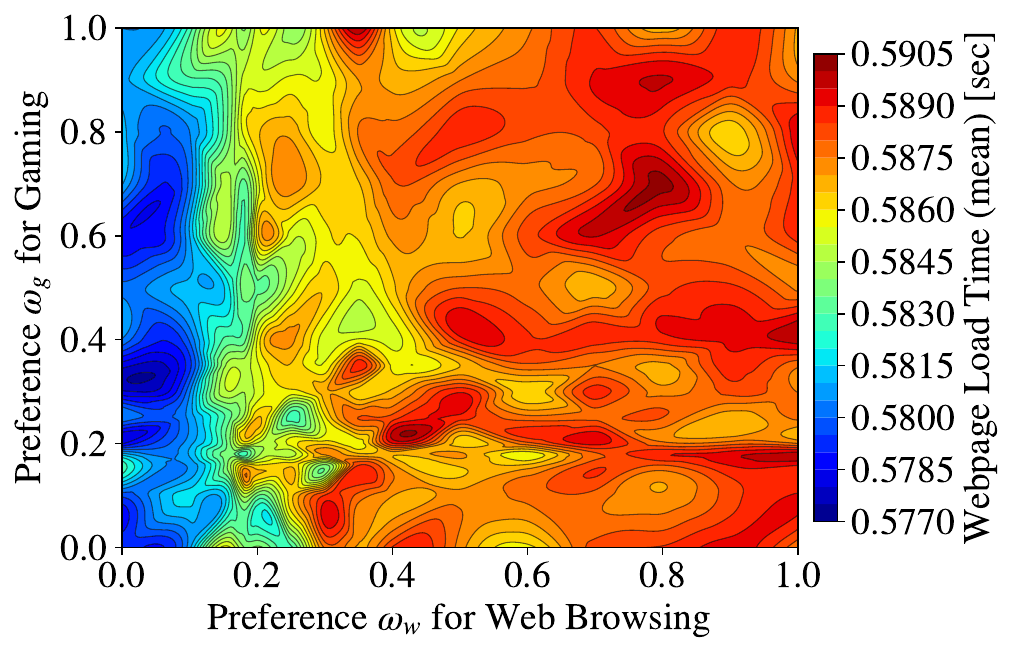}
        \caption{Web Browsing: Loading time}
        \label{subfig:LowLoad_d}
    \end{subfigure}

    % Row 3
    \begin{subfigure}[t]{0.425\textwidth}
        \centering
        \includegraphics[width=\linewidth]{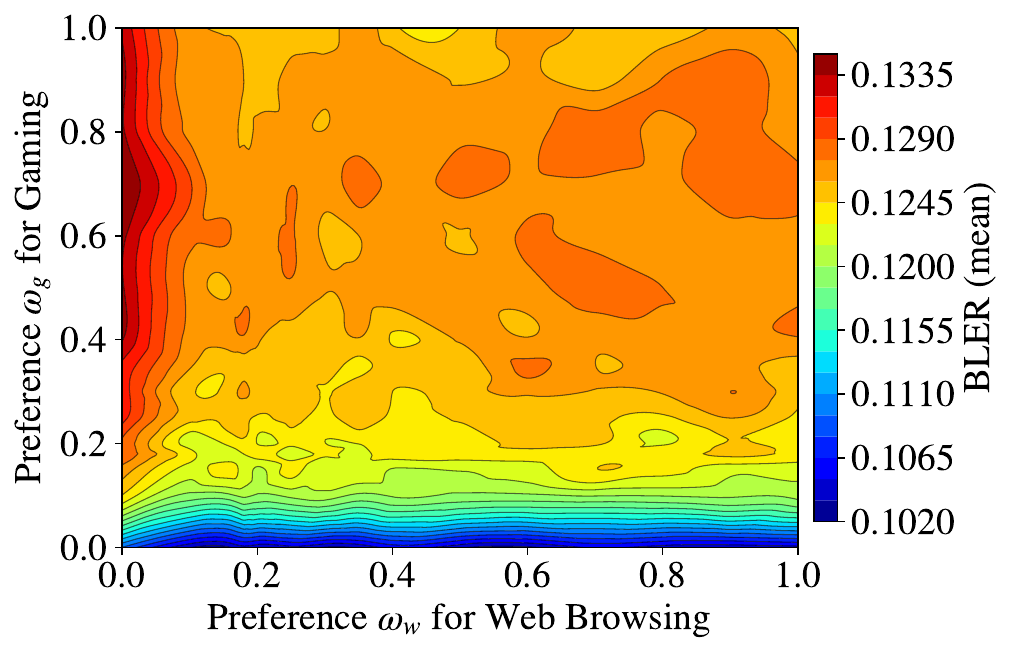}
        \caption{Real-time gaming: Mean \ac{BLER}.}
        \label{subfig:LowLoad_e}
    \end{subfigure}
    \hfill
    \begin{subfigure}[t]{0.425\textwidth}
        \centering
        \includegraphics[width=\linewidth]{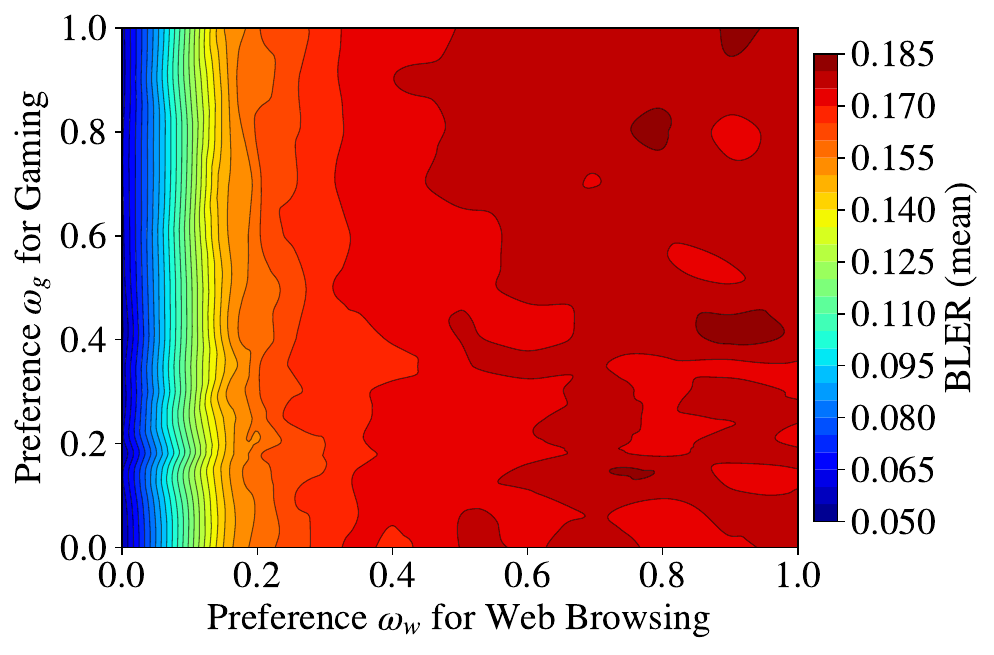}
        \caption{Web browsing: Mean \ac{BLER}}
        \label{subfig:LowLoad_f}
    \end{subfigure}

    % Row 4
    \begin{subfigure}[t]{0.425\textwidth}
        \centering
        \includegraphics[width=\linewidth]{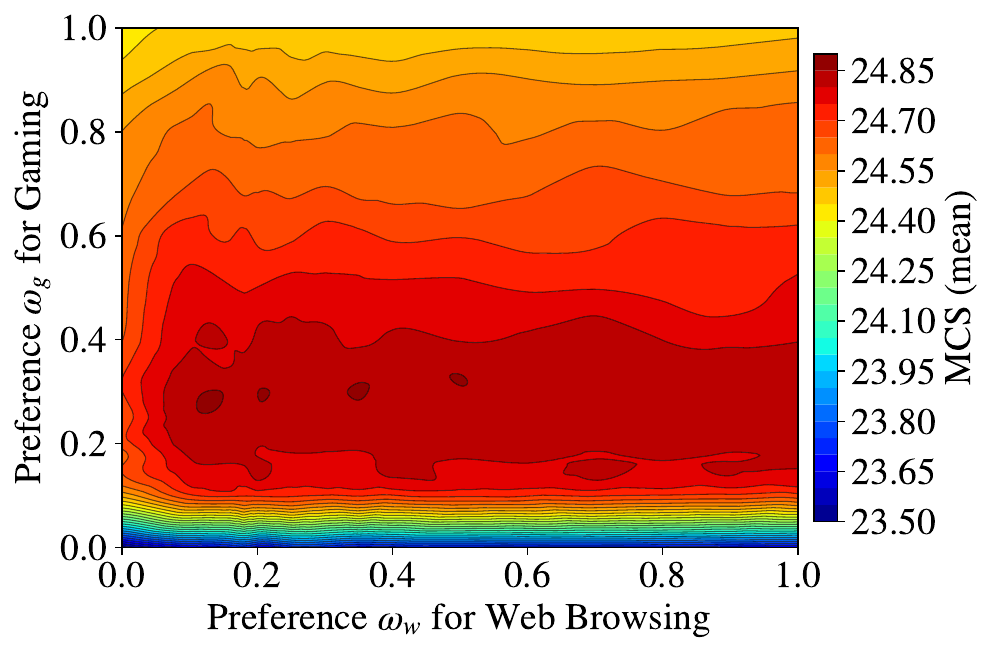}
        \caption{Real-time gaming: \ac{MCS} selection}
        \label{subfig:LowLoad_g}
    \end{subfigure}
    \hfill
    \begin{subfigure}[t]{0.425\textwidth}
        \centering
        \includegraphics[width=\linewidth]{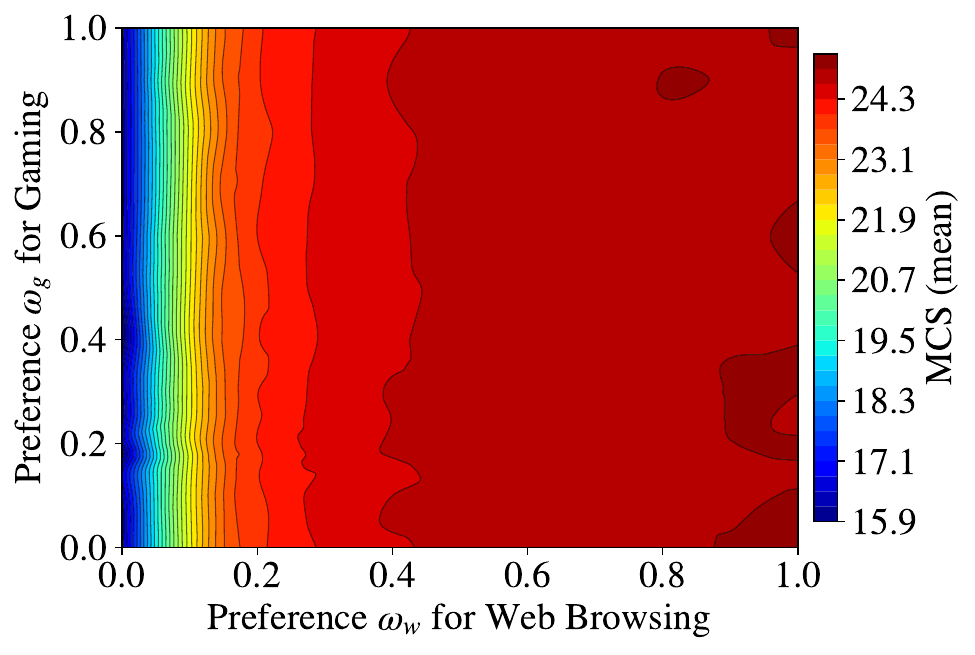}
        \caption{Web browsing: \ac{MCS} selection}
        \label{subfig:LowLoad_h}
    \end{subfigure}

    \caption{\textbf{Impact of user preference weights on the performance of real-time gaming and browsing users in low network load conditions.}
    Each subplot shows a distinct \ac{QoS} metric for real-time gaming users (left column) and for web browsing users (right column) under varying preference weights ($\omega_g$, $\omega_w$) reflecting resource allocation priorities for the two connectivity services. Metrics include: (a)-(b) mean user throughput, (c)-(d) mean latency (defined according to the service), (e)-(f) mean \ac{BLER}. Furthermore, (g)-(h) show the action (mean \ac{MCS}) distribution under ($\omega_g$, $\omega_w$).}
    \label{fig:LowLoad_KPIs}
\end{figure}

% ----------------------------------------------------------------------------
% Medium load
% ----------------------------------------------------------------------------

\begin{figure}[p]
    \centering
    
    % Row 1
    \begin{subfigure}[t]{0.425\textwidth}
        \centering
        \includegraphics[width=\linewidth]{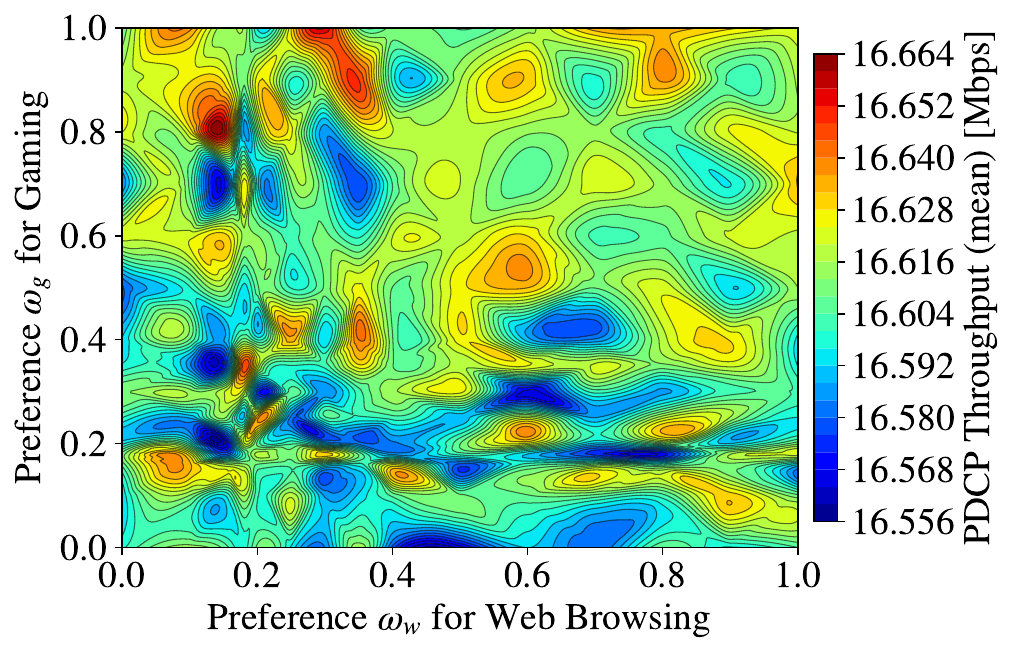}
        \caption{Real-time gaming: Mean PDCP Throughput}
        \label{subfig:MediumLoad_a}
    \end{subfigure}
    \hfill
    \begin{subfigure}[t]{0.425\textwidth}
        \centering
        \includegraphics[width=\linewidth]{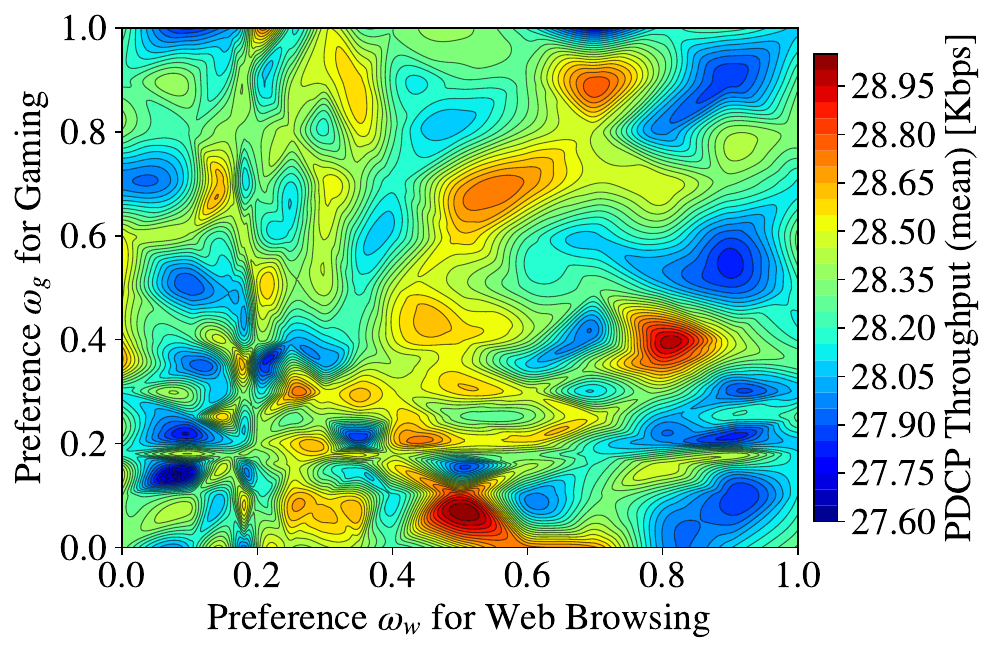}
        \caption{Web browsing: Mean PDCP Throughput}
        \label{subfig:MediumLoad_b}
    \end{subfigure}

    % Row 2
    \begin{subfigure}[t]{0.425\textwidth}
        \centering
        \includegraphics[width=\linewidth]{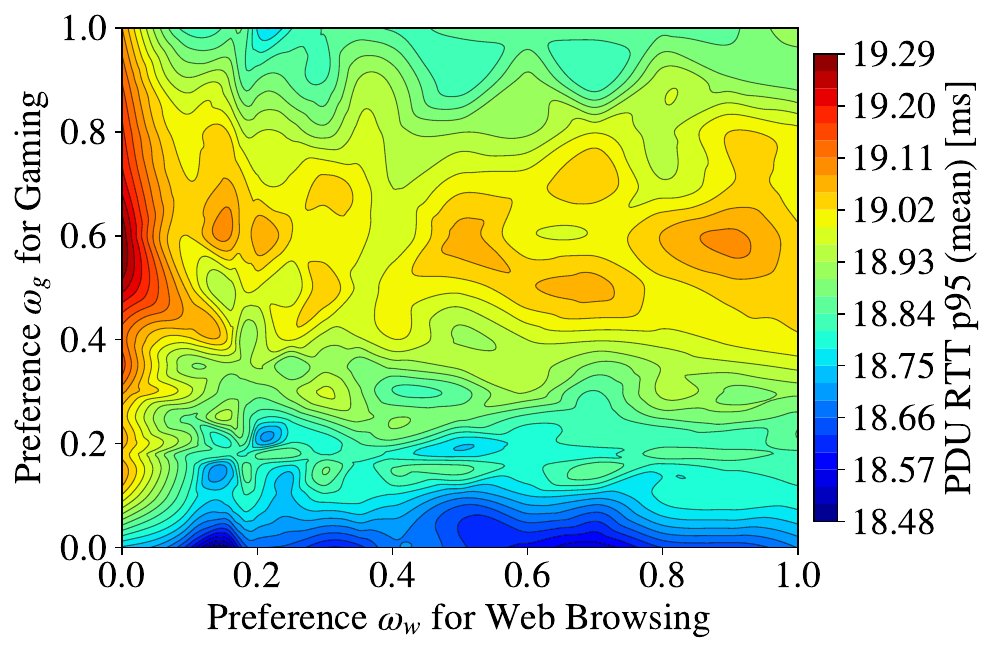}
        \caption{Real-time gaming: 95th Percentile PDU RTT}
        \label{subfig:MediumLoad_c}
    \end{subfigure}
    \hfill
    \begin{subfigure}[t]{0.425\textwidth}
        \centering
        \includegraphics[width=\linewidth]{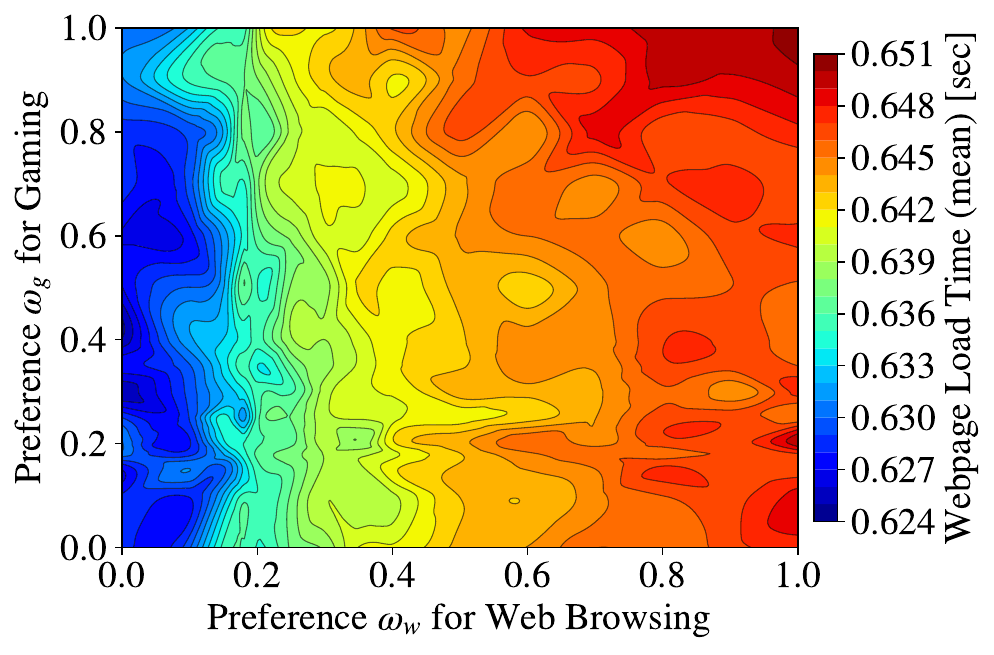}
        \caption{Web Browsing: Loading time}
        \label{subfig:MediumLoad_d}
    \end{subfigure}

    % Row 3
    \begin{subfigure}[t]{0.425\textwidth}
        \centering
        \includegraphics[width=\linewidth]{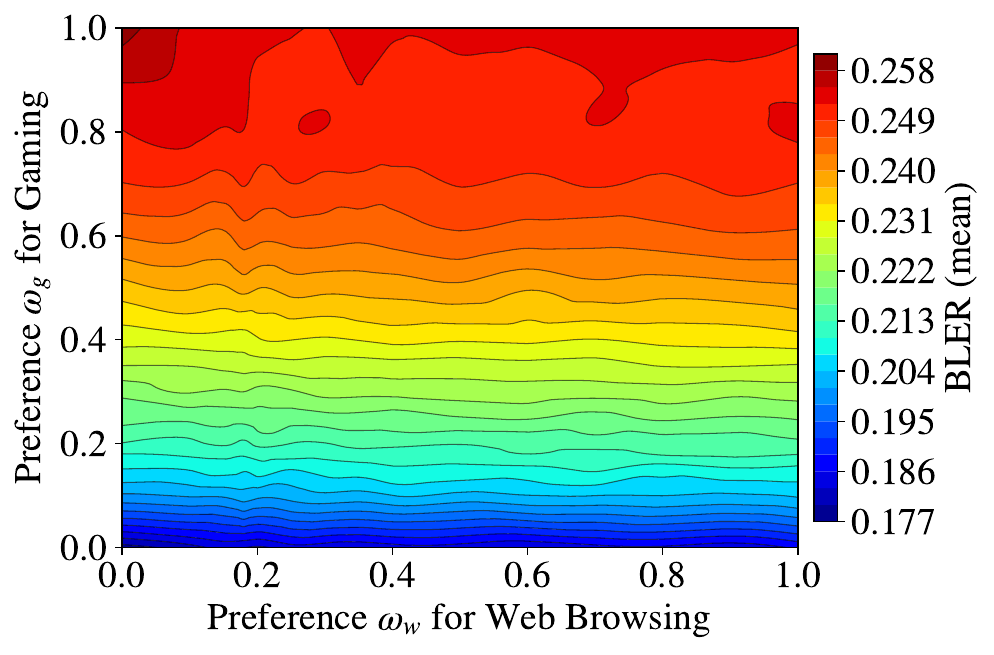}
        \caption{Real-time gaming: Mean \ac{BLER}.}
        \label{subfig:MediumLoad_e}
    \end{subfigure}
    \hfill
    \begin{subfigure}[t]{0.425\textwidth}
        \centering
        \includegraphics[width=\linewidth]{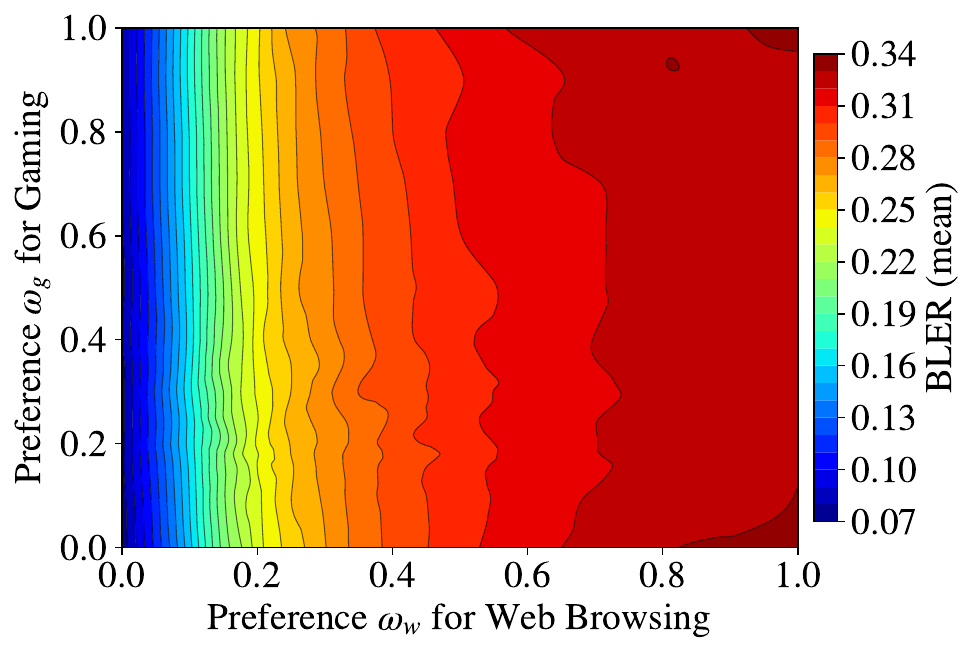}
        \caption{Web browsing: Mean \ac{BLER}}
        \label{subfig:MediumLoad_f}
    \end{subfigure}

    % Row 4
    \begin{subfigure}[t]{0.425\textwidth}
        \centering
        \includegraphics[width=\linewidth]{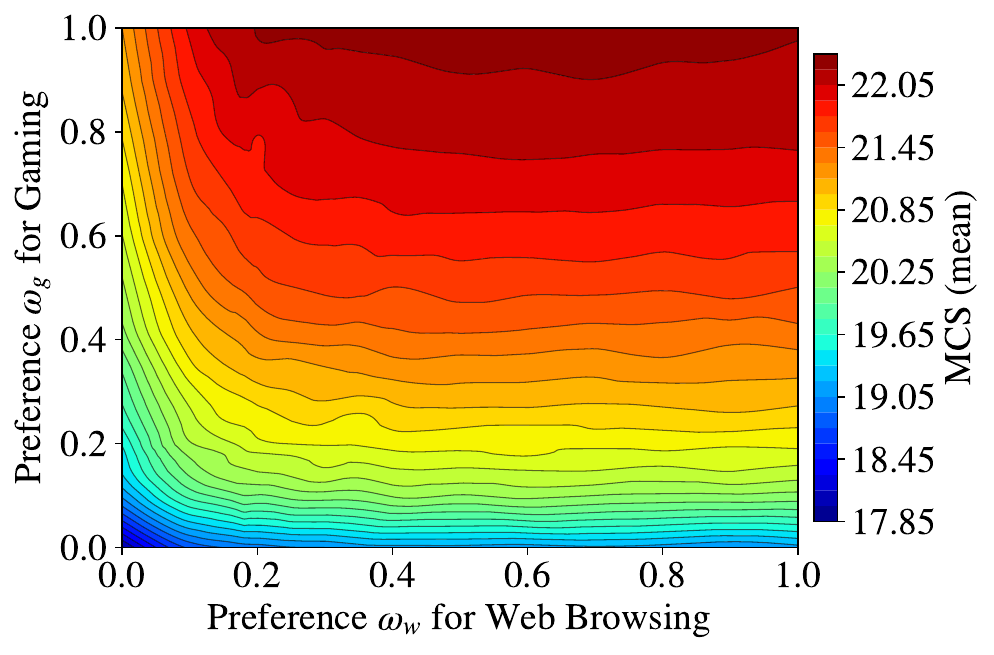}
        \caption{Real-time gaming: \ac{MCS} selection}
        \label{subfig:MediumLoad_g}
    \end{subfigure}
    \hfill
    \begin{subfigure}[t]{0.425\textwidth}
        \centering
        \includegraphics[width=\linewidth]{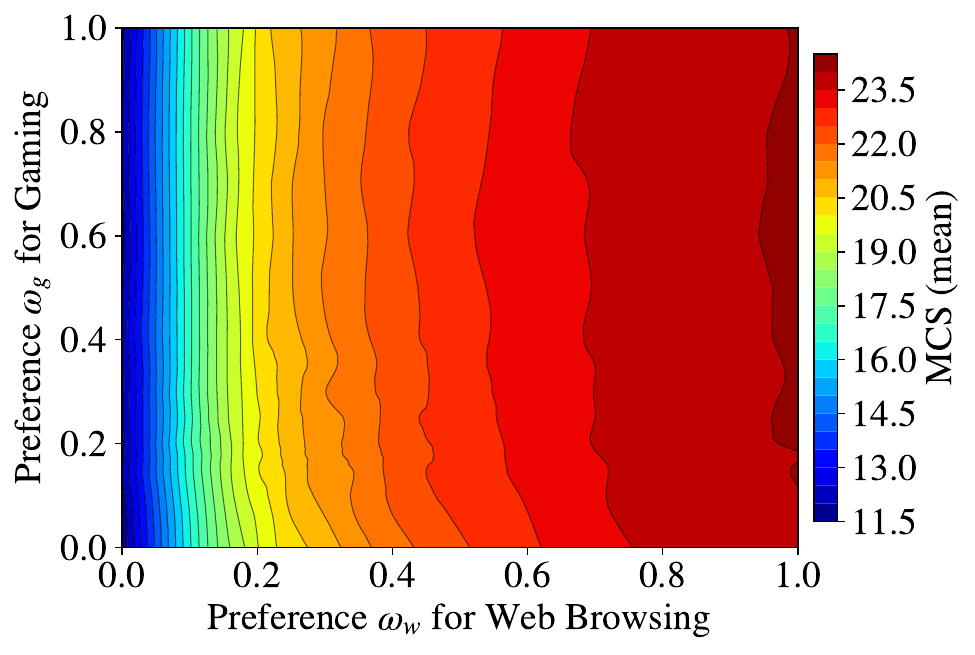}
        \caption{Web browsing: \ac{MCS} selection}
        \label{subfig:MediumLoad_h}
    \end{subfigure}
    \caption{\textbf{Impact of user preference weights on the performance of real-time gaming and browsing users in medium network load conditions.}
    Each subplot shows a distinct \ac{QoS} metric for real-time gaming users (left column) and for web browsing users (right column) under varying preference weights ($\omega_g$, $\omega_w$) reflecting resource allocation priorities for the two connectivity services. Metrics include: (a)-(b) mean user throughput, (c)-(d) mean latency (defined according to the service), (e)-(f) mean \ac{BLER}. Furthermore, (g)-(h) show the action (mean \ac{MCS}) distribution under ($\omega_g$, $\omega_w$).}
    \label{fig:MediumLoad_KPIs}
\end{figure}

% ----------------------------------------------------------------------------
% High load
% ----------------------------------------------------------------------------

\begin{figure}[p]
    \centering
    
    % Row 1
    \begin{subfigure}[t]{0.425\textwidth}
        \centering
        \includegraphics[width=\linewidth]{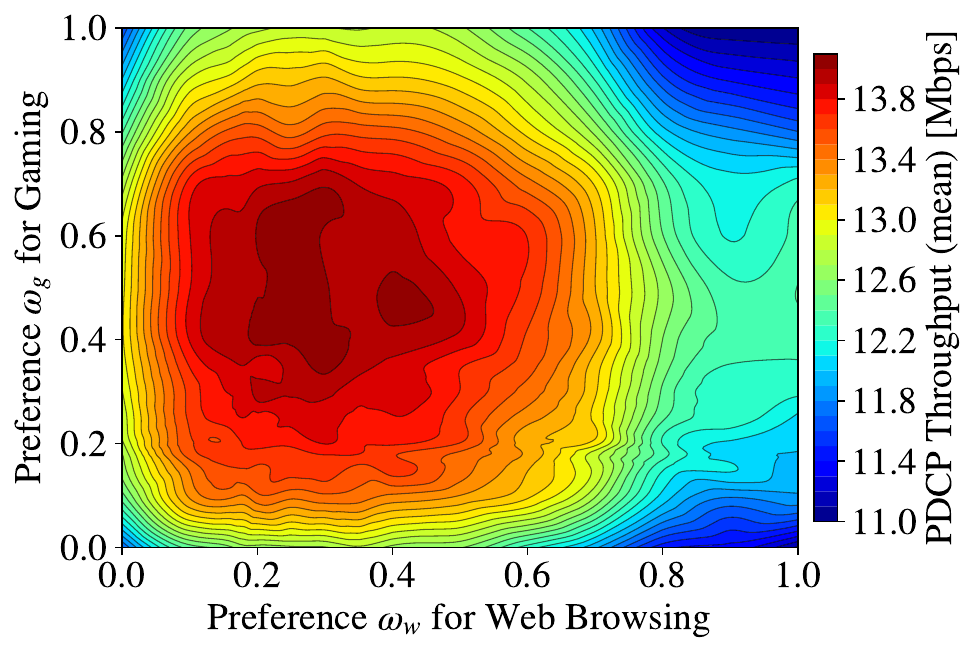}
        \caption{Real-time gaming: Mean PDCP Throughput}
        \label{subfig:HighLoad_a}
    \end{subfigure}
    \hfill
    \begin{subfigure}[t]{0.425\textwidth}
        \centering
        \includegraphics[width=\linewidth]{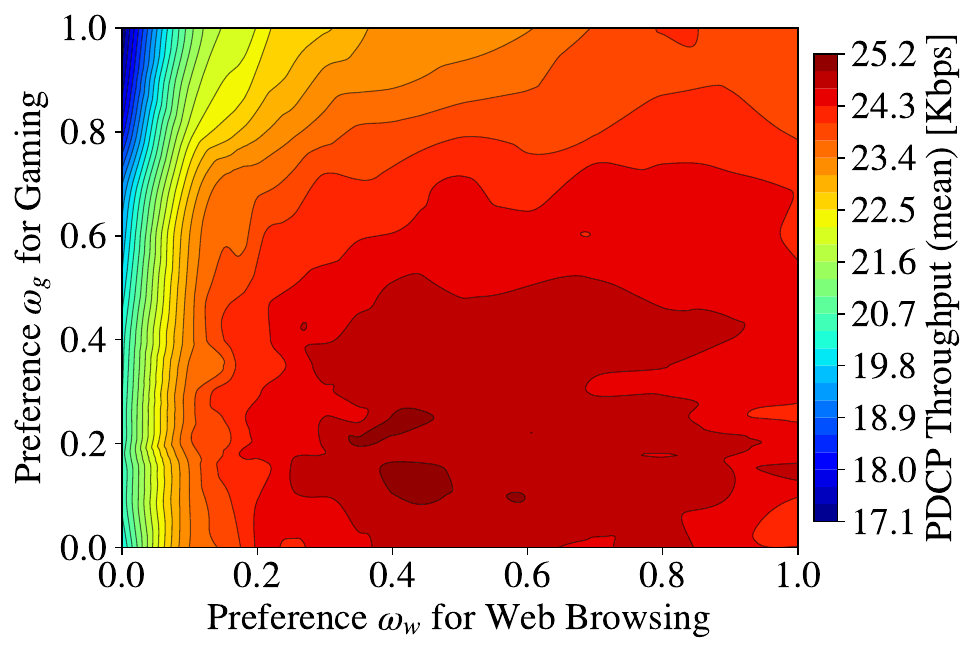}
        \caption{Web browsing: Mean PDCP Throughput}
        \label{subfig:HighLoad_b}
    \end{subfigure}

    % Row 2
    \begin{subfigure}[t]{0.425\textwidth}
        \centering
        \includegraphics[width=\linewidth]{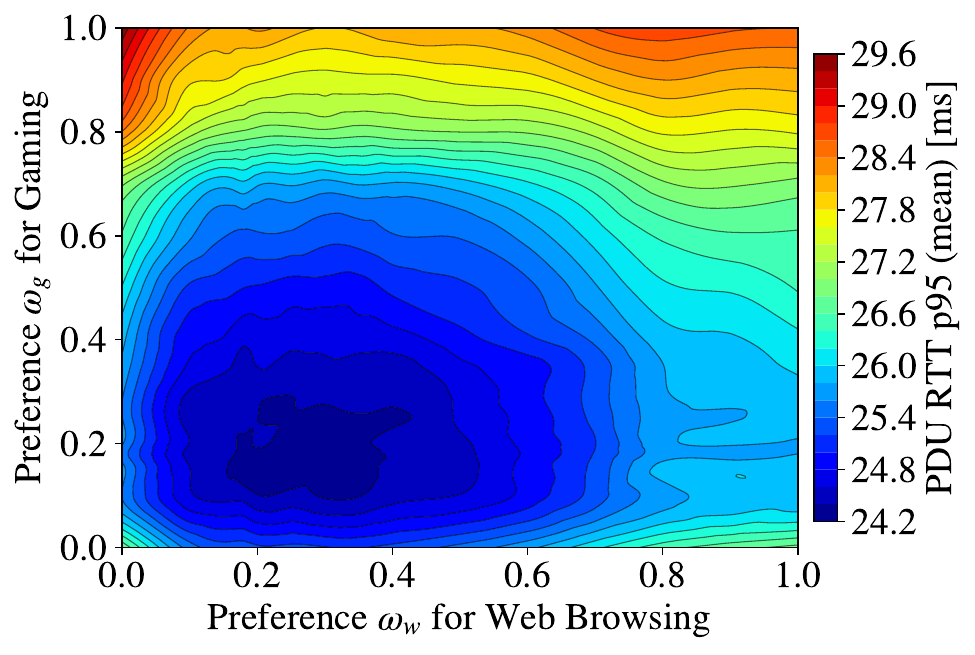}
        \caption{Real-time gaming: 95th Percentile PDU RTT}
        \label{subfig:HighLoad_c}
    \end{subfigure}
    \hfill
    \begin{subfigure}[t]{0.425\textwidth}
        \centering
        \includegraphics[width=\linewidth]{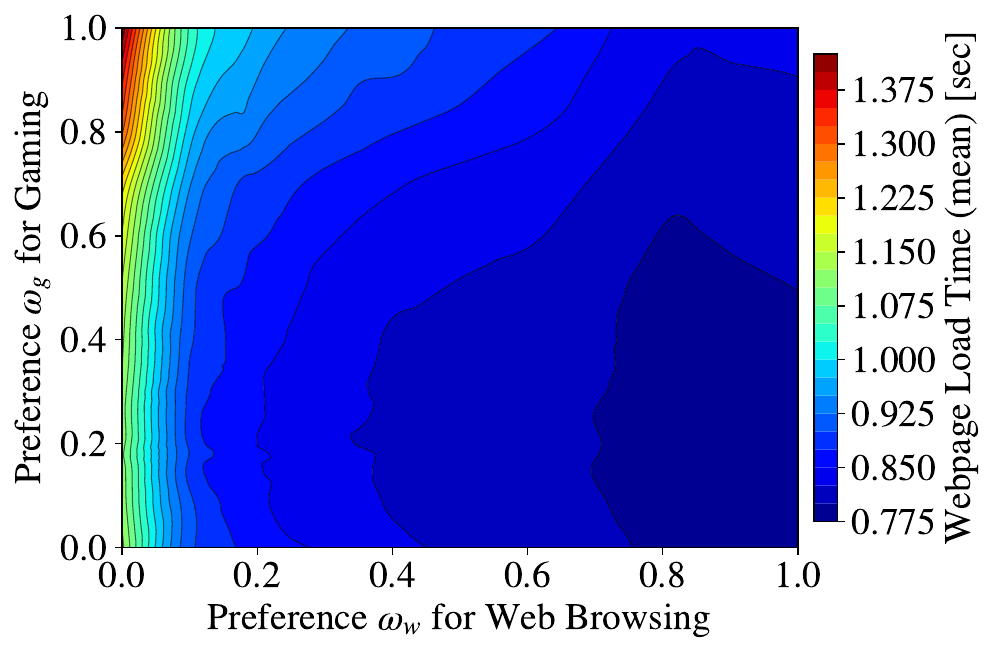}
        \caption{Web Browsing: Loading time}
        \label{subfig:HighLoad_d}
    \end{subfigure}

    % Row 3
    \begin{subfigure}[t]{0.425\textwidth}
        \centering
        \includegraphics[width=\linewidth]{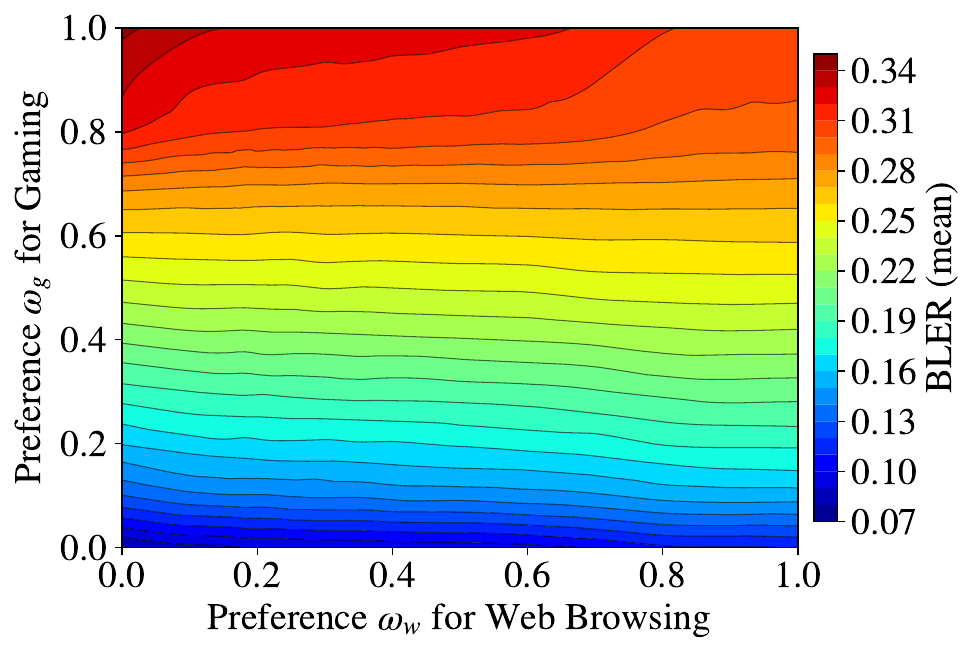}
        \caption{Real-time gaming: Mean \ac{BLER}.}
        \label{subfig:HighLoad_e}
    \end{subfigure}
    \hfill
    \begin{subfigure}[t]{0.425\textwidth}
        \centering
        \includegraphics[width=\linewidth]{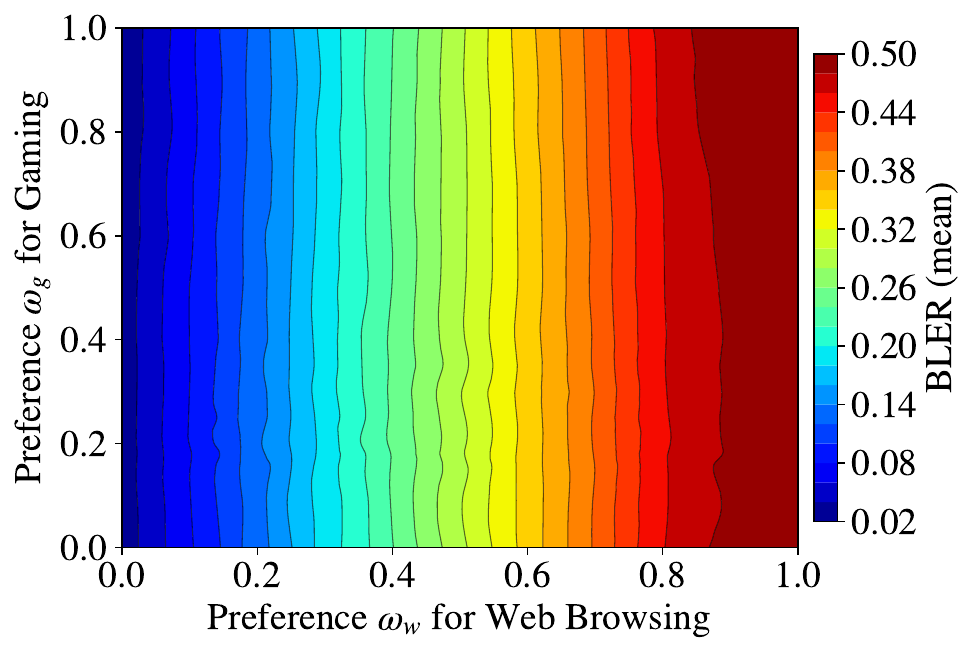}
        \caption{Web browsing: Mean \ac{BLER}}
        \label{subfig:HighLoad_f}
    \end{subfigure}

    % Row 4
    \begin{subfigure}[t]{0.425\textwidth}
        \centering
        \includegraphics[width=\linewidth]{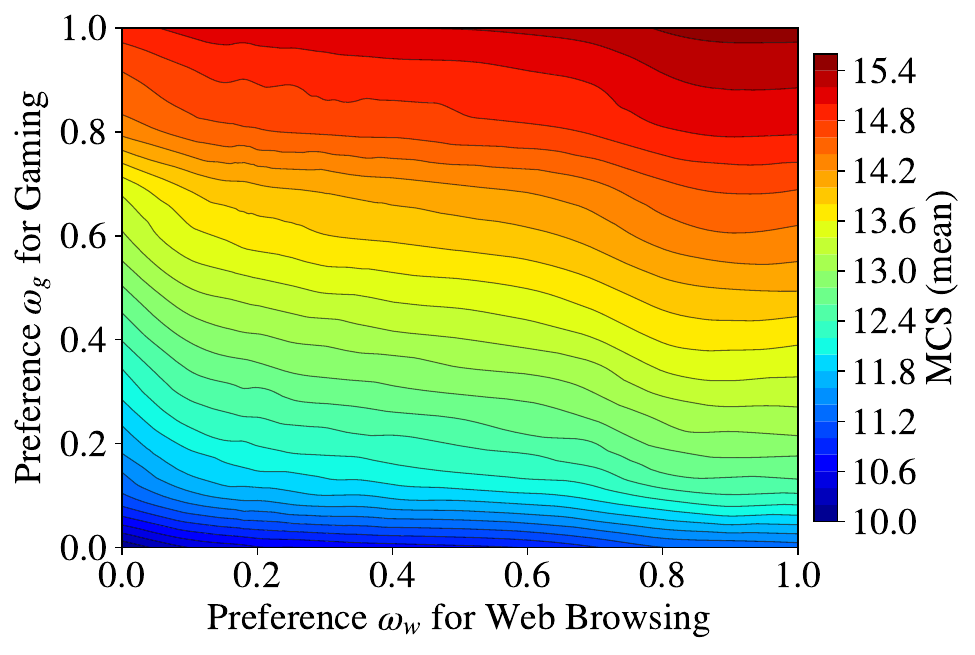}
        \caption{Real-time gaming: \ac{MCS} selection}
        \label{subfig:HighLoad_g}
    \end{subfigure}
    \hfill
    \begin{subfigure}[t]{0.425\textwidth}
        \centering
        \includegraphics[width=\linewidth]{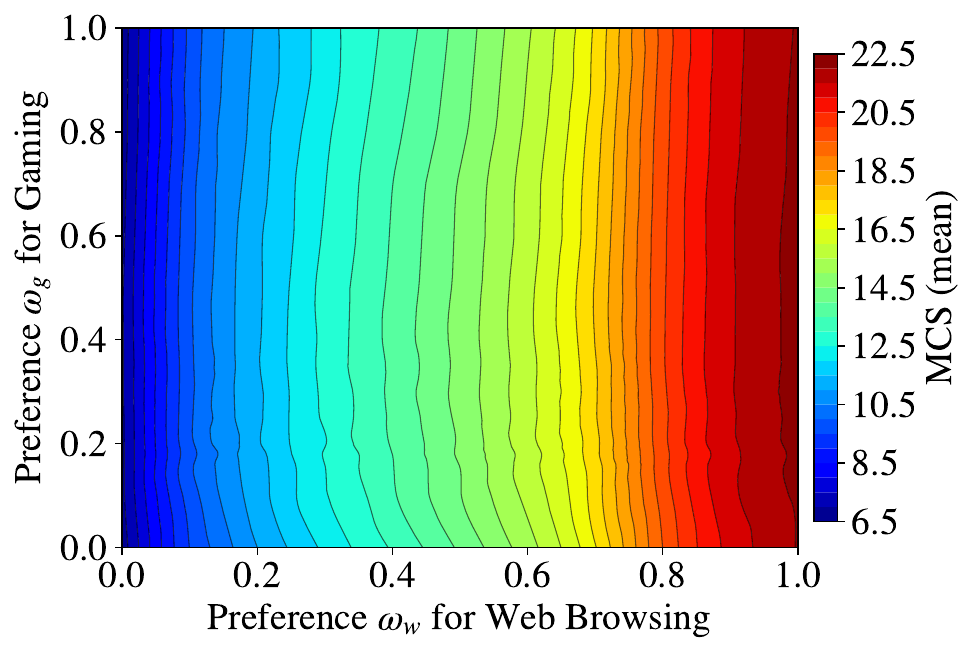}
        \caption{Web browsing: \ac{MCS} selection}
        \label{subfig:HighLoad_h}
    \end{subfigure}

    \caption{\textbf{Impact of user preference weights on the performance of real-time gaming and browsing users in high network load conditions.}
    Each subplot shows a distinct \ac{QoS} metric for real-time gaming users (left column) and for web browsing users (right column) under varying preference weights ($\omega_g$, $\omega_w$) reflecting resource allocation priorities for the two connectivity services. Metrics include: (a)-(b) mean user throughput, (c)-(d) mean latency (defined according to the service), (e)-(f) mean \ac{BLER}. Furthermore, (g)-(h) show the action (mean \ac{MCS}) distribution under ($\omega_g$, $\omega_w$).}
    \label{fig:HighLoad_KPIs}
\end{figure}

% ----------------------------------------------------------------------------
% Very high load
% ----------------------------------------------------------------------------
\begin{figure}[p]
    \centering
    
    % Row 1
    \begin{subfigure}[t]{0.425\textwidth}
        \centering
        \includegraphics[width=\linewidth]{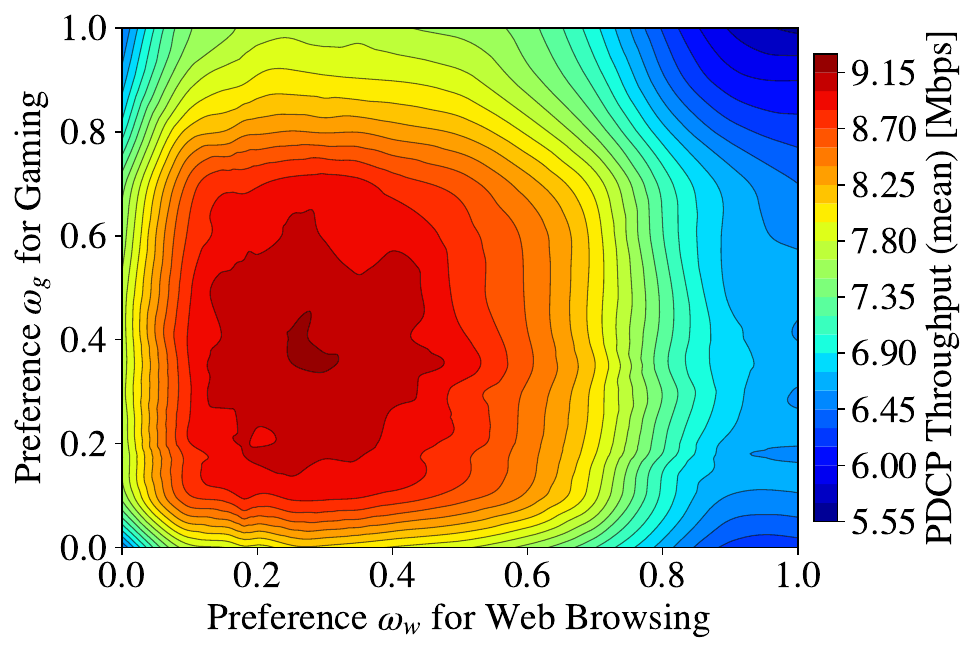}
        \caption{Real-time gaming: Mean PDCP Throughput}
        \label{subfig:VeryHighLoad_a}
    \end{subfigure}
    \hfill
    \begin{subfigure}[t]{0.425\textwidth}
        \centering
        \includegraphics[width=\linewidth]{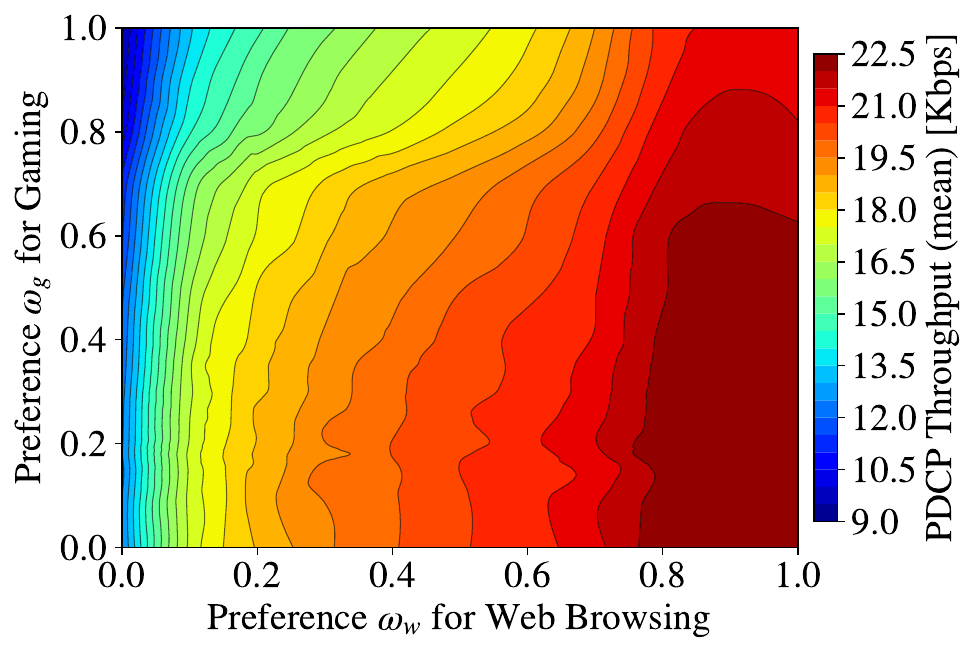}
        \caption{Web browsing: Mean PDCP Throughput}
        \label{subfig:VeryHighLoad_b}
    \end{subfigure}

    % Row 2
    \begin{subfigure}[t]{0.425\textwidth}
        \centering
        \includegraphics[width=\linewidth]{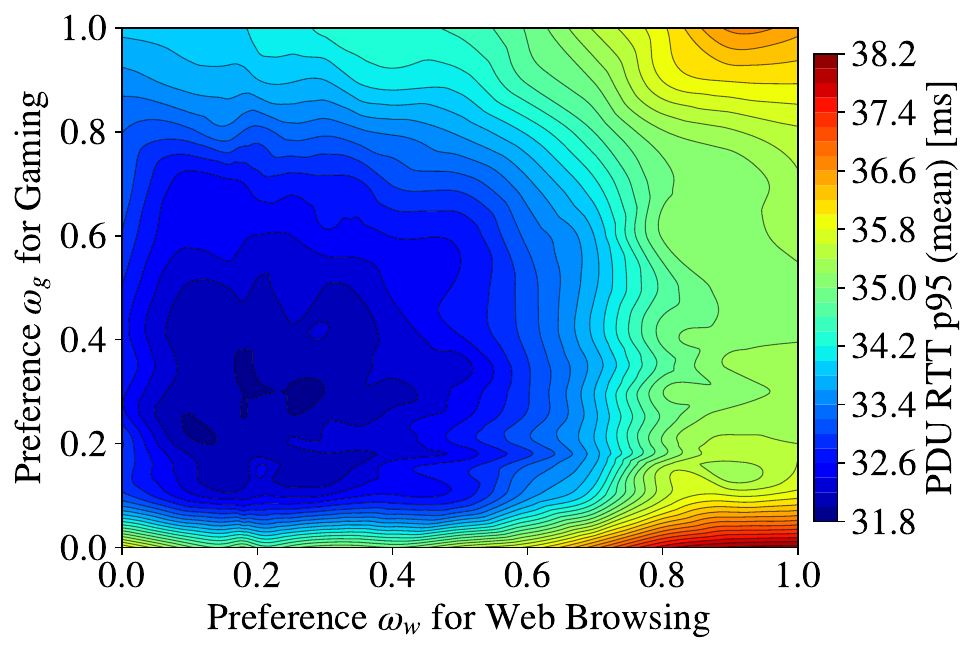}
        \caption{Real-time gaming: 95th Percentile PDU RTT}
        \label{subfig:VeryHighLoad_c}
    \end{subfigure}
    \hfill
    \begin{subfigure}[t]{0.425\textwidth}
        \centering
        \includegraphics[width=\linewidth]{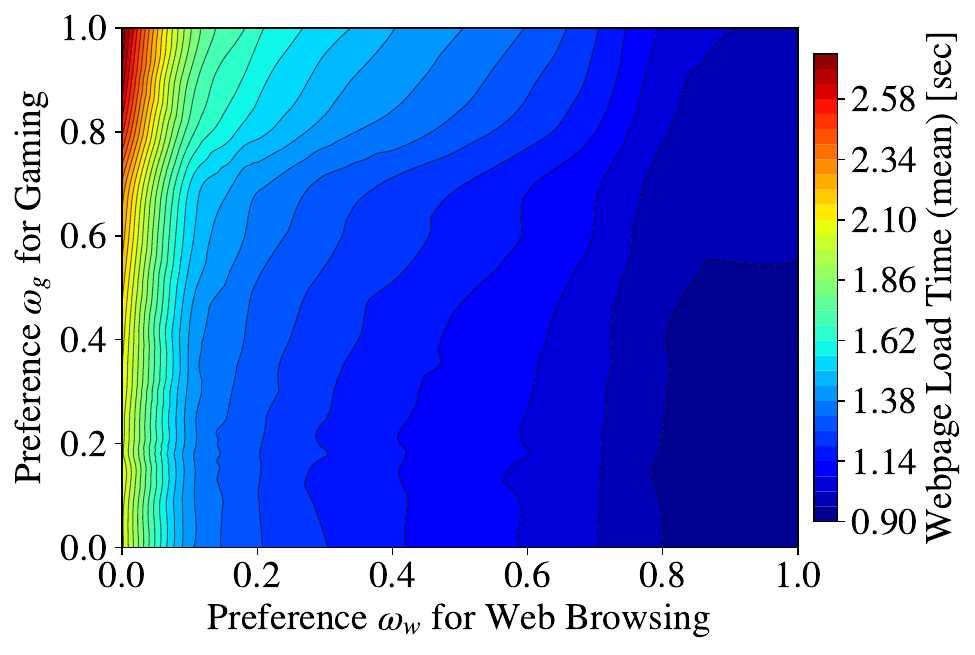}
        \caption{Web Browsing: Loading time}
        \label{subfig:VeryHighLoad_d}
    \end{subfigure}

    % Row 3
    \begin{subfigure}[t]{0.425\textwidth}
        \centering
        \includegraphics[width=\linewidth]{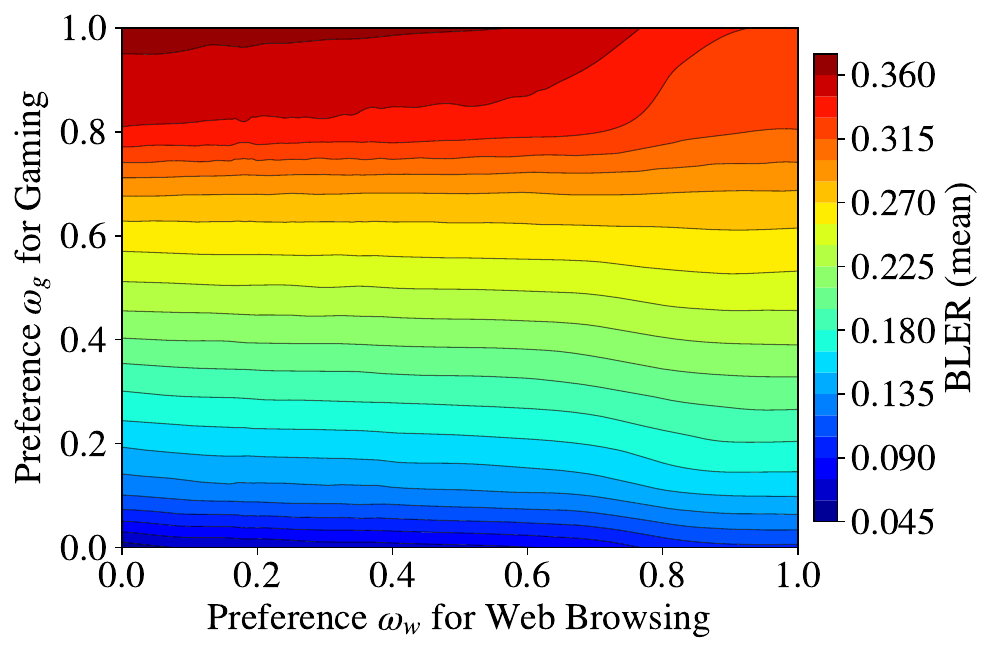}
        \caption{Real-time gaming: Mean \ac{BLER}.}
        \label{subfig:VeryHighLoad_e}
    \end{subfigure}
    \hfill
    \begin{subfigure}[t]{0.425\textwidth}
        \centering
        \includegraphics[width=\linewidth]{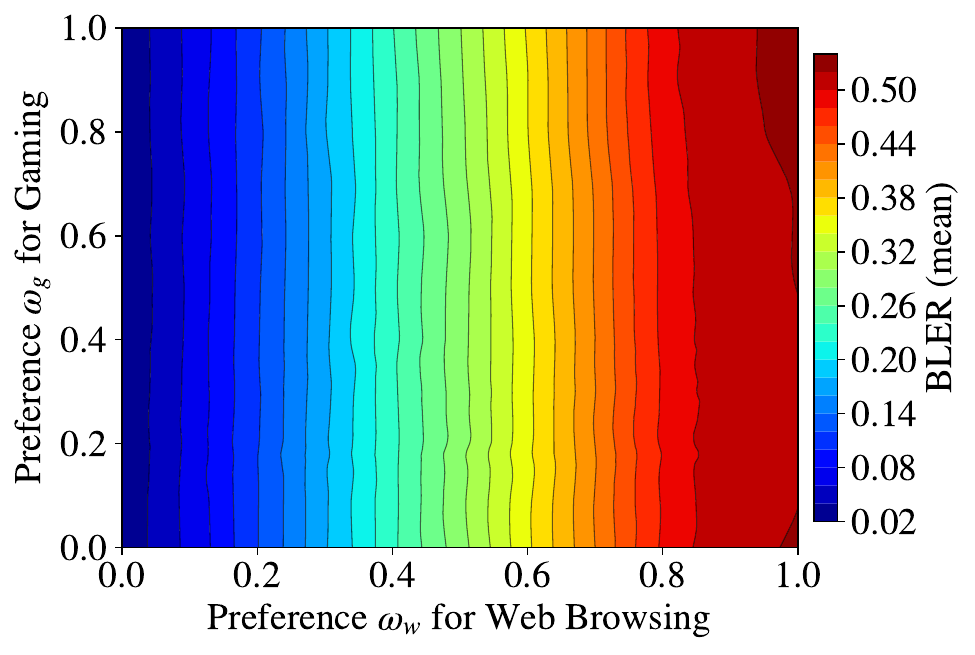}
        \caption{Web browsing: Mean \ac{BLER}}
        \label{subfig:VeryHighLoad_f}
    \end{subfigure}

    % Row 4
    \begin{subfigure}[t]{0.425\textwidth}
        \centering
        \includegraphics[width=\linewidth]{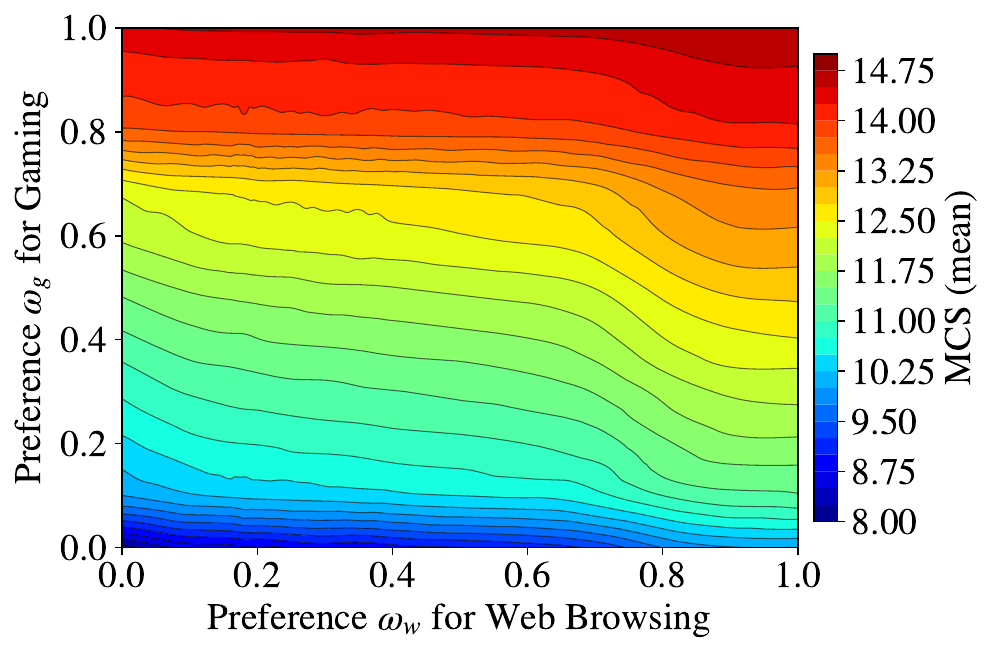}
        \caption{Real-time gaming: \ac{MCS} selection}
        \label{subfig:VeryHighLoad_g}
    \end{subfigure}
    \hfill
    \begin{subfigure}[t]{0.425\textwidth}
        \centering
        \includegraphics[width=\linewidth]{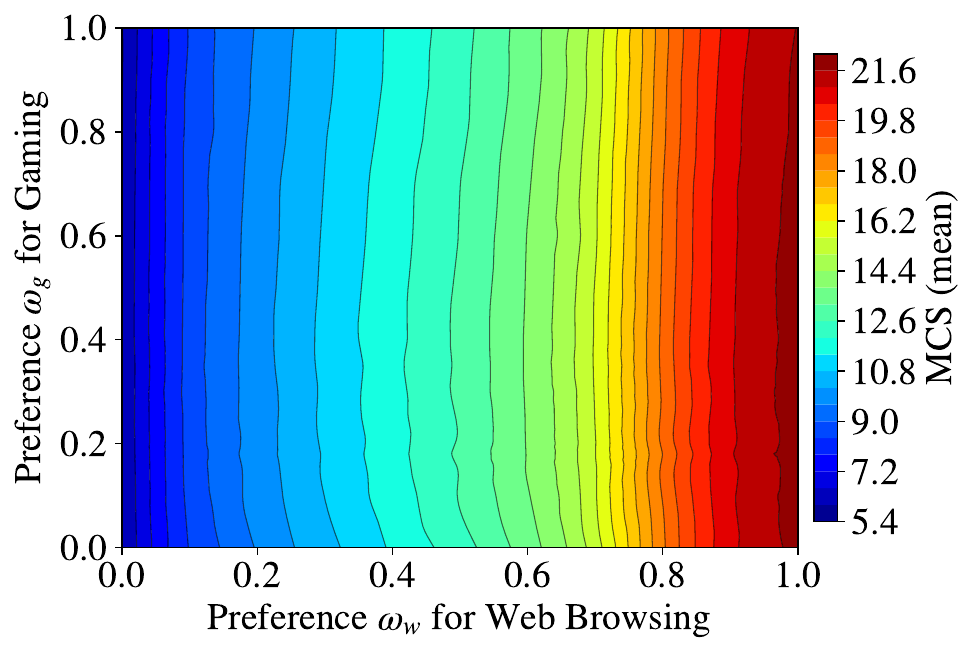}
        \caption{Web browsing: \ac{MCS} selection}
        \label{subfig:VeryHighLoad_h}
    \end{subfigure}

    \caption{\textbf{Impact of user preference weights on the performance of real-time gaming and browsing users in very high network load conditions.}
    Each subplot shows a distinct \ac{QoS} metric for real-time gaming users (left column) and for web browsing users (right column) under varying preference weights ($\omega_g$, $\omega_w$) reflecting resource allocation priorities for the two connectivity services. Metrics include: (a)-(b) mean user throughput, (c)-(d) mean latency (defined according to the service), (e)-(f) mean \ac{BLER}. Furthermore, (g)-(h) show the action (mean \ac{MCS}) distribution under ($\omega_g$, $\omega_w$).}
    \label{fig:VeryHighLoad_KPIs}
\end{figure}

\subsection{Online Preference Optimization} \label{appendix:F4_BO_preference_steering}

\Cref{appendix:Bayesian_optim} presented the Preference-Aligned eXploration Bayesian Optimization (PAX-BO) algorithm for the optimizer agent (cf.~\Cref{alg:paxbo}). A key design feature of PAX-BO is the integration of trust regions to stabilize the selection of the preference values $\omega$ for the downstream controller agent. 

\Cref{fig:BO_algorithms} compares the performance of PAX-BO considering an intent definition that requires to maximize the aggregate system throughput while keeping the \ac{BLER} of each user below $10\%$ -- which corresponds to the typical configuration of link adaptation in 4G/5G \ac{RAN} systems.~\Cref{fig:BO_wo_trust_region} shows the performance of PAX-BO without trust region, whereas in~\Cref{fig:BO_w_trust_region} we enabled the trust region. The top panel of both figures shows the aggregate system throughput $f(\omega)$, the middle panel shows \ac{BLER} constraint evaluations $g(\omega)$ with threshold $g(\omega) \leq 0.1$, and the bottom panel shows the evolution of the preference parameter $\omega$. Compared to~\Cref{fig:BO_wo_trust_region}, the trust region stabilizes the optimization, reducing constraint violations and leading to smoother preference adaptation and improved system throughput (with smoother degradations).

\begin{figure*}[t!]
    \centering
    \begin{subfigure}{0.49\textwidth}
        \centering
        \includegraphics[width=\linewidth]{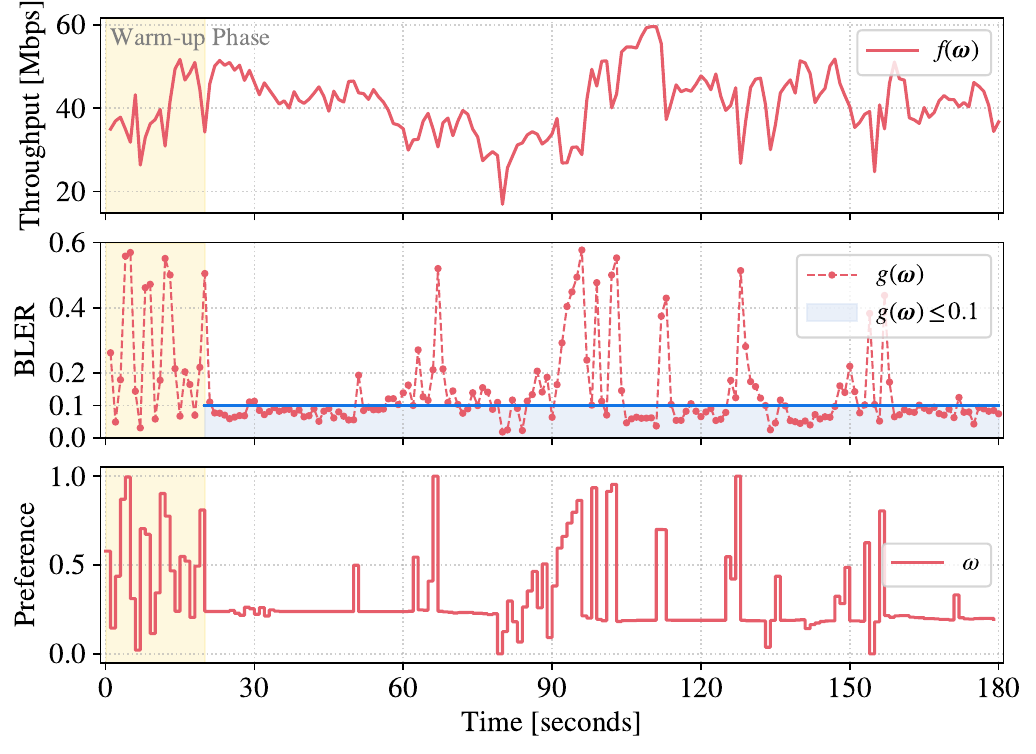}
        \caption{Performance of \ac{BO} without trust region}
        \label{fig:BO_wo_trust_region}
    \end{subfigure}
    \hfill
    \begin{subfigure}{0.49\textwidth}
        \centering
        \includegraphics[width=\linewidth]{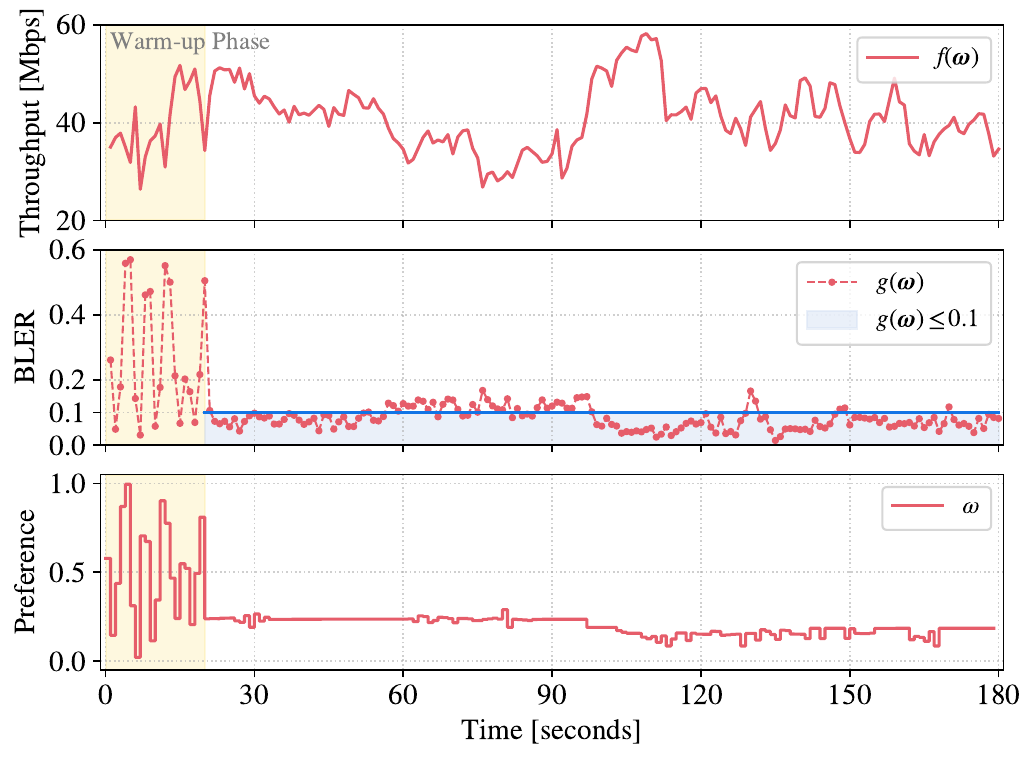}
        \caption{Performance of \ac{BO} with trust region}
        \label{fig:BO_w_trust_region}
    \end{subfigure}
    \caption{Performance comparison of optimizer agent when using the PAX-BO algorithm (a) without trust region and (b) with trust region enabled.}
    \label{fig:BO_algorithms}
\end{figure*}

% ===================================================================
% SECTION 17: Compute Resources and Hyperparameters
% ===================================================================
\clearpage
\section{Compute Resources and Hyperparameters} 
\label{appendix:hyperparams}

All MORL training runs were performed on a high-performance computing (HPC) cluster. The main training node was equipped with an NVIDIA A100-PCIE-40GB GPU and 48 CPU cores, which hosted the learner, actors, and replay memory. The replay buffer was partitioned into four independently prioritized shards, each pinned to a dedicated CPU core to support parallelized access. Co-locating the learner, actors, and replay shards on the same node minimized intra-node communication latency.

We used 40 actors for each experiment, and each actor launched two threads that interacted with 14 simulator instances in parallel. The simulators were distributed across multiple compute nodes, totaling 560 CPU cores. Each simulator ran in a separate process and communicated with its assigned actor via ZeroMQ, enabling scalable multi-node environment interaction.

Cluster job scheduling and resource management were handled by the Load Sharing Facility (LSF), which managed job queueing, monitoring, and node allocation according to the experiments’ resource specifications.

\begin{table}[htbp]
\centering
\caption{Hyperparameters Used for Adaptor in Interpreter.}
\label{tab:adaptor-hyperparams}
\begin{tabular}{ll}
\toprule
\multicolumn{2}{c}{\textbf{Monitor}} \\
\midrule
Window size (W)       & 12 \\
Alert-on ratio ($\rho_\text{on}$)        & 0.55 \\
Alert-off ratio ($\rho_\text{off}$)      & 0.45 \\
\midrule
\multicolumn{2}{c}{\textbf{Adjust}} \\
\midrule
Step (Mbps)       & 0.08 \\
Lifetime (Mbps)   & 0.40 \\
Floor             & 5.00 \\
Ceiling           & 9.00 \\
Cooldown steps    & 2 \\
Deadband          & 0.05 \\
Gain (up)         & 1.0 \\
Gain (down)       & 1.0 \\
\bottomrule
\end{tabular}
\end{table}

\begin{table}[htbp]
\centering
\caption{Hyperparameters Used for Supervised Fine-Tuning of the Intent-to-\ac{OTM} Translator.}
\label{tab:sft-hyperparams}
\renewcommand{\arraystretch}{1.2}
\begin{tabular}{ll}
\toprule
\textbf{Component} & \textbf{Setting} \\
\midrule
Base model & Qwen2.5-7B-Instruct \\
Parameter-efficient tuning & LoRA (rank 64, $\alpha$\,=\,16, dropout\,=\,0.05) \\
LoRA target modules & \texttt{q\_proj}, \texttt{k\_proj}, \texttt{v\_proj}, \texttt{o\_proj} \\
Precision & bfloat16 \\
Epochs & 2 \\
Batch size (per device) & 2 \\
Gradient accumulation steps & 8 \\
Optimizer & AdamW (Torch fused) \\
Learning rate & $2\times10^{-4}$ \\
Scheduler & Cosine decay \\
Warmup ratio & 0.03 \\
Weight decay & 0.01 \\
Gradient clipping & 1.0 \\
Gradient checkpointing & Enabled \\
Max sequence format & Qwen chat template (intent → OTM pair) \\
Evaluation frequency & Every 200 steps \\
Checkpoint frequency & Every 200 steps (max 5 checkpoints) \\
\bottomrule
\end{tabular}
\end{table}

\begin{table}[t]
\centering
\caption{Bayesian Optimization Hyperparameters Used in the Actor.}
\label{tab:bo_hyperparams}
\renewcommand{\arraystretch}{1.2}
\begin{tabular}{ll}
\toprule
\textbf{Hyperparameter} & \textbf{Value / Description} \\
\midrule
Acquisition function             & qLogEI \\
MC samples for acquisition       & 256 \\
Raw samples for optimization     & 512 \\
Number of restarts               & 10 \\
Batch size ($q$)                 & 1 \\
GP refit frequency               & Every 1 observation \\
Training window size             & 60 most recent samples \\
Input scaling (normalize)        & Yes (Normalize transform) \\
Output scaling                   & Standardize outcomes \\
\midrule
Trust region initial radius      & 0.15 \\
Minimum trust region radius      & 0.05 \\
Trust region shrink factor       & 0.7 \\
Infeasible patience              & 2 consecutive infeasible samples \\
No-improvement patience          & 5 evaluations \\
\midrule
Initial preference samples        & 20 Sobol samples (fixed list) \\
Preference domain                & $[0, 1]^2$ \\
\bottomrule
\end{tabular}
\end{table}

\begin{table}[htbp]
\centering
\caption{Hyperparameters used for Multi-Objective Reinforcement Learning (MORL).}
\label{tab:hyperparams-morl}
\begin{tabular}{l l}
\toprule
\multicolumn{2}{c}{\textbf{Learner}} \\
\midrule
Optimizer & Adam~\cite{KiB:17} \\
Learning rate & $5 \times 10^{-5}$ \\
$\beta_{1}$ (Adam momentum term) & 0.9 \\
$\beta_{2}$ (Adam second moment term) & 0.999 \\
$\epsilon$ (Adam numerical stability) & $1.5 \times 10^{-4}$  \\
Weight decay & $\nicefrac{0.02}{512}$ \\
Gradient norm & 20 \\
Target update period & Every 1 gradient updates \\
Target update policy & Soft \\
Target update factor & $1.0 \times 10^{-3}$ \\
Model update interval & Every 200 gradient updates \\
Prefetched batches & 16 \\
Batch size (experience) & 512 \\
Batch size (preference) & 128 \\
Warm-up phase & 50,000 samples \\
Loss function & MSE \\
\midrule
\multicolumn{2}{c}{\textbf{Actor}} \\
\midrule
Number of actors & 40 \\
Local buffer capacity & 2,500 \\
Discount factor ($\gamma$) & 1.0 \\
$\epsilon$-greedy (linear decay) & 0.8 $\rightarrow$ 0.05 \\
Timesteps & 5,500,000 \\
\midrule
\multicolumn{2}{c}{\textbf{Replay Memory}} \\
\midrule
Number of shards & 4 \\
Capacity of each shard & 4,000,000 \\
Prioritization exponent ($\alpha$) & 0.6 \\
Importance sampling exponent ($\beta$) & 0.4 \\
\midrule
\multicolumn{2}{c}{\textbf{Model}} \\
\midrule
Activation function & ReLU \\
Number of blocks & 6 \\
Number of layers per block & 2 \\
units per layer & 128 \\
Dropout probability & 0.1 \\
Layer normalization & True \\
\bottomrule
\end{tabular}
\end{table}

\end{document}